\title{A Nonstochastic Control Approach to Optimization \\
{\small preliminary version} }
\author{Xinyi Chen \thanks{Princeton University and Google DeepMind} \and Elad Hazan \thanks{Princeton University and Google DeepMind}}
\begin{document}

\maketitle

\begin{abstract}

Selecting the best hyperparameters for a particular optimization instance, such as the learning rate and momentum, is an important but nonconvex problem. As a result, iterative optimization methods such as hypergradient descent lack global optimality guarantees in general. 

We propose an online nonstochastic control methodology for mathematical optimization. First, we formalize the setting of meta-optimization, an online learning formulation of  learning the best optimization algorithm from a class of methods. The meta-optimization problem over gradient-based methods can be framed as a feedback control problem over the choice of hyperparameters, including the learning rate, momentum, and the preconditioner.

Although the original optimal control problem is nonconvex, we show how recent methods from online nonstochastic control using convex relaxations can be used to overcome the challenge of nonconvexity, and obtain regret guarantees against the best offline solution. This guarantees that in meta-optimization, given a sequence of optimization problems, we can learn a method that attains convergence comparable to that of the best optimization method in hindsight from a class of methods\footnote{A conference version of this manuscript, titled "Online Control for Meta-optimization", is accepted to Neural Information Processing Systems (NeurIPS) 2023.}.

\end{abstract}

\newpage
\tableofcontents
\newpage

\section{Introduction.}

The performance of optimization methods, in particular for the application of training deep neural networks, crucially depends on their hyperparameters. \citet{bengio2012practical} notes that the learning rate is the ``single most important parameter" for efficient training. However, it is also notoriously hard to tune without intensive hyperparameter search. 

We formalize the problem of hyperparameter optimization as the task of {\bf meta-optimization}. In this task, the player is given a sequence of optimization problems. Her goal is to solve them as fast as the best algorithm in hindsight from a family of possible methods. This setting generalizes hyperparameter tuning to encompass several attributes, including the learning rate, preconditioner, momemtum, and more. 

The problem of meta-optimization, even for the special case of hyperparameter tuning of the learning rate, is difficult because {\bf optimizing these hyperparameters can be nonconvex}. In this paper, we investigate how to overcome this nonconvexity using recent tools from control theory, namely the online nonstochastic control framework. Our main result is an efficient method for meta-optimization which has provable guarantees: over a sequence of optimization problems, the method approaches the performance of the best algorithm in hindsight from a class of methods.

\ignore{
\subsection{Mathematical optimization and feedback control}

The fields of mathematical optimization and control theory are closely related. Many optimization methods are inspired by natural dynamical systems, such as Polyak's heavy ball method \citep{polyak1964some}.  On the other hand, the analysis of optimization algorithms also has fundamental connections with the mathematics of dynamical systems. Besides the analysis of the heavy ball method, \citep{wang2021modular}, other examples where optimization algorithms are analyzed as dynamical systems include Nesterov momentum for smooth functions \citep{su2014differential,muehlebach2019dynamical}, and more recently a frameworks of analysis using Lyapunov's theory of stability \citep{iqc, wilson2018lyapunov}.


By far the most widely used framework of control in optimization is that of Lyapunov’s stability method, also known as Lyapunov’s direct method. This method (see e.g. \cite{slotine1991applied}) proposes that if the system’s energy dissipates over time, then the system must be stable in some sense.  
For a given optimization algorithm, if there exists a quantity that contracts with the evolution of the algorithm, Lyapunov’s direct method can be applied to show convergence. This technique has been used to analyze many existing optimization algorithms.




However, Lyapunov’s direct method does not provide optimality guarantees for control: prescriptive suggestions for how to drive the system to the target state. Moreover, standard Lyapunov analysis does not take disturbances to the system into account, even though they naturally arise in most physical systems \footnote{An exception is the work of \cite{iqc} which does tollerate some amonut of disturbance with provable guarantees.}.  To overcome these limitations, we consider Lyapunov's second technique, the so-called ``indirect method". Instead of working directly with the nonlinear system, this method studies the behavior of the system around the linearization about the equilibrium point. This method is more amenable to optimal control theory, which can potentially guarantee convergence to the best optimization algorithm.


Still, optimal control has several limitations in this context of learning optimizers. Methods for optimal control, notably LQR theory, assume access to the dynamics a priori, stochastic noise, and are limited to quadratic cost functions. Further, learning the best feedback controller directly is a nonconvex problem.  
We circumvent these difficulties by using new techniques in control theory devised in the context of machine learning. The online nonstochastic control framework bypasses the computational hardness issue prevalent in nonconvex problems by using convex relaxation and improper learning, leading to new regret guarantees for online control. It is applicable for an online setting, where the costs and dynamics are unknown a priori, and for general convex costs. Moreover, this framework provides guarantees even in the presence of adversarial disturbances -- disturbances that do not follow distributional assumptions.  As far as we know, optimal control theory has not been systematically applied to optimization largely due to the limitations above \footnote{see related work section.}. 
}

\subsection{The setting of meta-optimization.}

In meta-optimization, we are given \textbf{a sequence of optimization problems}, called episodes. The goal of the player is not only to minimize the total cost in each episode, but also to compete with a set of available optimization methods. 

Specifically, in each of the $N$ episodes, we have a sequence of $T$ optimization steps that are either deterministic, stochastic, or online. Throughout the paper, we use $(t, i)$ to denote time $t$ in episode $i$. At the beginning of an episode, the iterate is ``reset" to a given starting point $x_{1, i}$. In the most general formulation of the problem, at time $(t, i)$, an optimization algorithm $\mA$ chooses a point $x_{t, i} \in \K$, in a convex domain $\K \subseteq \reals^d$. It then suffers a cost $f_{t, i}(x_{t, i})$. Let $x_{t, i}(\mA)$ denote the point chosen by $\mA$ at time $(t, i)$, and for a certain episode $i \in [N]$, denote the cost of an optimization algorithm $\mA$ by
$$ J_i(\mA) = \sum_{t=1}^T f_{t, i}(x_{t, i}(\mA) ). $$ 
The protocol of this setting is formally defined in Algorithm \ref{alg:episodic_setting}.
\begin{algorithm}[H]
\caption{Meta-optimization }
\label{alg:episodic_setting}
\begin{algorithmic}[1]
\State Input: $N, T, \mA, \K$, reset points $\{x_{1, i}\}_{i=1}^N$.
\For{$i = 1, \ldots, N$}
\For{$t = 1, \ldots, T$}
\State Play $x_{t, i} = \mA (f_{1, 1}, \ldots, f_{t-1, i}) \in \K$ if $t > 1$; else play $x_{1, i}$.
\State Receive $f_{t, i}$, pay  $ f_{t, i}(x_{t, i})$
\EndFor
\EndFor
\end{algorithmic}
\end{algorithm}

The standard goal in optimization, either deterministic, stochastic, or online, is to minimize each $J_i$ in isolation.  In meta-optimization, the goal is to minimize the cumulative cost in both each episode, and overall in terms of the choice of the algorithm. We thus define the meta-regret to be
$$\mregret(\A) = \sum_{i=1}^N  J_i(\mA) - \min_{\A^* \in \Pi} \sum_{i=1}^N J_i(\A^*) ,$$ where $\Pi$ is the benchmark algorithm class.
The meta-regret is the regret incurred for learning the best optimization algorithm in hindsight, and captures both efficient per-episode optimization as well as competing with the best algorithm in $\Pi$.

\subsubsection*{Why is meta-optimization hard?} It is natural to apply standard techniques, such as local search or gradient based methods, to meta-optimization. Indeed, this has been studied in numerous previous works under related settings, e.g. in \citet{baydin2017online,chandra2019gradient}. 

However, the resulting optimization problem is nonconvex and local gradient optimization may reach suboptimal solutions. 
For example, a natural application of meta-optimization is to learn hyperparamters of a class of optimization algorithms. In particular, finding the optimal gradient descent learning rate for a specific objective takes the form of the following minimization problem:
$$\min_\eta f(x_T),\ \ \text{subject to } x_{t+1} = x_t - \eta\nabla f(x_t).
$$
We can unroll the minimization objective from the initial point $x_1$,
\begin{align*}
f(x_T) &= f(x_1 - \eta\nabla f(x_1) -  \eta\sum_{t=2}^{T-1} \nabla f(x_t)) = f(x_1 - \eta\nabla f(x_1) - \eta \nabla f(x_1 - \eta\nabla f(x_1)) - ...)
\end{align*}
From this expression, it is clear that $f(x_T)$ is a nonconvex function of $\eta$. The task becomes even harder when we consider natural variants of the problem, including changing loss functions, stochastic gradients, optimizing over preconditioners instead of scalar learning rates, adding momentum, Nesterov acceleration, and so forth. \nocite{polyak1964some,nesterov1983method}

The setting of meta-optimization generalizes certain prior approaches for hyperparamter tuning. For example, it is the online learning analogue of the average case analysis framework of optimization algorithms \citep{average_case,average_case_sgd}, and generalizes control-based approaches for analyzing a single optimization instance \citep{iqc,pmlr-v134-casgrain21a}.


\subsection{Our contributions.}
\label{subsec:contribution}

Applying iterative optimization methods directly to the meta-optimization problem is unlikely to result in global optimality, since it is nonconvex. 
We take a different approach: 
we formulate meta-optimization as an online nonstochastic control problem. Using recent techniques from this framework, we give an efficient algorithm for unconstrained meta-optimization of quadratic and convex smooth functions. 

The following is an informal statement of our main theorems. 

\vspace{1.5mm}
\fbox{\parbox{0.95\textwidth}{%
\begin{theorem}[Informal] \label{thm:main_intro} There is an efficient algorithm $\A$\footnotemark\ for convex quadratic meta-optimization that satisfies \\
$$ \mregret(\A) = \sum_{i=1}^N\sum_{t=1}^T f_{t, i}(x_{t, i}^{\A}) -\min_{\A^*\in \Pi}\sum_{i=1}^N\sum_{t=1}^T f_{t, i}(x_{t, i}^{\A^*}) \le \tilde{O}(\sqrt{N T}). $$
\end{theorem}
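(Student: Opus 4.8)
The plan is to recast the meta-optimization problem as an instance of online nonstochastic control and then invoke the regret guarantees available for that framework. First I would set up the dynamical system: for a convex quadratic objective $f_{t,i}(x) = \frac12 (x - x^*_{t,i})^\top H_{t,i}(x - x^*_{t,i})$, the iterate produced by a gradient-based method evolves \emph{linearly} in the state. Concretely, writing the update as $x_{t+1,i} = x_{t,i} + u_{t,i}$ and treating the update direction $u_{t,i}$ as a control input, the state $x_{t,i}$ (suitably augmented to carry a momentum coordinate and the current gradient) obeys a linear dynamical system $z_{t+1,i} = A z_{t,i} + B u_{t,i} + w_{t,i}$. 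The objective value $f_{t,i}(x_{t,i})$ then becomes a convex quadratic cost on the state, and the episode reset to $x_{1,i}$ enters as a disturbance $w_{t,i}$ at the episode boundary. This is the step I expect to require the most care: the bilinear coupling $\eta \nabla f(x)$ between hyperparameter and iterate is exactly the source of nonconvexity flagged in the introduction, so I must choose the state/control decomposition so that the gradient feedback and the changing minimizers $x^*_{t,i}$ are pushed entirely into the (possibly adversarial) disturbance sequence $\{w_{t,i}\}$, leaving dynamics that are linear and fixed.

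Second, I would identify the benchmark class $\Pi$ of optimization algorithms with a class of linear controllers. Gradient descent with learning rate $\eta$ corresponds to a fixed linear state-feedback law; adding a preconditioner, momentum, or Nesterov acceleration enlarges this to a family of linear (dynamic) controllers. The goal is to show that $\min_{\A^* \in \Pi}\sum_{i,t} f_{t,i}(x_{t,i}^{\A^*})$ is lower bounded by (or equal to) the cost of the best controller in this linear class, so that competing with the best controller suffices to compete with the best method in hindsight.

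Third, because directly optimizing over linear controllers is itself nonconvex, I would pass to the \textbf{disturbance-action (DAC) parameterization}, writing the control as a linear function of the past disturbances, $u_{t,i} = \sum_{k=1}^{H} M_k\, w_{t-k,i}$. Under this parameterization the cumulative cost is a convex function of the parameters $\{M_k\}$, and the DAC class approximates any stabilizing linear controller up to an error that decays geometrically in the memory length $H$, so that $H = O(\log(NT))$ suffices. Running an online nonstochastic control algorithm (e.g. GPC, a projected online gradient method with memory) over the DAC parameters across all $NT$ steps—carrying the learner's parameters across episode boundaries and absorbing each reset into the disturbance—then yields, by the standard online control regret bound for convex costs, regret $\tilde O(\sqrt{NT})$ against the best DAC controller.

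Finally, I would assemble the pieces via the decomposition $\mregret(\A) \le (\text{regret vs.\ best DAC controller}) + (\text{gap between best DAC controller and best linear controller}) + (\text{gap between linear controllers and } \Pi)$. The first term is $\tilde O(\sqrt{NT})$, the second is controlled by the geometric DAC approximation with $H = O(\log NT)$ memory, and the third vanishes under the identification of $\Pi$ with linear controllers, so the total is $\tilde O(\sqrt{NT})$. The remaining obstacles to discharge are boundedness and stability—verifying that the benchmark methods are (strongly) stabilizing so that states, disturbances, and gradients stay bounded, which is what makes the regret constants and the $\sqrt{NT}$ rate meaningful—together with confirming that the reduction preserves efficiency, i.e.\ that the resulting online algorithm runs in polynomial time per step.
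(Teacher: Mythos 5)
Your overall route---recasting meta-optimization as a linear dynamical system whose disturbances carry the gradient feedback and the episode resets, passing to the disturbance-action parameterization to convexify, running GPC with memory $O(\log NT)$, and decomposing meta-regret into an online-learning term plus approximation gaps---is exactly the paper's approach. However, there is a genuine gap at the step where you invoke ``the standard online control regret bound for convex costs.'' That bound (and every existing nonstochastic-control guarantee) assumes the disturbances are uniformly norm-bounded, and in this formulation they are not: the disturbance at time $(t,i)$ contains $\nabla f_{t,i}(x_{t-1,i})$, which under smoothness can grow linearly with the state, and the reset disturbance at an episode boundary scales with the magnitude of the terminal state of that episode. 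Since the states are in turn driven by the disturbances, boundedness is circular and cannot be discharged as a routine side condition; your framing of the obstacle as ``verifying that the benchmark methods are stabilizing'' also misplaces it, since the difficulty lies on the learner's own trajectory, not the comparator's. The paper's central technical contribution (Theorem \ref{thm:state_bound} and the lemmas feeding it) is precisely closing this gap: a double induction, over episodes and over time steps within an episode, showing that after each reset the state magnitude first spikes and then, by sequential stability, decays back below a constant level before the episode ends, combined with a rescaling of the objectives so that the effective smoothness $\beta$ is small relative to $\gamma,\kappa,L$ and the state-dependent part of the disturbance acts as a contraction rather than a source of blow-up. Only with these universal state and disturbance bounds in hand do the surrogate-cost approximation errors and the OCO-with-memory regret go through as in the standard analysis.

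A secondary issue is your choice of state/control decomposition $x_{t+1,i}=x_{t,i}+u_{t,i}$: this makes the open-loop dynamics matrix the identity, which is only marginally stable ($\rho(A)=1$), so the DAC approximation theory and GPC with $K=0$ do not apply to it without first exhibiting a stabilizing controller. The paper instead bakes damped, time-delayed gradient descent into the dynamics---the uncontrolled system is $x_{t+1,i}=(1-\delta)x_{t,i}-\eta\nabla f_{t,i}(x_{t-1,i})$---and proves via an explicit eigenvalue computation (Lemma \ref{lem:stability}) that this system is strictly stable whenever $\eta\le \frac{1}{8\beta}$ and $\delta\in(0,\frac12]$, with the control entering only as an additive correction to the gradient step. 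This built-in strong stability is what both the logarithmic-memory DAC approximation and the state-bounding induction lean on, so the decomposition is not a cosmetic choice but a load-bearing one.
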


\begin{theorem} [Informal] For convex smooth losses, a bandit variant of the above algorithm satisfies \\ 
\[ \E\left[\mregret(\A)\right] \le \tilde{O}((N T)^{3/4}).\]
\end{theorem}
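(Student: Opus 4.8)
The plan is to build on the full-information quadratic result of Theorem~\ref{thm:main_intro} by (i) extending the control reformulation from quadratics to convex smooth losses and (ii) replacing the exact first-order updates on the controller parameters with a bandit convex optimization subroutine that uses only the observed scalar costs $f_{t,i}(x_{t,i})$. The target rate $(NT)^{3/4}$ is precisely the rate one expects from single-point gradient estimation in bandit convex optimization over a horizon of length $NT$, which strongly suggests that the algorithm $\A$ should be the spherical-perturbation (FKM-style) analogue of the full-information controller, run over the $NT$ total steps with state resets at each episode boundary.

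First I would recall the convex relaxation underlying the quadratic result: the gradient-based optimizer is recast as a linear dynamical system driven by disturbances, and the controller is parameterized by a disturbance-action policy $M = \{M_k\}$ so that the per-step cost, after unrolling, is a convex function of $M$ with bounded diameter and bounded Lipschitz constant. For convex smooth but non-quadratic losses the gradient map is no longer linear, so I would absorb the nonlinearity into an adversarial disturbance sequence; this keeps the system linear in the state and the surrogate cost convex in $M$, at the cost of lower-order approximation terms that are controlled using smoothness of the $f_{t,i}$.

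Next, since the player observes only the scalar value $f_{t,i}(x_{t,i})$ and not the gradient of the surrogate loss in $M$, I would run online gradient descent on $M$ with a one-point gradient estimator: at each step sample a random direction $u$ on the unit sphere, play the perturbed policy $M + \delta u$ projected onto a slightly shrunken feasible set, and form the estimator $\hat g = \tfrac{d}{\delta}\, f_{t,i}(x_{t,i})\, u$, which is an unbiased gradient of the $\delta$-smoothed cost. The meta-regret then decomposes into the OCO-with-memory regret of this procedure, the smoothing bias, the domain-shrinkage cost, and the control-to-meta-regret reduction already established for the quadratic case; tuning the perturbation radius $\delta \sim (NT)^{-1/4}$ balances these terms and yields $\E[\mregret(\A)] \le \tilde{O}((NT)^{3/4})$.

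The main obstacle is the interaction between the bandit estimator and the with-memory structure of the control loss. Because each surrogate cost depends on a window of past policies, the single-point estimator must be shown to compose correctly with memory: a perturbation at one step influences a whole window of future states, so I would need to bound the resulting bias and variance and argue that bandit OCO-with-memory still attains the $T^{3/4}$ rate, while simultaneously ensuring that the random perturbations keep the closed-loop system stable so that states and costs remain bounded in expectation. Balancing the smoothing radius against the memory-truncation error, the disturbance-approximation error arising from non-quadratic smoothness, and the estimator variance is the delicate quantitative step, and it is exactly where the extra $(NT)^{1/4}$ factor over the full-information bound appears.
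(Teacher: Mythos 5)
Your proposal takes essentially the same approach as the paper's proof of Theorem \ref{thm:smooth}: recast meta-optimization as nonstochastic control, run a single-point spherical-perturbation (FKM-style) estimator on the DFC parameters over a shrunken constraint set, use the same three-term decomposition (two surrogate-cost approximation errors plus a bandit-OCO-with-memory regret term), and tune the perturbation radius as $(NT)^{-1/4}$ to obtain $\tilde{O}((NT)^{3/4})$.

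Two substantive differences are worth recording. First, to go beyond quadratics you absorb the nonlinearity of the gradient map into the adversarial disturbance; the paper instead keeps the gradient evolution \emph{exactly} linear in the state via a coordinate-wise mean value theorem, so that $H_{t,i}$ in \eqref{eq:dynamics_special_smooth} becomes an unknown, time-varying matrix of coordinate-wise second-order gradients at intermediate points, at the price of the extra spectral-radius assumption (Assumption \ref{assumption:smooth}) and a Schur-triangularization variant of the stability lemma. The paper's formulation buys an interpretable benchmark class (DFCs whose counterfactual trajectories follow pseudo-gradients) and exploits the crucial fact that the disturbances, hence the policy, are computable without knowing $H_{t,i}$; under your formulation the comparator's ``gradients'' are tied to the learner's own trajectory (or you must carry extra approximation terms), so the statement you would prove is against a somewhat different, arguably weaker, class of algorithms. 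Second, the obstacle you flag as the delicate step---composing the one-point estimator with the memory structure---is not re-derived in the paper but imported wholesale as Theorem \ref{thm:bco_main} from \cite{gradu2020non}; note that this requires the windowed estimator proportional to $\frac{1}{\delta_M} c_{t,i}(z_{t,i}) \sum_{l=1}^L \epsilon_{t-l,i}$ together with a delayed gradient update, not the memoryless estimator $\frac{d}{\delta} f_{t,i}(x_{t,i})\, u$ you wrote, which is biased for losses with memory. Finally, boundedness of states and costs is not only a closed-loop stability question: the disturbances (including the reset disturbance) scale with the states themselves, so the paper needs the inductive bound of Theorem \ref{thm:state_bound}; your reduction inherits this from the quadratic case, which works provided the perturbed policies $\widetilde{M}_{t,i}$ are kept inside $\M$, exactly the role of the shrunken set $\M_{\delta_M}$.
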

}}
\vspace{1.5mm}
\footnotetext{Algorithm \ref{alg:episodic} in Section \ref{sec:thm+alg}}

In the statements above, $\tilde{O}$ hides factors polynomial in parameters of the 
problem and logarithmic in $T, N$ and $\Pi$ is the benchmark class of algorithms. This general guarantee implies convergence to optimality in deterministic and stochastic optimization, as well as regret minimization in the online learning setting. 
As the number of episodes increases, the average performance approaches that of the best optimizer from a family of optimization methods in hindsight. 

\subsubsection{The control formulation.}
We describe in more detail our formulation of optimization as a control problem in the simplest setting -- the deterministic setting, where the objective function is invariant over time steps and episodes. Let the objective function be $f: \reals^d \rightarrow \reals$. The state of the most general dynamical system we can formulate at time $(t, i)$ is given by the vector below which contains $h$ past iterates and gradients:
$$
z_{t, i} = \begin{bmatrix}
x_{t+1, i}\\
\vdots \\
x_{t-h+2, i} \\ 
\nabla f_{t, i}(x_{t, i}) \\
\vdots \\
\nabla f_{t-h+1, i}(x_{t-h+1, i})
\end{bmatrix}.  \mbox{  For simplicity, we consider  } 
 z_{t, i} = \begin{bmatrix}
x_{t, i} \\
x_{t-1, i}\\
\nabla f(x_{t-1, i})
\end{bmatrix},$$
consisting of the current optimization iterate, the previous iterate, and the previous gradient. Let $H$ denote the Hessian of $f$, $\delta > 0$ be a regularization parameter, and $\eta >0$ be the base learning rate, then the system we consider evolves according to the following dynamics:
\begin{align*}
\begin{bmatrix}
x_{t+1, i} \\
x_{t, i}\\
\nabla f(x_{t, i})
\end{bmatrix} 
= 
\begin{bmatrix}
(1-\delta)I & 0   &  -\eta I\\
I & 0 & 0 \\
 H & -H & 0
\end{bmatrix} 
\times
\begin{bmatrix}
x_{t, i} \\
x_{t-1, i}\\
\nabla f(x_{t-1, i})
\end{bmatrix} 
+ 
\begin{bmatrix}
I & 0 & 0 \\
0 & 0 & 0\\
0 & 0 & 0
\end{bmatrix}  
\times
u_{t, i} 
 + \begin{bmatrix}
 0\\
 0\\
 \nabla f(x_{t-1, i})
 \end{bmatrix}.
 \end{align*}
 We can write the equation above as
$$
z_{t+1, i}  =  Az_{t, i} + Bu_{t, i} + w_{t, i},$$
where $A$ is the dynamics matrix, $B$ is the control-input matrix, and the $w$'s are the disturbances. This is a non-standard use of disturbance in control, where it is usually unknown to the controller. However, this formulation was chosen to ensure stability and leads to our theoretical guarantees \footnote{Notice that the regret bounds we prove are under the same sequence of disturbances, and apply to the gradients along the taken trajectory. This is similar to the nature of adaptive gradient methods: the best regularization in hindsight depends on the observed gradients, rather than the ones that would appear had we used a different algorithm.}. 

Without the control signal $u_{t, i}$, the system describes the partially time-delayed gradient descent update with $\ell_2$ regularization: 
$
x_{t+1, i} = (1-\delta)x_{t, i} - \eta \nabla f(x_{t-1, i}).$
Several results show that under mild conditions, with a delay of one iteration, the regret using delayed gradients are constant factors away from the same algorithm using using fresh gradients \citep{delay, delay1}. We demonstrate in later sections that $\delta$ is a user-specified parameter, and can be arbitrarily small. 

The control signal $u_{t, i}$ contributes only to the gradient update, and can be used to simulate the update of any optimization method. Finally, observe that for a quadratic function $f$, the evolution of the gradient follows
$$
\nabla f(x_{t, i}) = \nabla f(x_{t-1, i}) + H(x_{t, i} - x_{t-1, i}),$$
and we incorporate $\nabla f(x_{t-1, i})$ in the adversarial disturbance. Importantly, this formulation can capture the state reset when a new episode begins by using a specific disturbance, as we elaborate in Section \ref{sec:dynamics}.

\subsubsection{The benchmark algorithm class.}

Informally, our guarantee is competitive with optimizers that are linear functions of past gradients. We focus on the deterministic setting for the rest of the section, since in this case the benchmark algorithm class has a more straightforward interpretation, and provide the full description in Section \ref{sec:benchmark}. The class consists of optimization algorithms parameterized by a matrix $K = [K_1\ K_2\ K_3]\in \reals^{d\times 3d}$. Let $x_{t, i}$ be the iterates played by our algorithm, and
$$
z_{t, i}^K = \begin{bmatrix}x_{t, i}^K\\
x_{t-1, i}^K\\
\hat{\nabla}f(x_{t-1, i}^K)
\end{bmatrix},\ \ \ \text{where  }\hat{\nabla}f(x_{t-1, i}^K) = \nabla f(x_{t-1, i}^K) - \nabla f(x_{t-2, i}^K) + \nabla f(x_{t-2, i}),
$$
be the state at time $(t, i)$ reached by the optimizer parameterized by $K$. The optimizer with parameter $K$ has the corresponding updates:
\begin{align} \label{eq:lin_policies} x_{t+1, i}^K = ((1-\delta)I + K_1)x_{t, i}^K + K_2 x_{t-1, i}^K + (K_3 - \eta I)\hat{\nabla}f(x_{t-1, i}^K).\end{align} 

 This class $\Pi$ can capture common optimization algorithms on time-delayed pseudo-gradients $\hat{\nabla}f$ for the deterministic setting, and we give some examples in the table above with the corresponding choice of $K$. The class also includes any combination of the methods in the table, which can be expressed by an appropriate choice of $K$.  For more details on the class of algorithms and restrictions on $K$, see Section \ref{sec:policy_class}; for a concrete example of learning the learning rate, see Section \ref{sec:example}.
\begin{table}
\begin{tabular}{ |c| l| l|}
 \hline
 Method & K & Update\\ 
  \hline GD with learning rate $\eta'$ & $\begin{bmatrix} 0 & 0 & (\eta - \eta') I\end{bmatrix}$ & $x_{t+1, i}^K = (1-\delta)x_{t, i}^K - \eta' \hat{\nabla} f(x_{t-1, i})$
     \\ \hline
 Momentum  &  $\begin{bmatrix} -\beta I & \beta I & 0 \end{bmatrix}$ & $x_{t+1, i}^K = (1-\delta-\beta)x_{t, i}^K +\beta x_{t-1, i}^K -\eta\hat{\nabla} f(x_{t-1, i})$ \\   \hline
 Preconditioned methods &$\begin{bmatrix} 0 & 0 & \eta I - P \end{bmatrix}$  & $x_{t+1, i}^K = (1-\delta)x_{t, i}^K -P\hat{\nabla} f(x_{t-1, i})$\\
 \hline
\end{tabular}
\end{table}


 

\subsubsection{Guarantees for different optimization settings.}\label{sec:optimization_settings}
\paragraph{Deterministic optimization.} In this setting, we are given a fixed objective function, i.e. $f_{t, i} = f$ for all $t,i$. Let $\bar{x} = \frac{1}{TN}\sum_{i=1}^N\sum_{t=1}^T x_{t, i} $ be the average iterate, and $\bar{J}(\A) = \frac{1}{TN}\sum_{i=1}^N J_i(\A)$ denote the average cost of the optimization algorithm $\A$. We also drop the superscript $\A$ on the left hand side for clarity of notation. Theorem \ref{thm:main_intro} guarantees
$$ f(\bar{x}) \leq \min_{\A^* \in \Pi} \bar{J}(\A^*) + \tilde{O}\left(\frac{1}{\sqrt{TN}}\right). $$
That is, the function value of the average iterate over all episodes approaches the average cost of the best optimization algorithm from $\Pi$. Here $\Pi$ refers to the class of algorithms described by (\ref{eq:lin_policies}). 

\paragraph{Stochastic optimization.}

Suppose our functions are drawn randomly from distributions $\D_1, \D_2, \ldots, \D_N$ that vary from epoch to epoch, i.e. $f_{t, i} \sim \D_i$. 
Let $\E$ denote the unconditional expectation with respect to the randomness of the functions, and define the function $\bar{f}_i(x) := \E_{\D_i}[f_{t, i}(x)]$, then Theorem \ref{thm:main_intro} implies
\begin{align*}
\frac{1}{NT}\sum_{i=1}^N\sum_{t=1}^T \E[\bar{f}_i(x_{t, i})]
&\le \min_{\A^*\in \Pi}\E\left[\bar{J}(\A^*)\right]+ \tilde{O}\left (\frac{1}{\sqrt{TN}}\right ).
\end{align*}

Thus, our algorithm guarantees that the average expected function value is close to the average expected cost of the best algorithm in hindsight. Note that since the functions are changing, $\Pi$ has a more subtle definition in the stochastic and adversarial settings (see Section \ref{sec:thm+alg} for details).

\paragraph{Adversarial optimization.}
In the adversarial setting, our functions $f_{t, i}$ arrive in an online manner at each time step, and the standard optimization metric is regret. Our main theorem gives a guarantee over the per-episode average regret of our optimization algorithm. Recall the definition of regret in an episode: $\regret_i = \sum_{t=1}^T f_{t, i}(x_{t, i}) - \min_{x_i^*} \sum_{t=1}^T f_{t, i}(x_i^*)$. Our algorithm satisfies 
\begin{align*}
\frac{1}{TN}\sum_{i=1}^N \text{Regret}_i &\le \min_{\A^*\in \Pi}\frac{1}{TN}\sum_{i=1}^N\text{Regret}_i(\A^*) + \tilde{O}\left (\frac{1}{\sqrt{NT}}\right ).
\end{align*}


\subsection{Related work.}

\subsubsection*{Online convex optimization and nonstochastic control.}

Our methods for meta-optimization are based on iterative gradient methods with provable regret guarantees. These have been developed in the context of the online convex optimization framework. In this framework, a decision maker iteratively chooses a point $x_t \in \K$, for a convex set in Euclidean space $\K \subseteq \reals^d$, and receives loss according to a convex loss function $f_t:\K \mapsto \reals$. The goal of the decision maker is to minimize her regret, defined as
$$ \regret = \sum_{t=1}^T f_t(x_t) - \min_{x \in \K} \sum_{t=1}^T f_t(x) . $$
Notably, the best point $x^* \in \K$ is defined only in hindsight, since the $f_t$'s are unknown a priori. For more information on this setting and an algorithmic treatment see \citep{hazan2016introduction}.

Techniques from online convex optimization were instrumental in developing an online control theory that permits nonstochastic disturbances and is regret-based. Deviating from classical control theory, the online nonstochastic control framework treats control as an interactive optimization problem, where the objective is to make decisions that compete with the best controller in hindsight. The book of \citet{hazan2022introduction} gives a comprehensive survey of the topic.

In general, computing the best controller in hindsight is a nonconvex optimization problem, even for linear dynamical systems and linear controllers, if we allow general convex costs. However, online nonstochastic control algorithms have provable guarantees in this setting despite the challenge of nonconvexity, since they employ convex relaxation techniques by executing policies in a larger policy class. 


\subsubsection*{Average case analysis of optimization.} A closely related framework for analyzing optimization methods is the average-case analysis framework developed in \citep{average_case}. This framework studies the expected performance of optimization algorithms when the problem is sampled from a distribution, and allows for more fine-grained results than typical worst-case analysis. Average-case optimal first-order methods for minimizing quadratic objectives are proposed in \citep{average_case}, and  \citet{domingo-enrich2021averagecase} extend the study to bilinear games. 
In the stochastic setting, average-case analysis of SGD is given by \citet{average_case_sgd}.

Compared to the average-case analysis framework, meta-optimization is significantly more general, since we do not assume known stochastic distribution of the optimization problems, and we compete with the best algorithm in hindsight. In contrast, implementing the optimal algorithms under the average-case framework requires knowledge of the problem distribution. 


\subsubsection*{Hypergradient descent and hyperparameter tuning for optimizer parameters.}
Hyperparameter optimization is a significant challenge in the practice of deep learning, and has been intensively studied. The work of \citet{baydin2017online} apply local iterative methods to the problem of optimizing the learning rate from an empirical standpoint, and \citet{chandra2019gradient} give better practical methods. However, even the simple case of optimizing the learning rate can be nonconvex, and it is unlikely that iterative methods will yield global optimality guarantees. Certain provable guarantees for quadratic functions and scalar learning rate are presented in \citep{wang2021guarantees}.

\noindent More general hyperparameter optimization techniques were also applied to the same problem, most commonly Bayesian optimization \citep{snoek2012practical} and spectral techniques \citep{hazan2017hyperparameter}.

\subsubsection*{Performance estimation programming.} The Performance Estimation Problem (PEP) was first proposed in \cite{pep}, and can be seen as the worst-case optimal approach of learning the best optimizer for a class of functions. The PEP can be formulated as a maximization problem, and though it is nonconvex, SDP relaxations are introduced in \cite{pep}.
\citet{pep_smooth} further propose convex programs that can find the exact worst-case performance of first order methods on smooth convex functions. More recently, \citet{bnb_pep} present BnB-PEP, a PEP framework that extends to nonconvex optimization. The problem of finding the optimal method is formulated as a nonconvex but practically tractable QCQP, and algorithms are given to solve the problem to global optimality.

In contrast to the PEP framework, meta-optimization is an online and sequential formulation of learning the best optimizer. 
Moreover, our notion of regret is instance-optimal: we find the best optimizer for the objective functions that appear in the meta-optimization problem, instead of over the entire function class. The benchmark algorithm class we consider also differ from first-order methods studied in PEP, as we allow preconditioned methods that can potentially adapt to the geometry of the problem.

\subsubsection*{Control for optimization.} 
The connections between control and optimization go back to Lyapunov's work and its application to the design and analysis of optimization algorithms. We survey the various approaches in detail in  Section  \ref{sec:opt_and_control}.

\citet{iqc} apply control theory to the analysis of optimization algorithms on a single problem instance. They give a general framework, using the notion of Integral Quadratic Constraints from robust control theory, for obtaining convergence guarantees for a variety of gradient-based methods. This framework can also be used to design algorithms given target performance characteristics. \citet{pmlr-v134-casgrain21a} study the characterization of the regret-optimal algorithm given an objective function, using a value function-based approach motivated by optimal control. The goal is to characterize the regret-optimal algorithm for a particular optimization instance, instead of developing provably efficient methods.

\subsubsection*{Adaptive gradient methods.} In contrast to learning the optimal algorithm, the methodology of adaptive preconditioning aims to compete with the best regularizer from a class using online learning. Adaptive algorithms starting from Adagrad \cite{duchi2011adaptive}, followed by RMSprop and Adam \citep{tieleman2012lecture,ADAM}, give principled methods for auto-tuning the preconditioning matrix for gradient descent and its variants. Yet in practice, they require a multiplicative learning rate factor that needs tuning. 

\subsection{Organization.}
In Section \ref{sec:opt_and_control}, we give an overview on prior works using control theory in optimization. We describe the recent framework of online nonstochastic control in Section \ref{sec:nonstochastic}, and why it is important for meta-optimization. In Section \ref{sec:dynamics}, we introduce the new control formulation of meta-optimization, and give some examples of optimizers that can be expressed as control policies. In Section \ref{sec:thm+alg}, we state the algorithm and main results for convex quadratic meta-optimization, as well as the benchmark algorithm class. We also give an illustrative example of learning the learning rate for convex quadratics. Then, we extend our results to convex smooth meta-optimization in Section \ref{sec:smooth}. We provide the analysis of our main results, including the technical derivation of control with unbounded disturbances in Section \ref{app:main_thm_proof}.
\section{Mathematical optimization and feedback control.}\label{sec:opt_and_control}

The fields of mathematical optimization and control theory are closely related. Many optimization methods are inspired by natural dynamical systems, such as Polyak's heavy ball method \citep{polyak1964some}.  On the other hand, the analysis of optimization algorithms also has fundamental connections with the mathematics of dynamical systems. Besides the heavy ball method\citep{wang2021modular}, other examples where optimization algorithms are analyzed as dynamical systems include Nesterov momentum for smooth functions \citep{su2014differential,muehlebach2019dynamical}, and more recently frameworks of analysis using Lyapunov stability theory \citep{iqc, wilson2018lyapunov}. We briefly describe these prior connections in the next two subsections. 

A {\bf dynamical system} is a vector field mapping $\reals^d$ onto itself. It can be written as 
$$ z_{t+1} = v(z_t) , $$
where $v$ is the dynamics function. We use discrete time notation throughout this paper as optimization methods implemented on a computer admit discrete-time representations. Dynamical systems can be used to describe an optimization algorithm; for example, gradient descent for an objective $f$ with learning rate $\eta$ can be written as 
$ x_{t+1} = x_t - \eta \nabla f(x_t) ,$
and similarly, other iterative preconditioned gradient (or higher-order) method can be described as a dynamical system.  
The natural question of convergence to local or global minima, as well as the rate of convergence, can be framed as a question about the {\it stability} of the dynamical system.  Informally, a  dynamical system is said to be stable from a given starting point $x_0$ if the dynamics converges to an equilibrium from this point. There are numerous definitions of stability and equilibria of dynamical systems, and we refer the interested reader to comprehensive discussions in \citep{slotine1991applied,hazan2022introduction}. In this introductory section we consider only the most intuitive notion of convergence to a global minimum for a convex function, and the basic setting of noiseless dynamical systems in a single trajectory.

In his foundational work, \citet{lyapunov1992general} introduced two methods for certifying stability of dynamical systems, the direct method and the indirect method.

\subsection{Lyapunov's direct method.} 

Lyapunov's direct method is by far the most widely used framework of control in optimization. It centers on creating an energy or potential function, called the ``Lyapunov function", which needs to be non-increasing along the trajectory of the dynamics, and strictly positive except at the equilibrium (global minimum for an optimization problem) to certify stability. 
A common example given in introductory courses on dynamics is that of the motion equations of the pendulum. The Lyapunov function for this system is taken to be the total energy, kinetic and potential \citep{tedrake}.

For the discrete dynamics of gradient descent over a strongly convex objective $f$, the standard Lyapunov function to use is simply the Euclidean distance to optimality, or $\mathcal{E}(x) = \frac{1}{2}\|x - x^*\|^2 $, where $x^*$ is the global minimizer. It can be shown that with a sufficiently small learning rate depending on the strong convexity parameter, that this energy function is monotone decreasing for the dynamics of gradient descent, showing that the system is stable \citep{wilson2018lyapunov}. Various other conditions on the objective function $f$, such as smoothness, convexity, and so forth, give rise to different energy functions that can certify stability, and even show rates of convergence. 

However, Lyapunov’s direct method cannot be used to provide optimality guarantees for control: prescriptive suggestions for how to drive the system to a target state. Moreover, standard Lyapunov analysis does not take disturbances of the system into account, even though they naturally arise in most physical systems\footnote{An exception is the work of \citet{iqc} .}.  To overcome these limitations, we consider Lyapunov's second technique, the indirect method.

\subsection{Lyupanov's indirect method.}

Instead of working directly with the nonlinear system, this method studies the behavior of the system around the linearization about the equilibrium point. More formally, let $z_0,...,z_T$ be a given trajectory, then we can approximate the dynamics as 
$$ z_{t+1} = A_t z_t +  w_t , $$ 
where $A_t$ is the Jacobian of the dynamics with respect to $z_t$ and $w_t$ is a noise term that can model misspecification or other disturbance. The stability of the linearized system can be determined by inspecting the spectrum of the dynamics matrices $A_t$.

On one hand, this technique is more limited: the analysis only applies to linear dynamical systems. On the other hand, it is also amenable to optimal control theory. This additional power can potentially guarantee convergence to the optimal method in optimization, which is exactly our goal. Nevertheless, there are several shortcomings of this approach, especially when applied to optimization, including:
\begin{enumerate}
    \item The linearization depends on the state: $A_t$ is a function of $x_t$, and optimizing the trajectory over the sequence of linearized dynamics does not imply global optimality for the original system. 

\item The linearization is a faithful approximation of the dynamical system only if the dynamics is smooth, and the time interval between measurements is small with respect to this smoothness. 
    
\end{enumerate}

These limitations might explain, at least partially, why Lyapunov's indirect method has not been used to analyze optimization algorithms. However, with this method we can incorporate a control signal, as well as a disturbance, into the nonlinear dynamics formulation,
$ z_{t+1} = v(z_t,u_t) + w_t ,$
where $u_t$ can capture parameters of the optimization method, such as the learning rate and preconditioner. Using Lyapunov's indirect method we can write
$$ z_{t+1} = A_t z_t + B_t u_t + w_t , $$
where $A_t,B_t$ are the Jacobians with respect to the state and the control. We now have a linear time-varying (LTV) dynamical system, and if the objective functions are quadratic and the disturbance is stochastic, then the optimal controller can be computed using LQR theory \citep{kalman1960new}.

This observation is the starting point of our investigation. Numerous challenges arise when we attempt to use this methodology to learn the optimizer: 
\begin{enumerate}
    \item Optimal control theory requires the knowledge of system dynamics a priori, but in optimization they are only determined during the optimization process. 

    \item Efficient algorithms for optimal control, based on the Bellman equation and backward induction, are restricted to quadratic cost functions. 

    \item Optimal control requires the disturbance to be stochastic, and it is not robust to adversarially chosen cost functions that arise in online or stochastic optimization. 

\end{enumerate}

By using new techniques in control theory developed in the context of machine learning, namely nonstochastic control, we show how to overcome these challenges:
\begin{enumerate}
    \item Online nonstochastic control does not require the knowledge of system matrices a priori. Further, it allows adversarially chosen systems and cost functions. 

    \item The methods for meta-optimization we consider are themselves gradient-based and scalable. Thus we can hope to devise practical algorithms when the number of episodes increases, or the problem dimension is high. 

    \item Online nonstochastic control methods have strong regret guarantees under adversarially changing cost functions, which can be extended to obtain finite-time provable regret bounds in meta-optimization. 

\end{enumerate}

We describe the framework of online nonstochastic control in the next subsection. 
\subsection{Online nonstochastic control.}\label{sec:nonstochastic}

The online nonstochastic control (ONC) framework applies online convex optimization to new parametrizations of classical control problems. This section gives the basic description of this framework, and a detailed exposition appears in   \citep{hazan2022introduction}. 

\subsubsection*{Problem setting.}Consider first the simple case of a linear time invariant (LTI) dynamical system in a single trajectory without resets. 
A linear dynamical system (LDS) evolves via the following equation:
$$ z_{t+1} = Az_t + Bu_t + w_t.$$

Here $z_t \in \reals^{d_z}$ represents the state of the system, $u_t \in \reals^{d_u} $ represents a control input and $ w_t \in \reals^{d_x} $ is a disturbance introduced to the system from the environment.  The goal of the controller is to produce a sequence of control actions $ u_1 \ldots u_T$ aimed at minimizing the cumulative control cost $ \sum_{t=1}^T c_t(z_t, u_t)$. 
Many systems do not exhibit full observation, and a well-studied model for capturing partial observation is when the observation is a linear projection of the state, i.e. 
$  y_t = C x_t + D u_t + \xi_t , $
where $y_t \in \reals^{d_y}$ is the observation at time $t$ and $\xi_t \in \reals^{d_y}$ is an additional noise term that affects the observed signal. We say that a system is fully observed if $z_t$ is observed by the controller, and usually refer to this case unless specifically stated otherwise.

The control inputs, when correctly chosen, can modify the system to induce a particular desired behavior. For example, controlling the thermostat in a data center to achieve a certain temperature, applying a force to a pendulum to keep it upright, or driving a drone to a destination.

In nonstochastic control, we instead consider a significantly broader class of general (possibly non-quadratic) convex cost functions and norm-bounded (instead of stochastic) disturbances. Both the costs and disturbances may be adversarially chosen, and only be revealed to the controller in an online fashion.

\subsubsection*{A new objective: policy regret.} This new objective builds upon the theory of online convex optimization \citep{hazan2016introduction} and regret minimization in games: instead of computing the optimal policy in a certain class, we can compete with it using improper learning via convex relaxation of the policy class. Formally, we measure the performance of a policy through the notion of policy regret,

\begin{equation} \label{eqn:policyregret}
 \mathrm{Regret} = \sum_{t=1}^T c_t(z_t, u_t) - \min_{\pi \in \Pi } \sum_{t=1}^T c_t(z_t^\pi, u_t^\pi),
\end{equation}
where $ z_t^\pi$ represents the state reached when executing the policy $\pi$. In particular, the second term represents the total cost paid by the best (in hindsight) policy from the class $\Pi$ had we played it under the same sequence of disturbances and cost functions. In this regard, the above notion of regret is counterfactual and hence more challenging than the standard stateless notion of regret.
Algorithms which achieve low policy regret are naturally adaptive, as they can perform almost as well as the best policy in the long run, even when the disturbances and costs are adversarial.

But what policies are reasonable to compare against? We survey the state-of-the-art in control policies next. Then we describe new methods arising from this theory that can provably compete with the strongest policy class.

\subsubsection{Existing and new policy classes for control.}

\paragraph{Linear state-feedback policies.} 
For a matrix $K \in \reals^{d_u\times d_z}$, we say a policy of the form $u_t = Kz_t$ is a linear state-feedback policy, or linear policy. In classical optimal control with full observation, the cost function are quadratic in the state and control.
Under this assumption, if the system is LTI with stochastic disturbances, then the infinite-horizon optimal policy can be computed using the Bellman optimality equations (see e.g. \citep{tedrake}). This gives rise to the Discrete-time Algebraic Riccati Equation (DARE), whose solution is the optimal policy, and it is linear.
The finite-horizon optimal policy can also be derived and shown to be linear. It is thus reasonable to consider the class of all linear policies as a comparator class, especially for LTI dynamical systems. Denote the class of all stabilizing linear policies as
$$ \Pi_{Lin} = \{ K \in \reals^{d_z \times d_u } \} . $$





\paragraph{Linear dynamical control policies. }

A generalization of static state-feedback policies is that of linear dynamical controllers (LDCs). LDCs are particularly useful for partially observed LDS and maintain their own internal dynamical system according to the observations, in order to recover the hidden state of the system.  A formal definition is given below. 

\begin{definition}[Linear Dynamical Controller] \label{def:ldc}
A linear dynamical controller $\pi$ is a linear dynamical system $(A_\pi, B_\pi, C_\pi, D_\pi)$ with internal state $s_t\in \mathbb{R}^{d_\pi}$, input $z_t\in \mathbb{R}^{d_z}$ and output $u_t\in\mathbb{R}^{d_u}$ that satisfies
$$
s_{t+1} = A_\pi s_t + B_\pi z_t,\ \ u_t = C_\pi s_t + D_\pi z_t.
$$
\end{definition}
LDCs are state-of-the-art in terms of performance and prevalence in control applications of LDS, both in the full and partial observation settings. They are known to be theoretically optimal for partially observed LDS with quadratic cost functions and Gaussian disturbances, but are more widely used in practice. Denote the class of all stabilizing LDCs as
$$ \Pi_{LDC} = \{ A \in \reals^{d_s \times d_s} ,B \in \reals^{d_s \times d_z}, C \in \reals ^{d_u \times d_s} ,D \in \reals^{d_u \times d_z } \} . $$

\paragraph{Disturbance-feedback controllers.}

An even more general class of policies is that of disturbance-feedback controllers (DFCs), where the policies are functions of past disturbances.
\begin{definition}[Disturbance-feedback controller]\label{def:dac_gen}
A disturbance-feedback controller with parameters $(K, M)$, where $M = [M^1, \ldots, M^L]$, outputs control $u_t$ at state $z_t$,

$$ u_t = Kz_t + \sum_{i=1}^L M^i w_{t-i}. $$
Here $M^i$ denotes the $i$-th matrix in $M$, instead of a matrix to its $i$-th power.
\end{definition}

Denote the class of DFCs as 
$$ \Pi_{DFC} = \{ K \in \reals^{d_s \times d_s} ,M\in \reals^{d_u\times Ld_s}: K\text { is stabilizing } \}. $$

This policy class is more general than that of LDCs and linear controllers, because it can approximate the latter classes in terms of the average control cost. Broadly speaking, for every time-invariant LDS and every stabilizing policy in $\Pi_{LDC}$ and $\Pi_{Lin}$, there exists a DFC with $L=\Omega(\log\frac{1}{\eps})$ whose average cost is $\eps$-close on the same system and sequence of disturbances \cite{hazan2022introduction}. 
We henceforth study regret with respect to the class of DFCs, which is the most powerful of the above policy classes and gives the strongest performance guarantees.




\subsubsection{The gradient perturbation controller.}

The fundamental new technique introduced in \citep{agarwal2019online} is a novel algorithm called the Gradient Perturbation Controller (GPC) for the nonstochastic control problem. For simplicity, assume that the dynamical system given by $(A, B)$ is known and the state is fully observable.  Thus, given a sequence of controls and states, we can compute the corresponding sequence of disturbances.

It can be shown that directly learning the optimal linear controller $K^*$ is not a convex problem. However, instead of learning $K^*$, we can learn a sequence of matrices $\{ M^i \}_{i=1}^t$ (where $i$ denote the index instead of matrix power) that represents the effect of $ w_{t-i}$ on $ u_t$ under the execution of some linear policy. Schematically, we parameterize our policy as a DFC,
$ u_t =  Kx_t + \sum_{i=1}^t M^i w_{t-i},$ where $K$ is a stabilizing controller of the system $(A, B)$. 
Since the states $x_t$'s are linear in the past controls, and the choice of the controller ensures that the controls are linear in $\{M^i\}_{i=1}^t$, the states are also linear in $\{M^i\}_{i=1}^t$. Moreover, since the cost functions are convex in the states and controls, they are convex in $\{M^i\}_{i=1}^t$, the parameters of interest. We can thus hope to learn the parameters using standard techniques such as gradient descent and Online Newton Step.

However, there are two challenges with this approach. First, the number of parameters grows linearly with time,  and so can the regret. Second,
The decision a controller makes at a particular instance affects the future through the state.

To resolve the first  issue, we limit the  history length $L$ of the GPC to grow very slowly with time. It can be shown that for stable (and stabilizable given a stabilizing controller) systems, a history length of $O(\log \frac{1}{\epsilon})$ is sufficient to capture the class of infinite-memory DFCs up to an $\eps$ additive approximation in terms of the average cost. This logarithmic dependence of the history length on the approximation guarantee means that for the number of parameters to grow mildly, the policy regret is affected by no more than a constant factor.


The second issue is more subtle. Fortunately, online learning of loss functions with memory was a topic studied before in   \citep{anava2015online}. It is shown that gradient methods guarantee near-optimal regret if the learning rate is tuned as a function of the memory length.

With all the core components in place, we provide a brief specification of the GPC algorithm in Algorithm \ref{alg:mainA}. The GPC algorithm accepts as input a stabilizing controller $K$ that ensures 
$ \rho(A + BK) < 1 ,$
where $\rho$ is the spectral radius of the matrix $A+BK$. Such a controller $K$ can be computed for all stabilizable systems using semi-definite programming, see e.g. \citep{cohen2019learning}. The algorithm then proceeds to control using a DFC policy that is adapted to the online cost functions. 

To obtain a loss function of the parameters of the GPC, we compute the \textit{surrogate state} $y_t^M$ and \textit{surrogate control} $v_t^M$: the terminal state and control we would have seen by executing the current GPC for $L$ time steps from the zero state, under the same sequence of disturbances. We then compute the \textit{surrogate loss}, $c_t(y_t^M, v_t^M)$, which is a function of $M=\{M^l\}_{l=1}^L$. Notice that the gradient of the cost function $g_t$ is taken with respect to the policy variables $M_t=M_t^{1:L}$. This is valid since both the control, and in turn the state, are a convex function of these variables. The notation $\prod_\M(x)$ denotes the Euclidean projection of a vector $x$ onto the set $\M$, see \citep{hazan2016introduction} for more details on projections.

\begin{algorithm}[h]
\caption{Gradient perturbation controller} 
\label{alg:mainA}
\begin{algorithmic}[1]
\State \textbf{Input:} Step size schedule $\eta_t$ , history length $L$, constraint set $\M$, stabilizing controller $K$.
\State Initialize $M_1^{1:L} \in \M$ arbitrarily.
\For{$t = 1, \ldots, T$}
\State Choose the action: $u_t = K z_t+ \sum_{l=1}^{L} M_t^{l} w_{t-l}.$
\State Observe the new state $z_{t+1}$, cost function $c_t$, and record $w_t=z_{t+1}-Az_t-Bu_t$.
\State Compute $y_t^{M_t}, v_t^{M_t}$: from $y_{t-L}^{M_t}=0$, execute the DFC with parameters $M_t$ for $L$ time steps, record the terminal state and control.
\State Obtain surrogate loss $g_t(M_t) = c_t(y_t^{M_t}, v_t^{M_t})$.
\State Online gradient update:
\[M^{t+1} = \Pi_{\M}(M^t - \eta_t \nabla g_{t}(M_t)) . \]
\EndFor
\end{algorithmic}
\end{algorithm}

The GPC algorithm is accompanied by a near-optimal regret guarantee against the class of DFC policies with history length $L$, 
\begin{theorem}[Theorem 5.1 in \citep{agarwal2019online}]
Let $u_t$ be a sequence of controls generated by Algorithm \ref{alg:mainA} for a known LDS, and let $\Pi_{DFC}^L$ denote the set of DFCs with history length $L$. Then for any arbitrary bounded disturbance sequence and convex cost functions, it holds that 
$$ \sum_{t=1}^T c_t(z_t,u_t)  - \min_{\pi \in \Pi_{DFC}^L} \sum_{t=1}^T c_t(z_t^\pi , u_t^\pi) \le \tilde{O}(\sqrt{T} ) . $$
\end{theorem}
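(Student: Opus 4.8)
The plan is to reduce this policy-regret bound to a regret bound for online convex optimization (OCO) with memory, following the strategy of \citet{agarwal2019online}. The two difficulties flagged in the text are that the number of policy parameters $\{M^l\}$ could grow with $T$, and that the control chosen now influences all future states. I would address the first by fixing the history length $L$ and treating the DFC parameter $M = [M^1,\dots,M^L]$ as a single decision variable in a bounded convex set $\M$; I would address the second through the surrogate-loss construction already built into Algorithm~\ref{alg:mainA}.

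First I would establish the structural fact that makes the problem tractable: with the stabilizing controller $K$ fixed, the true state $z_t$ and control $u_t$ are affine functions of the parameters $\{M_s\}_{s\le t}$ and the observed disturbances $\{w_s\}$. Unrolling the closed-loop dynamics with $\tilde A := A+BK$ expresses $z_t$ as a fixed linear combination of past disturbances whose coefficients are affine in the $M^l$'s. Consequently the surrogate state $y_t^{M}$ and surrogate control $v_t^{M}$ -- obtained by running a \emph{single} fixed $M$ for $L$ steps from the zero state -- are affine in $M$, and since $c_t$ is convex the surrogate loss $g_t(M)=c_t(y_t^M,v_t^M)$ is convex in $M$. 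Using $\|w_t\|\le W$ and the constraint $M\in\M$, I would then record the quantitative bounds (boundedness of $y_t^M,v_t^M$, and a bound $G$ on $\|\nabla g_t\|$) needed to invoke online gradient descent.

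Next I would use the three-way split
\[
\sum_{t} c_t(z_t,u_t)-\min_{\pi\in\Pi_{DFC}^L}\sum_{t} c_t(z_t^\pi,u_t^\pi) = \mathrm{(I)}+\mathrm{(II)}+\mathrm{(III)},
\]
where $\mathrm{(I)}=\sum_t[c_t(z_t,u_t)-g_t(M_t)]$ compares the true and surrogate costs along the algorithm's own trajectory, $\mathrm{(II)}=\sum_t g_t(M_t)-\min_M\sum_t g_t(M)$ is the OCO regret on the surrogate losses, and $\mathrm{(III)}=\min_M\sum_t g_t(M)-\sum_t c_t(z_t^\pi,u_t^\pi)$ compares the surrogate and true costs for the comparator $\pi$. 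Term $\mathrm{(II)}$ is the regret of online gradient descent on the convex losses $g_t$ over the bounded set $\M$, which is $O(GD\sqrt{T})$ by the standard analysis. Terms $\mathrm{(I)}$ and $\mathrm{(III)}$ are \emph{approximation} errors. For $\mathrm{(III)}$ I would use stability of $\tilde A$: a length-$L$ truncated DFC reproduces the comparator's per-step cost up to an error decaying like $\kappa(1-\gamma)^{L}$, where $\rho(\tilde A)\le 1-\gamma$, so choosing $L=\Theta(\log T)$ makes the total $o(1)$. Term $\mathrm{(I)}$ splits further into the same truncation error and a \emph{movement} term, since the true trajectory used the varying iterates $M_{t-L},\dots,M_t$ whereas $y_t^{M_t}$ uses the fixed $M_t$; this is controlled by $\sum_{s}(1-\gamma)^{t-s}\|M_t-M_s\|$, and the step bound $\|M_t-M_s\|\le |t-s|\,\eta_t G$ turns it into a quantity of order $\eta T$ up to $L$ and decay factors, as in the OCO-with-memory analysis of \citet{anava2015online}.

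The main obstacle is the simultaneous calibration in Term $\mathrm{(I)}$: the history length $L$ must be large enough that truncation error is negligible, yet the movement term forces $\eta T \lesssim \sqrt T$, i.e.\ $\eta=\Theta(1/\sqrt T)$, so that slowly-varying policies keep the true and surrogate trajectories close. Balancing these -- large $L$ for faithful truncation, small $\eta$ for small movement, while keeping $\mathrm{(II)}$ at $O(\sqrt T)$ -- is exactly where the counterfactual, stateful nature of the problem bites, and it is what distinguishes this analysis from memoryless OCO. Once $\mathrm{(I)}$ and $\mathrm{(III)}$ are shown to be $\tilde{O}(\sqrt T)$ (truncation contributing only $O(\log T)$, movement $O(\sqrt T)$) and $\mathrm{(II)}$ is $O(\sqrt T)$, summing the three pieces yields the claimed $\tilde{O}(\sqrt T)$ bound.
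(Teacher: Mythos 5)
Your proposal is correct and follows essentially the same route as the paper: the surrogate-state/surrogate-cost construction, the three-way decomposition into two truncation-approximation terms plus an OCO-with-memory regret term, exponential decay of the truncation error via strong stability with $L = \Theta(\log T)$, and control of the movement term through slowly-varying OGD iterates via the framework of \citet{anava2015online} -- exactly the scheme the paper itself uses when proving Theorem \ref{thm:main} (its meta-optimization analogue of this cited result). The only cosmetic difference is bookkeeping: the paper folds the movement term into the multi-argument surrogate losses $g_{t,i}(M_{t-L,i},\ldots,M_{t-1,i})$ handled wholesale by the OCO-with-memory machinery, whereas you split it out explicitly inside your Term $\mathrm{(I)}$; the substance is identical.
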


\ignore{

\subsubsection{Significance of online nonstochastic control to meta-optimization} \label{sec:qna}
\paragraph{Significance of online nonstochastic control for online linear control.} 
Online nonstochastic control yields the first provably efficient method for LQR under general cost functions and adversarial disturbances with finite-time guarantees, which was an important open problem proposed by  \cite{rockafellar1987linear}.  The LQR is a fundamental problem and a building block for more sophisticated techniques in model predictive control.

\paragraph{Significance with respect to meta optimization. }

We formulate meta-optimization as a control problem with adversarial disturbances. Furthermore, our goal is to minimize meta-regret which is an adversarial notion with respect to the best policy in hindsight.

Regret minimization with nonstochastic disturbances was not known before the introduction of the online nonstochastic control framework, and the new methods therein. In formulating meta-optimization as a control problem, we make use of the crucial fact that these methods can tolerate adversarial disturbances, which are chosen online along a path of optimization.
}

\section{The dynamics of meta-optimization.}\label{sec:dynamics}

In this section, we introduce our dynamical systems formulation for the meta-optimization problem. For simplicity, consider the online setting where all functions $f_{t, i}$ are convex quadratic. Denote the location-independent Hessian as
$ \nabla^2 f_{t, i} (x)  = H_{t, i} \in \reals^{d\times d}. $
This setting can be generalized to convex smooth functions, as we elaborate in Section \ref{sec:smooth}.  We assume that the functions have bounded Hessian, and the gradients satisfy:
\begin{assumption}\label{assumption:gradient} There exists a smoothness parameter $\beta \ge 0$ and $b \ge 0$ such that
$ \forall i\in [N],\ t\in [T],\ x\in \reals^d,\ \ 
\|\nabla f_{t, i}(x)\| \le 2\beta\|x\|+b.$
\end{assumption} 

We first present the most general formulation of the dynamical system. For any given $h \le T/2$, consider  the time-variant, discrete linear dynamical system of dimension $2hd$ as follows

\begin{align*}
 z_{t+1, i} = \begin{bmatrix}
x_{t+1, i}\\
\vdots \\
x_{t-h+2, i} \\ 
\nabla f_{t, i}(x_{t, i}) \\
\vdots \\
\nabla f_{t-h+1, i}(x_{t-h+1, i})
\end{bmatrix} 
&= 
A_{t, i}\begin{bmatrix}
x_{t, i} \\
\vdots \\
x_{t-h+1, i} \\
\nabla f_{t-1, i}(x_{t-1, i}) \\
\vdots \\
\nabla f_{t-h, i}(x_{t-h, i}) 
\end{bmatrix} 
+
B u_{t, i}  
+ 
\begin{bmatrix}
0 \\
\vdots \\
0 \\
\nabla f_{t, i}(x_{t-1, i})\\
0\\
\vdots \\
0
\end{bmatrix}  
,\text{ where }
\end{align*}
\begin{equation}\label{def:dynamics}
A_{t, i} = \begin{bmatrix}
(1-\delta)I & 0 &\cdots & \cdots & 0 & -\eta I_d & 0 & \cdots &0\\
I_{d(h-1)} &  &  & & 0 & 0 &\cdots &\cdots& 0\\
H_{t, i}& -H_{t, i} & 0 &\cdots & 0 & 0 & 0 & \cdots &0\\
0 & \cdots &\cdots &\cdots & 0 & I_{d(h-1)} & & & 0
\end{bmatrix},\ \ 
B = \begin{bmatrix}
I_d & 0 & \cdots & 0\\
0 & 0 & \cdots & 0\\
&\vdots&\\
0 & 0 & \cdots & 0
\end{bmatrix}.
\end{equation}

The dynamical system given in Section \ref{subsec:contribution} is a special case of the general formulation above. For completeness we restate the evolution below,
\begin{equation}\label{eq:dynamics_special}
\begin{bmatrix}
x_{t+1, i} \\
x_{t, i}\\
\nabla f_{t, i}(x_{t, i})
\end{bmatrix} 
= 
\begin{bmatrix}
(1-\delta)I & 0   &  -\eta I\\
I & 0 & 0 \\
 H_{t, i} & -H_{t, i} & 0
\end{bmatrix} \begin{bmatrix}
x_{t, i} \\
x_{t-1, i}\\
\nabla f_{t-1, i}(x_{t-1, i})
\end{bmatrix} 
+ 
\begin{bmatrix}
I & 0 & 0 \\
0 & 0 & 0\\
0 & 0 & 0
\end{bmatrix}  u_{t, i} 
 + \begin{bmatrix}
 0\\
 0\\
 \nabla f_{t, i}(x_{t-1, i})
 \end{bmatrix}.
 \end{equation}
Both systems above are valid representations of the optimization process given that
$$
\nabla f_{t, i}(x_{t, i}) = \nabla f_{t, i}(x_{t-1, i}) + H_{t, i}(x_{t, i} - x_{t-1, i}).$$

The states in the general formulation consist of the last $h$ iterates and time-delayed gradients. As we show in Section \ref{sec:thm+alg}, in some settings, the more expressive the states, the richer the benchmark algorithm class. Similar to (\ref{eq:dynamics_special}), the control-free trajectory of the general formulation describes time-delayed gradient descent with weight decay. For simplicity, the main results in this paper use the formulation given by (\ref{eq:dynamics_special}), and extension to the general formulation is left as future work.

\subsection*{Resets.} The dynamical system evolves according to (\ref{eq:dynamics_special}) during an episode. However, each episode starts with the optimization iterate at an arbitrary initialization $x_{1, i}$ and we need to reset the system state before a new episode begins. To transition the system state to the new initialization, consider the following reset disturbance,
\begin{equation}\label{eq:reset}
w_{T, i} = \begin{bmatrix}
x_{1, i+1} - ((1-\delta)x_{T, i} - \eta \nabla f_{T-1, i} 
(x_{T-1, i}) + \bar{u}_{T, i})\\
x_{1, i+1} - x_{T, i}\\
\nabla f_{T, i}(x_{T-1, i}) - \nabla f_{T, i}(x_{T, i})
\end{bmatrix},\end{equation}
where $\bar{u}_{T, i}$ is the top $d$ entries of the control signal $u_{T, i}$. Under the reset disturbance, the initial state of an epoch is
$
z_{1, i} = \begin{bmatrix}
x_{1, i}^\top &
x_{1, i}^\top &
0&
\end{bmatrix}^\top,$ consistent with the meta-optimization protocol. Finally, we assume that the initializations have bounded norm.
\begin{assumption}\label{assumption:reset}
For all $i$, $x_{1, i}$ satisfies $\|x_{1, i}\| \le R$.
\end{assumption}

\subsection*{Cost functions.} For convex optimization, we consider minimizing the objective function value. In contrast to previous works that analyze other objectives, such as distance to optimality, we choose function value because it is completely known to the algorithm designer. The corresponding cost function is $c_{t,i}(z_{t, i}, u_{t, i}) = f_{t, i}(Sz_{t, i}) = f_{t, i}(x_{t, i})$, for a matrix $S$ that selects the first $d$ entries of $z_{t, i}$.


\subsection*{Stability.} Stability is a central concept in optimal control. In nonstochastic control, a stable system or access to a stabilizing controller is necessary to achieve regret that is polynomial in the state dimension \citep{black_box}. The notion of stability in this context is more refined than the conventional notion of bounded spectral radius. For LTI systems, it is referred to as strong stability, which requires the dynamics to be stable with a margin under some transformation. 

\begin{definition}[Strong stability]\label{def:strong_stability}
A system is $(\kappa, \gamma)$ strongly stable if there exist matrices $P, Q$, such that $A = PQP^{-1}$, and $\|Q\| \le 1-\gamma$, $\|P\|,\|P^{-1}\| \le \kappa$.
\end{definition}
In other words, strong stability ensures that the state decays exponentially fast under the evolution of the system. Though this condition seems stringent, the following lemma shows that if a system is stable, it is also strongly stable for some $(\kappa, \gamma)$. 
\begin{lemma}[Lemma B.3 in \cite{tomer}]\label{lem:tomer} If the system $A$ is stable with $\rho(A) < 1-\gamma$, it is also $(\kappa, \gamma)$-strongly stable, where $\kappa = \max\{\|P\|,\|P^{-1}\|\}$ with $
P = \sum_{i=0}^\infty (A^i)^\top A^i.$
\end{lemma}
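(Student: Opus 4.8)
The plan is to exhibit an explicit coordinate transformation in which $A$ becomes a genuine contraction with factor $1-\gamma$. The natural candidate is (the symmetric square root of) the Gramian-type series $P$, which is well defined precisely because $\rho(A) < 1-\gamma \le 1$ forces $\|A^i\|$ to decay geometrically, so that $\sum_{i=0}^\infty (A^i)^\top A^i$ converges to a symmetric positive-definite matrix. To land the margin at exactly $\gamma$ rather than some quantity tied to the conditioning of $P$, I would first rescale: set $\tilde A = A/(1-\gamma)$, so that $\rho(\tilde A) < 1$, and form $P = \sum_{i=0}^\infty (\tilde A^i)^\top \tilde A^i$. The key algebraic fact is that $P$ solves the discrete Lyapunov equation: peeling off the $i=0$ term and refactoring the series gives $P = I + \tilde A^\top P \tilde A$, equivalently $\tilde A^\top P \tilde A = P - I$.

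Second, I would take the transform to be $Q := P^{1/2} A P^{-1/2}$, so that $A = P^{-1/2} Q P^{1/2}$ matches the decomposition $A = PQP^{-1}$ of Definition \ref{def:strong_stability} after relabelling the conjugating matrix as $P^{-1/2}$, and then bound its spectral norm. Since $Q = (1-\gamma)\, P^{1/2}\tilde A P^{-1/2}$, I have $\|Q\|^2 = (1-\gamma)^2\, \lambda_{\max}\!\left(P^{-1/2}\tilde A^\top P \tilde A P^{-1/2}\right)$. Substituting the Lyapunov identity $\tilde A^\top P \tilde A = P - I$ collapses this to $\|Q\|^2 = (1-\gamma)^2\, \lambda_{\max}(I - P^{-1}) = (1-\gamma)^2\,(1 - \lambda_{\min}(P^{-1}))$. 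Because $P \succeq I$ (the $i=0$ term alone contributes $I$ and the remaining terms are positive semidefinite), we have $0 \prec P^{-1} \preceq I$, so $1 - \lambda_{\min}(P^{-1}) \in [0,1)$ and therefore $\|Q\| \le 1-\gamma$, exactly the contraction required.

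Finally, with $P^{1/2}$ playing the role of the conjugating matrix, the conditioning bound is immediate: $\|P^{1/2}\| = \|P\|^{1/2}$ and $\|P^{-1/2}\| = \|P^{-1}\|^{1/2}$, so $\kappa = \max\{\|P\|^{1/2}, \|P^{-1}\|^{1/2}\}$ bounds both and certifies $(\kappa,\gamma)$-strong stability, matching the stated $\kappa$ once the square root is accounted for. I expect the main subtlety to be precisely the need for the $(1-\gamma)$ rescaling: applying the Gramian construction directly to $A$ yields a margin of order $1/\|P\|$, which is governed by the non-normality of $A$ and can be far smaller than $\gamma$, whereas the scaling decouples the margin from the conditioning and lets the Lyapunov identity produce the clean $(1-\gamma)$ factor. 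A secondary point to verify carefully is the convergence and positive-definiteness of the series together with $P \succeq I$, since this is what guarantees both that $P^{\pm 1/2}$ exist and that the eigenvalue bound $1 - \lambda_{\min}(P^{-1}) < 1$ is strict.
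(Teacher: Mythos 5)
You should first note that the paper contains no proof of Lemma \ref{lem:tomer} at all: it is imported by citation (Lemma B.3 of \cite{tomer}), so your argument can only be judged against the statement itself and the way the paper later uses it. Your argument is the standard Lyapunov--Gramian route --- rescale, sum the series, peel off the $i=0$ term to get the discrete Lyapunov identity, conjugate by the square root --- and every step checks out: the series converges by Gelfand's formula since $\rho(A/(1-\gamma))<1$, the identity $\tilde A^\top P \tilde A = P - I$ is exact, $\|Q\|^2 = (1-\gamma)^2\lambda_{\max}(I-P^{-1}) \le (1-\gamma)^2$ follows because $P \succeq I$, and $\|P^{\pm 1/2}\| = \|P^{\pm 1}\|^{1/2}$ gives the conditioning bound. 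This is presumably the same mechanism as in the cited source.

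The one place you go wrong is the final reconciliation: your $\kappa$ does \emph{not} match the stated one ``once the square root is accounted for,'' because your $P$ is the Gramian of the rescaled matrix $A/(1-\gamma)$ while the lemma's $P$ is the Gramian of $A$ itself, and these are genuinely different matrices. But the fault lies with the statement, not with your proof: as printed, the lemma is false, and the subtlety you flag (a margin of order $1/\|P\|$ versus $\gamma$) is exactly why. Concretely, take $A=\begin{pmatrix}0&1\\0&0\end{pmatrix}$ and $\gamma=0.9$, so $\rho(A)=0<1-\gamma$. Then $P=\sum_i (A^i)^\top A^i=\operatorname{diag}(1,2)$ and the stated $\kappa$ equals $2$; but writing $A=e_1e_2^\top$, any invertible $\hat P$ gives $\|\hat P^{-1}A\hat P\| = \|\hat P^{-1}e_1\|\,\|\hat P^\top e_2\| \ge \big(\|\hat P\|\,\|\hat P^{-1}\|\big)^{-1} \ge 1/4 > 1-\gamma$, so no decomposition of the form required by Definition \ref{def:strong_stability} with $\|\hat P\|,\|\hat P^{-1}\|\le 2$ and $\|Q\|\le 1-\gamma$ can exist. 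Your rescaled construction --- with $\kappa=\max\{\|P\|^{1/2},\|P^{-1}\|^{1/2}\}$ for the rescaled Gramian, which in this example is $\sqrt{101}$, necessarily larger than $2$ --- is the correct form, and it is also what the paper actually needs downstream, e.g.\ to turn $\rho(A)<1-\delta/2$ from Lemma \ref{lem:stability} into strong stability with margin $\Theta(\delta)$. So keep your proof, but present the lemma with the rescaled Gramian in place of the printed $P$, rather than claiming agreement with the stated $\kappa$.
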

We show that with a proper choice of $\eta$, our dynamical system is stable even without control inputs, and is therefore strongly stable. The following lemma shows that stability holds for (\ref{eq:dynamics_special}) with $\eta$ scaling inversely as the smoothness of the function, and $\delta$ arbitrarily small. This restriction on $\eta$ is natural, since gradient descent with learning rate larger then $\frac{1}{\beta}$ diverges. 

\ignore{
\begin{lemma}\label{lem:stability_general_system}
Suppose $0 \preceq H_t \preceq \beta I$, then for $\delta\in(0, \frac{1}{2}], \eta < \frac{\delta}{16\beta}, $ for $A_{t, i}$ defined in \ref{def:dynamics}, $$
\rho(A_{t, i}) < 1-\frac{\delta}{2} < 1.$$
\end{lemma}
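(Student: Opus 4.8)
The plan is to reduce the spectral-radius bound to a scalar problem by diagonalizing the Hessian, and then to localize the roots of the resulting cubic with Rouch\'e's theorem. (I will work with the system (\ref{eq:dynamics_special}), which is the one the paper actually uses; the general case (\ref{def:dynamics}) is addressed at the end.)

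First I would exploit the fact that every block of $A_{t,i}$ in (\ref{eq:dynamics_special}) is a function of the symmetric PSD matrix $H_{t,i}$: the blocks are $(1-\delta)I$, $-\eta I$, $I$, $\pm H_{t,i}$, and $0$, which all commute. Writing $H_{t,i} = U\Lambda U^\top$ with $U$ orthogonal and $\Lambda = \mathrm{diag}(\lambda_1,\dots,\lambda_d)$, $\lambda_j\in[0,\beta]$, the orthogonal similarity $\mathrm{diag}(U,U,U)$ followed by a coordinate permutation block-diagonalizes $A_{t,i}$ into $d$ independent $3\times 3$ blocks
$$ A_\lambda = \begin{bmatrix} 1-\delta & 0 & -\eta \\ 1 & 0 & 0 \\ \lambda & -\lambda & 0 \end{bmatrix}, \qquad \lambda \in \{\lambda_1,\dots,\lambda_d\}. $$
Since similarity preserves the spectrum, $\rho(A_{t,i}) = \max_j \rho(A_{\lambda_j})$, so it suffices to bound $\rho(A_\lambda)$ for a single $\lambda\in[0,\beta]$. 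A direct expansion of the characteristic polynomial of $A_\lambda$ gives the convenient factored form
$$ p_\lambda(\mu) = \mu^3 - (1-\delta)\mu^2 + \eta\lambda\mu - \eta\lambda = \mu^2\big(\mu-(1-\delta)\big) + \eta\lambda(\mu-1), $$
whose first summand $q(\mu) := \mu^2(\mu-(1-\delta))$ has zeros $0,0,1-\delta$, all strictly inside the disk $D = \{\mu : |\mu| < r\}$ with $r := 1-\tfrac{\delta}{2}$ (note $1-\delta < r < 1$). The goal is to show that all three zeros of $p_\lambda$ also lie in $D$.

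The key step is a Rouch\'e comparison of $p_\lambda$ against $q$ on the circle $|\mu| = r$. There $r\in[\tfrac34,1)$ for $\delta\in(0,\tfrac12]$, and since $1-\delta$ is real and interior to the circle, the reverse triangle inequality gives $|\mu-(1-\delta)| \ge r-(1-\delta) = \tfrac{\delta}{2}$, hence $|q(\mu)| \ge \tfrac{\delta}{2}r^2$. For the perturbation, $|\eta\lambda(\mu-1)| \le \eta\lambda(r+1) < \tfrac{\delta}{16}(r+1)$ using $\eta\lambda \le \eta\beta < \delta/16$. Because $8r^2 - r - 1 \ge 0$ for all $r\ge\tfrac34$, we obtain $\tfrac{\delta}{16}(r+1) \le \tfrac{\delta}{2}r^2 \le |q(\mu)|$, so $|\eta\lambda(\mu-1)| < |q(\mu)|$ everywhere on $|\mu|=r$. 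Rouch\'e's theorem then forces $p_\lambda$ and $q$ to have the same number of zeros (three, with multiplicity) inside $D$; as $\deg p_\lambda = 3$, every zero lies in $D$, i.e. $\rho(A_\lambda) < r = 1-\tfrac{\delta}{2}$, and taking the maximum over $\lambda_j$ finishes the claim.

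I expect the only real work to be bookkeeping: verifying the dominance inequality $|\eta\lambda(\mu-1)| < |q(\mu)|$ with explicit constants uniformly over $\lambda\in[0,\beta]$ and $\delta\in(0,\tfrac12]$, which is exactly where the hypothesis $\eta < \delta/(16\beta)$ is consumed (the slack in $8r^2-r-1\ge0$ shows the constant $16$ is comfortable, not tight). Handling complex roots is what would make naive term-by-term perturbation of the cubic awkward, so I deliberately choose Rouch\'e over the Schur--Cohn/Jury criterion to sidestep that. Finally, I would note that the general $2hd$-dimensional system (\ref{def:dynamics}) is treated identically: the same simultaneous diagonalization (again all blocks are functions of $H_{t,i}$) reduces it to $d$ scalar companion-type blocks of size $2h$, and an analogous Rouch\'e comparison against the control-free polynomial (the time-delayed gradient descent with weight decay dynamics, whose dominant mode is still governed by $1-\delta$) yields the same bound.
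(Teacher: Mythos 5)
Your proof is correct, but it localizes the roots by a genuinely different device than the paper. Both arguments funnel into the same scalar cubic $\mu^3-(1-\delta)\mu^2+\eta\lambda\mu-\eta\lambda$: you get there by simultaneous diagonalization of $H_{t,i}$ and a permutation into $3\times 3$ blocks, which is arguably cleaner than the paper's route via the block-determinant identities of \cite{determinant} (and avoids the implicit division by $\lambda$ in inverting $S_{33}=-\lambda I$). The paper then finishes (in the proof of Lemma \ref{lem:stability}) by rewriting the cubic as $(\mu^2+c_i)(\mu-1+\delta)=\delta c_i$ and deriving a contradiction from modulus inequalities, whereas you compare $p_\lambda$ against $q(\mu)=\mu^2\bigl(\mu-(1-\delta)\bigr)$ on the circle $|\mu|=r=1-\delta/2$ and invoke Rouch\'e; your dominance inequality checks out, since $|q|\ge \tfrac{\delta}{2}r^2$, $|p_\lambda-q|<\tfrac{\delta}{16}(r+1)$, and $8r^2\ge r+1$ for $r\ge \tfrac34$. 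One trade-off worth flagging: the paper's contradiction argument yields the same conclusion under the weaker, $\delta$-free condition $\eta\le\tfrac{1}{8\beta}$, while your Rouch\'e split genuinely consumes the $\delta$-dependent hypothesis --- with only $\eta\lambda\le \tfrac18$ the required bound $\tfrac18(r+1)\le\tfrac{\delta}{2}r^2$ fails as $\delta\to 0$. Since the statement at hand assumes $\eta<\tfrac{\delta}{16\beta}$, this is fine here, but your method as written would not recover Lemma \ref{lem:stability} under its stated hypothesis. Finally, regarding the general system (\ref{def:dynamics}) --- for which the paper itself supplies no proof, the lemma being commented out --- your closing sketch is right but can be made sharper and simpler than ``an analogous Rouch\'e comparison'': after diagonalization, the coordinates beyond $(x_t,x_{t-1},\nabla f_{t-1})$ are passive delay lines that never feed back into the first, second, and $(h+1)$-th coordinates, so each $2h\times 2h$ block is block-triangular with the same $3\times 3$ core and a nilpotent shift; its characteristic polynomial is $\mu^{2h-3}p_\lambda(\mu)$, the extra eigenvalues are all zero, and the bound follows from the special case with no additional work.
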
 

The formulation given in Section \ref{subsec:contribution}, a special case of the general formulation, allows for choices of $\eta$ independent of $\delta$.
}

\begin{lemma}\label{lem:stability}
Suppose $0 \preceq H_{t, i} \preceq \beta I$, then for $\eta \le \frac{1}{8\beta}, \delta\in(0, \frac{1}{2}]$, we have 
$$\rho\left(\begin{bmatrix}
(1-\delta)I   &  0 & -\eta I\\
I & 0 & 0\\
H_{t, i} & -H_{t, i} & 0
\end{bmatrix} \right) < 1-\frac{\delta}{2} < 1.
$$
\end{lemma}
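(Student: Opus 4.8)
The plan is to exploit the fact that every block of the matrix is either a scalar multiple of $I$ or equal to $\pm H$, so all blocks are simultaneously diagonalizable. Since $H_{t,i}$ is a symmetric PSD Hessian with $0 \preceq H_{t,i} \preceq \beta I$, write $H_{t,i} = U\Lambda U^\top$ with $U$ orthogonal and $\Lambda = \mathrm{diag}(\lambda_1,\dots,\lambda_d)$, $\lambda_j\in[0,\beta]$. Conjugating the $3d\times 3d$ matrix by $\mathcal{U} = \mathrm{blkdiag}(U,U,U)$ replaces $H_{t,i}$ by $\Lambda$ and leaves the $I$- and scalar-blocks unchanged. The resulting matrix, after a coordinate permutation that groups the $j$-th entry of each of the three $d$-blocks together, becomes block-diagonal with $3\times 3$ blocks
$$ A_j = \begin{bmatrix} 1-\delta & 0 & -\eta \\ 1 & 0 & 0 \\ \lambda_j & -\lambda_j & 0 \end{bmatrix}. $$
Because similarity preserves the spectrum, $\rho(A) = \max_{j} \rho(A_j)$, so it suffices to bound $\rho(A_j)$ uniformly over $\lambda := \lambda_j \in [0,\beta]$.

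Next I would compute the characteristic polynomial of $A_j$ by cofactor expansion, obtaining $\det(zI - A_j) = z^3 - (1-\delta)z^2 + \eta\lambda\, z - \eta\lambda$. The target is to show every root $z$ satisfies $|z| < r$, where $r := 1-\delta/2$. To turn this into a unit-disk question I substitute $z = r w$ and divide by $r^3$, reducing to showing that the monic cubic $q(w) = w^3 - a w^2 + b w - c$ has all roots strictly inside the unit disk, where $a = (1-\delta)/r$, $b = \eta\lambda/r^2$, and $c = \eta\lambda/r^3$. A quick check gives $a = (1-\delta)/(1-\delta/2) \in [2/3,1)$ and, using $\eta\lambda \le \eta\beta \le \tfrac18$ together with $r \ge 3/4$, both $b$ and $c$ are small (e.g.\ $c \le \tfrac{8}{27}$).

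Finally I would apply the Jury stability criterion (equivalently Schur–Cohn) for a real cubic $w^3 + a_2 w^2 + a_1 w + a_0$ with $a_2=-a$, $a_1=b$, $a_0=-c$, namely: $q(1)>0$, $q(-1)<0$, $|a_0|<1$, and $1 - a_0^2 > |a_0 a_2 - a_1|$. The conditions $q(-1) = -1-a-b-c<0$ and $|a_0| = c < 1$ are immediate from nonnegativity and the crude bound on $c$; the condition $q(1) = 1 - a + b - c > 0$ reduces cleanly, after substituting $1-a = \tfrac{\delta}{2r}$ and $c-b = \tfrac{\eta\lambda\delta}{2r^3}$, to $r^2 > \eta\lambda$, which holds since $r^2 \ge 9/16 > 1/8$. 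The delicate condition is the last one: here $a_0 a_2 - a_1 = ca - b = \tfrac{\eta\lambda}{r^4}\big((1-\delta) - r^2\big)$, and I expect the main obstacle to be that this only works because of the exact cancellation $(1-\delta) - r^2 = -\delta^2/4$, giving $|a_0 a_2 - a_1| = \eta\lambda\delta^2/(4r^4)$, which the chosen constants $\eta\beta\le\tfrac18$, $\delta\le\tfrac12$ make far smaller than $1 - c^2$. This near-tight cancellation is what pins down the factor $1-\delta/2$ and is the one step requiring careful bookkeeping rather than a crude estimate; the remaining inequalities are slack. Since the bound holds for every $\lambda\in[0,\beta]$, it holds for $\max_j \rho(A_j) = \rho(A)$, completing the argument.
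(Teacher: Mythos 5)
Your proof is correct, and it diverges from the paper's in its core step. The reduction is effectively shared: both arguments diagonalize $H_{t,i}$ and reduce the spectral question to the same scalar cubic $z^3-(1-\delta)z^2+\eta\lambda z-\eta\lambda$ per eigenvalue $\lambda\in[0,\beta]$ of $H_{t,i}$ (the paper gets there via a block-determinant identity from its reference \cite{determinant}, while your simultaneous block-diagonalization plus permutation is cleaner and avoids dividing by $\lambda$, so the case $\lambda=0$ needs no separate treatment). Where you genuinely differ is in locating the roots: the paper argues by contradiction, factoring the cubic as $(\lambda^2+c)(\lambda-1+\delta)=\delta c$ with $c=\eta\lambda$, assuming $|\lambda|\ge 1-\delta/2$, and using triangle-inequality bounds on complex moduli to force $3|c|\ge|\lambda|^2\ge 9/16$, contradicting $|c|\le 1/8$; you instead rescale $z=(1-\delta/2)w$ and verify the four Jury/Schur--Cohn conditions for the rescaled monic cubic. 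The paper's route is shorter and fully self-contained (nothing beyond the triangle inequality), while yours mechanizes the verification into four explicit inequality checks and would adapt more systematically if one wanted, say, to optimize the contraction radius. One correction of emphasis rather than substance: the fourth Jury condition is not actually delicate. Even discarding the cancellation $(1-\delta)-r^2=-\delta^2/4$ and using the crude bound $|(1-\delta)-r^2|\le\delta\le\tfrac12$, you get $|a_0a_2-a_1|\le \frac{\eta\lambda\delta}{r^4}\le\frac{16}{81}$, which is still far below $1-c^2\ge \frac{665}{729}$; so all four conditions hold with slack, and the factor $1-\delta/2$ is not "pinned down" by that cancellation in the way you suggest.
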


\begin{proof}
Let $A = \begin{bmatrix}
(1-\delta)I   &  0 & -\eta I\\
I & 0 & 0\\
H & -H & 0
\end{bmatrix}$
By definition, if $\lambda$ is an eigenvalue of $A$, then 
$$
\det\left(\begin{bmatrix}
(1-\delta - \lambda)I   &  0 & -\eta I\\
I & -\lambda I & 0\\
H & -H & -\lambda I
\end{bmatrix}\right) = 0.$$ We can then use Section 4.2 of \cite{determinant} to compute the determinant of $A - \lambda I$. Write
$
A - \lambda I = \begin{bmatrix}
S_{11} & S_{12} & S_{13}\\
S_{21} & S_{22} & S_{23} \\
S_{31} & S_{32} & S_{33}
\end{bmatrix}$, we have 
\begin{align*}S_{11} - S_{13}S_{33}^{-1}S_{31} &= (1-\delta - \lambda) I - (-\eta I)(-\frac{1}{\lambda}I)H = (1-\delta-\lambda) I - \frac{\eta}{\lambda} H.\\ S_{12} - S_{13}S_{33}^{-1}S_{32} &= 0 - (-\eta I)(-\frac{1}{\lambda}I)(-H) = \frac{\eta}{\lambda}H. \\
S_{22} - S_{23}S_{33}^{-1}S_{32} &= -\lambda I.\\
S_{21} - S_{23}S_{33}^{-1}S_{31} &= I.
\end{align*}
By Equation 4.8 in \cite{determinant},
\begin{align*}
\det(A - \lambda I) &= \det((1-\delta-\lambda)I - \frac{\eta}{\lambda}H - \frac{\eta}{\lambda}H(-\frac{1}{\lambda}I))\det(-\lambda I)^2\\
&= \det((1-\delta-\lambda)I - \frac{\eta}{\lambda}H + \frac{\eta}{\lambda^2}H)\det(-\lambda I)^2.
\end{align*}
Therefore, if $\lambda$ is an eigenvalue of $A$, it must hold that $\det((1-\delta-\lambda)I - \frac{\eta}{\lambda}H + \frac{\eta}{\lambda^2}H) = 0.$ Let $H = U\Sigma U^\top$ be the eigenvalue decomposition of $H$. Since
$$\det((1-\delta-\lambda)I - \frac{\eta}{\lambda}H + \frac{\eta}{\lambda^2}H) = \prod_{i=1}^d (1-\lambda - \delta - \frac{\eta}{\lambda}\Sigma_{ii} + \frac{\eta}{\lambda^2}\Sigma_{ii}),
$$
it follows that for some $i$,
$1-\lambda - \delta - \frac{\eta}{\lambda}\Sigma_{ii} + \frac{\eta}{\lambda^2}\Sigma_{ii} = 0.
$
Let $\eta\Sigma_{ii} = c_i$, and by our choice of $\eta$, $|c_i|\le \frac{1}{8}$ for all $i\in [d]$. 
We can re-write the above cubic equation 
$$
\lambda^3 - (1-\delta)\lambda^2 + c_i\lambda - c_i = 0,$$
and we will prove the lemma by contradiction. First, observe that $\lambda^3 - (1-\delta)\lambda^2 + c_i\lambda - c_i = 0 \Rightarrow(\lambda^2 + c_i)(\lambda - 1 + \delta) = \delta c_i.$ Suppose $|\lambda|\ge 1-\delta/2\ge3/4.$ By triangle inequality of the complex modulus, $|\lambda-1+\delta| \ge |\lambda| - |1-\delta|\ge \delta/2.$ Since $|\lambda^2 + c_i||\lambda-1+\delta| = \delta|c_i|\ge |\lambda^2+c_i|\delta/2$, it must be that $|c_i| \ge |\lambda^2 + c_i|/2 \ge (|\lambda|^2 - |c_i|)/2$, and $3|c_i|\ge |\lambda|^2\ge 9/16$, which is a contradiction.
\end{proof}


The notion of strong stability cannot be directly applied to LTV dynamical systems, since the dynamics are time-varying. We instead consider sequential stability, a condition on the product of the dynamics matrices. We assume that our system satisfies this condition. 
\begin{definition}[Sequentially stable] A time-varying linear dynamical system is $(\kappa, \gamma)$ sequentially stable if  for all intervals $I = [r, s]\subseteq [T]$,
$
\left \|\prod_{t=s}^{r} A_t\right \| \le \kappa^2 (1-\gamma)^{|I|}.$
\end{definition}

\begin{assumption} \label{assumption:sequential_stable}For settings where the functions $f_{t, i}$ are changing, we assume that the resulting dynamical system is $(\kappa, \gamma)$ sequentially stable with $\kappa \ge 1$. 
\end{assumption}
This assumption is standard in the literature of nonstochastic control for LTV systems \citep{ltv, unknown_ltv}.

\subsection{Optimization algorithms as linear policies.} \label{sec:policy_class}
Given the dynamical system formulation, we next relate optimization algorithms to control policies. To illustrate the nature of optimizers that can be captured by this formulation, we consider stabilizing linear policies for convex quadratics in the deterministic setting, where the objective function $f$ is invariant and the system is LTI. We consider linear policies without loss of generality since the class of DFCs can approximate the class of linear policies as stated in Section \ref{sec:opt_and_control}.

Let $u_t = Kx_t$ be a stabilizing linear policy, then it is a linear function of the current optimization iterate, the past iterate, and the pseudo-gradient. This functional form allows linear policies to 
capture gradient descent, momentum, and preconditioning methods on pseudo-gradients. However, since $K$ is stabilizing, it must satisfy 
$\rho(A+BK) <1,$ and we proceed to characterize the permissible set of values for $K$ using properties of the objective function. We can write
$$
A+BK = \begin{bmatrix}
(1-\delta)I + K_1 & K_2 & -\eta I + K_3\\
I & 0 & 0\\
H & -H & 0
\end{bmatrix},$$
where $\eta, \delta$ are user-defined parameters, and $H$ is the Hessian of $f$.
If $\lambda$ is an eigenvalue of $A+BK$, then
\al{\det\left(\begin{bmatrix}(1-\delta - \lambda)I + K_1 & K_2 & -\eta I + K_3\\
I & -\lambda I & 0\\
H & -H & -\lambda I
\end{bmatrix}\right) = 0.
}
We can compute the determinant by methods developed in \cite{determinant}. Let 
\al{
M_1 =(1-\delta - \lambda)I + K_1 + \frac{(K_3  - \eta I)H}{\lambda},\ \ \ M_2 = K_2 - \frac{(K_3 - \eta I)H}{\lambda},
}
then $
\det(A+BK - \lambda I) = \det(M_1 + \frac{M_2}{\lambda})\det(-\lambda I)^2.$ For $\lambda \neq 0$, this implies that $\det(M_1 + \frac{M_2}{\lambda}) = 0$. Expanding the expression, we have
$$
\det\left((1-\delta - \lambda) I + K_1 + \frac{\lambda-1}{\lambda^2}(K_3 -\eta I)H + \frac{K_2}{\lambda}\right) = 0,$$
suggesting that $\lambda + \delta - 1$ is an eigenvalue of $K_1 + \frac{\lambda-1}{\lambda^2}(K_3 -\eta I)H + \frac{K_2}{\lambda}$. We show in the following subsections that non-trivial algorithms can be expressed as stabilizing linear policies, by upper bounding $|\lambda|$ using this relationship.
\subsubsection{Gradient descent with fixed learning rate.}
We can take $K_1 = K_2 = 0$, $K_3 = \eta 'I$ to encode GD with learning rate $\eta' - \eta$. By the following lemma, any $\eta'$ such that $|\eta' - \eta| \le 1/8\beta$ is a stabilizing linear policy, where $\beta = \|H\|$.
\begin{lemma}
Suppose the conditions in Lemma \ref{lem:stability} are satisfied. Let $K_3 = \eta' I$, then for $\eta'$ such that $|\eta' - \eta| \le 1/8\beta$, for any $\lambda \in \mathbb{C}$ where $\lambda + \delta - 1$ is an eigenvalue of $\frac{\lambda-1}{\lambda^2}(\eta' -\eta)H$, we have $|\lambda| < 1-\delta/2$.
\end{lemma}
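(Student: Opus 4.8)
The plan is to reduce the stated eigenvalue condition to a scalar cubic in $\lambda$ that coincides, after a harmless sign change, with the cubic already analyzed in the proof of Lemma~\ref{lem:stability}, and then reuse that contradiction argument verbatim. First I would diagonalize the Hessian, writing $H = U\Sigma U^\top$ with $0 \le \Sigma_{jj} \le \beta$. Because $K_1 = K_2 = 0$ and $K_3 = \eta' I$, the matrix $\frac{\lambda-1}{\lambda^2}(\eta'-\eta)H$ shares its eigenbasis with $H$, so the hypothesis that $\lambda + \delta - 1$ is one of its eigenvalues is equivalent to the existence of some eigenvalue $\sigma = \Sigma_{jj}$ of $H$ for which
$$\lambda + \delta - 1 = \frac{\lambda - 1}{\lambda^2}(\eta'-\eta)\sigma.$$
Setting $c := (\eta'-\eta)\sigma$ and using $|c| \le |\eta'-\eta|\,\beta \le \tfrac{1}{8}$, which follows from $0 \le \sigma \le \beta$ and the hypothesis $|\eta'-\eta| \le \tfrac{1}{8\beta}$, this is a scalar equation in $\lambda$ with a single bounded parameter.

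Next I would clear denominators, which is legitimate since $\lambda \neq 0$ (the factor $\lambda^2$ appears in the denominator of the hypothesis, and the case $\lambda = 0$ trivially satisfies $|\lambda| < 1-\delta/2$). Multiplying through by $\lambda^2$ yields the cubic
$$\lambda^3 - (1-\delta)\lambda^2 - c\lambda + c = 0.$$
Writing $\tilde c := -c$, this becomes $\lambda^3 - (1-\delta)\lambda^2 + \tilde c\,\lambda - \tilde c = 0$ with $|\tilde c| = |c| \le \tfrac{1}{8}$, which is exactly the cubic appearing in the proof of Lemma~\ref{lem:stability}, with $\tilde c$ playing the role of $c_i$. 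I can therefore invoke the identical factorization $(\lambda^2 + \tilde c)(\lambda - 1 + \delta) = \delta \tilde c$ and the ensuing argument by contradiction: assuming $|\lambda| \ge 1 - \delta/2 \ge \tfrac34$ (using $\delta \le \tfrac12$) forces $|\lambda - 1 + \delta| \ge \delta/2$ by the triangle inequality, hence $|\lambda^2 + \tilde c| \le 2|\tilde c|$, and the reverse triangle inequality then gives $3|\tilde c| \ge |\lambda|^2 \ge \tfrac{9}{16}$, contradicting $|\tilde c| \le \tfrac18$.

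The substance here is bookkeeping rather than a genuinely new obstacle. The only points requiring care are (i) confirming that the two $K$-dependent terms that complicate the general characterization of Section~\ref{sec:policy_class} vanish under $K_1 = K_2 = 0$, so that the eigenvalue relation collapses to the clean scalar form above, and (ii) verifying that the sign discrepancy between the cubic obtained here and the one in Lemma~\ref{lem:stability} is absorbed entirely by the substitution $\tilde c = -c$, since only $|\tilde c|$ enters the final estimate. The mildest pitfall is tracking signs through the determinant expansion inherited from Section~\ref{sec:policy_class} and ensuring the \emph{strict} inequality $|\lambda| < 1 - \delta/2$ survives; it does, because the contradiction excludes $|\lambda| \ge 1 - \delta/2$ outright.
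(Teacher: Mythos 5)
Your proof is correct and follows essentially the same route as the paper's: diagonalize $H$ to reduce the eigenvalue condition to a scalar relation, clear denominators, and run the same factorization-plus-contradiction argument from Lemma \ref{lem:stability} with the bound $|(\eta'-\eta)\Sigma_{ii}| \le 1/8$. The sign substitution $\tilde c = -c$ is a purely cosmetic repackaging of the paper's direct rearrangement into $|\lambda^2 - (\eta'-\eta)\Sigma_{ii}|\,|\lambda - 1 + \delta| = |\eta-\eta'|\Sigma_{ii}\,\delta$, and the numerical contradiction ($3/16$ versus $1/8$) is identical.
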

\begin{proof}
The proof is similar to the proof of Lemma \ref{lem:stability}. Let $\Sigma_{ii}$ denote the $i$-th eigenvalue of $H$, then $\lambda$ must satisfy, for some $i$,
$
\frac{\lambda-1}{\lambda^2}(\eta' - \eta) \Sigma_{ii} = \lambda  + \delta - 1.$
Rearranging and taking the absolute value, we obtain
$
|\eta-\eta'|\Sigma_{ii}\delta = |\lambda^2 - (\eta'-\eta)\Sigma_{ii}||\lambda -1+\delta|.
$

Suppose $|\lambda| \ge 1-\delta/2$, then 
\al{
|\eta-\eta'|\Sigma_{ii}\delta &\ge (|\lambda|^2 - |\eta'-\eta|\Sigma_{ii})(|\lambda| - (1-\delta))\ge (|\lambda|^2 - |\eta'-\eta|\Sigma_{ii})\delta/2,
}
and $2|\eta - \eta'|\Sigma_{ii} \ge |\lambda|^2 - |\eta'-\eta|\Sigma_{ii} \Rightarrow  |\eta - \eta'|\Sigma_{ii} \ge (1-\delta/2)^2/3$. Since $|\eta - \eta'| \le 1/8\beta$, $|\eta - \eta'|\Sigma_{ii} \le 1/8$, while the right hand side is at least $3/16$, and we have a contradiction.
\end{proof}

\subsubsection{Momentum.}


In this case, $K_3=0$, $K_1 = -vI, K_2 = vI$ describes momentum with parameter $v$, and we show for $v \le \delta$, the corresponding linear policy is stabilizing.
\begin{lemma}
Suppose $\eta, \delta$ satisfy the conditions in Lemma \ref{lem:stability}. Then for $v \in [0, \delta] $, for any $\lambda \in \mathbb{C}$ where $\lambda + \delta - 1$ is an eigenvalue of $-vI - \frac{\eta(\lambda-1)}{\lambda^2}H +\frac{v}{\lambda} I$, we have $|\lambda| < 1-\delta/4$.
\end{lemma}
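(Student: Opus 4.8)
The plan is to follow the template of the two preceding lemmas: reduce the matrix eigenvalue condition to a scalar polynomial condition via the eigendecomposition of $H$, package it into a convenient factorized identity, and then argue by contradiction that no admissible $\lambda$ can have $|\lambda| \ge 1 - \delta/4$. Since $H = U\Sigma U^\top$ is symmetric and PSD, the matrix $-vI - \frac{\eta(\lambda-1)}{\lambda^2}H + \frac{v}{\lambda}I$ is simultaneously diagonalized by $U$, so the hypothesis that $\lambda + \delta - 1$ is one of its eigenvalues means that for some index $i$, writing $c_i := \eta\Sigma_{ii}$,
$$\lambda + \delta - 1 = -v - \frac{c_i(\lambda - 1)}{\lambda^2} + \frac{v}{\lambda}.$$
Crucially, because $0 \preceq H \preceq \beta I$ and $\eta \le \tfrac{1}{8\beta}$ we have $0 \le c_i \le \tfrac18$; the \emph{nonnegativity} of $c_i$ (which was not needed in Lemma~\ref{lem:stability} or the gradient-descent lemma) will be essential here. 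Clearing the denominator $\lambda^2$ and regrouping, as in the determinant computation preceding this subsection, this is equivalent to
$$(\lambda^2 + v\lambda + c_i)(\lambda - 1 + \delta) = \delta(v\lambda + c_i),$$
i.e. to the cubic $\lambda^3 + (v - 1 + \delta)\lambda^2 + (c_i - v)\lambda - c_i = 0$. This is the momentum analogue of the factorized identity appearing in the gradient-descent lemma.

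I would then argue by contradiction. Suppose $|\lambda| \ge 1 - \delta/4$. Since $\delta \le \tfrac12$ we have $1 - \delta/4 \ge 1 - \delta$, so $|\lambda - 1 + \delta| = |\lambda - (1-\delta)| \ge |\lambda| - (1-\delta) \ge 3\delta/4$. Taking moduli in the factorized identity and dividing gives $|\lambda^2 + v\lambda + c_i| \le \tfrac43|v\lambda + c_i|$. To turn this into a contradiction I would use that the two roots $\mu_1, \mu_2$ of $\lambda^2 + v\lambda + c_i$ satisfy $\mu_1 + \mu_2 = -v$ and $\mu_1\mu_2 = c_i \ge 0$, so both lie in the disk of radius $\max(v, \sqrt{c_i}) \le \tfrac12$: for complex roots $|\mu_j| = \sqrt{c_i}$, and for real roots both are nonpositive with modulus at most $v$ (this last bound is exactly where $c_i \ge 0$ enters). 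Hence on the region $|\lambda| \ge 1 - \delta/4 \ge \tfrac78$ each factor obeys $|\lambda - \mu_j| \ge (1-\delta/4) - \tfrac12$, yielding a genuine lower bound on $|\lambda^2 + v\lambda + c_i|$ that must then be reconciled with the upper bound $\tfrac43|v\lambda + c_i| \le \tfrac43(v|\lambda| + c_i)$.

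The main obstacle is precisely this reconciliation. Unlike the gradient-descent lemma, where the right-hand side of the factorized identity is $-a\delta$ with $|a|\le\tfrac18$ genuinely small so that crude triangle-inequality bounds close immediately, here the momentum coefficient $v$ may be as large as $\delta$, the same order as the stability margin itself. As $\delta \to 0$ the dangerous root approaches $\lambda = 1$ (the cubic degenerates to $(\lambda^2 + c_i)(\lambda - 1)$), and a first-order expansion places it at $\lambda \approx 1 - \frac{\delta}{1 + v + c_i}$, which lies inside $1 - \delta/4$ only because $1 + v + c_i < 4$; the margin is merely $O(\delta)$. Consequently, bounding $|\lambda^2 + v\lambda + c_i|$ and $|\lambda - 1 + \delta|$ separately by their worst cases over the circle $|\lambda| = 1 - \delta/4$ is too lossy, since those worst cases occur at different arguments of $\lambda$.

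I expect the cleanest fix keeps the product together and exploits $c_i \ge 0$ through the real part of $\lambda$ (squaring $|\lambda^2 + v\lambda + c_i| \le \tfrac43|v\lambda+c_i|$ and tracking the cross terms $\mathrm{Re}(\lambda)$, $\mathrm{Re}(\lambda^2)$), or equivalently runs a Rouché argument on the circle $|\lambda| = 1 - \delta/4$ comparing the cubic against $q(\lambda) := (\lambda^2 + v\lambda + c_i)(\lambda - 1 + \delta)$, whose three zeros $\mu_1, \mu_2, 1-\delta$ are all manifestly strictly inside the disk. In that framing the single delicate inequality to verify is the perturbation bound $\delta|v\lambda + c_i| < |(\lambda^2 + v\lambda + c_i)(\lambda - 1 + \delta)|$ on $|\lambda| = 1 - \delta/4$, and establishing it uniformly in $v \in [0,\delta]$, $c_i \in [0,\tfrac18]$, $\delta \in (0,\tfrac12]$ is the crux of the argument.
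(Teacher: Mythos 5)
Your reduction to the scalar equation and the factorized identity $(\lambda^2+v\lambda+c_i)(\lambda-1+\delta)=\delta(v\lambda+c_i)$ are both correct, and they are equivalent to the paper's starting point (the paper works with the same cubic $\lambda^3+(v-1+\delta)\lambda^2+(c_i-v)\lambda-c_i=0$). But your proposal is not a proof: you yourself isolate the ``crux'' inequality $\delta|v\lambda+c_i|<|(\lambda^2+v\lambda+c_i)(\lambda-1+\delta)|$ on the circle $|\lambda|=1-\delta/4$ and leave it unverified, offering only candidate strategies (a Rouch\'e comparison, or tracking $\mathrm{Re}(\lambda)$ after squaring) that are never carried out. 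What you do establish is only the negative observation that \emph{your} grouping plus term-by-term triangle inequalities cannot close when $v$ is of order $\delta$. That observation is accurate, but it leaves a genuine gap exactly at the hard part of the lemma.

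The idea you are missing is a different grouping of the same cubic: the paper absorbs the momentum term into the \emph{linear} factor rather than the quadratic one, writing $(\lambda^2+c_i-v)\bigl(\lambda-(1-\delta-v)\bigr)=v-(\delta+v)(v-c_i)$. Under the assumption $|\lambda|\ge 1-\delta/4$, the linear factor then obeys $|\lambda-(1-\delta-v)|\ge \frac{3\delta}{4}+v$ -- a lower bound that \emph{grows with} $v$ -- and this extra $+v$ is precisely what compensates for the $v$-dependence of the right-hand side; the paper then finishes with a case split on $v\le c_i$ versus $v>c_i$, using $c_i\le 1/8$ for term-by-term estimates. For what it is worth, your diagnosis that the bound is genuinely delicate near $v\approx\delta$ is borne out even there: in the case $v\le c_i$ the paper's final step asserts that $\frac{\delta}{4}+\frac{5v}{4}\ge\frac{49}{64}\bigl(\frac{3\delta}{4}+v\bigr)$ is a contradiction whenever $v\le\delta$, but this inequality fails (i.e., a contradiction actually results) only when $v<\frac{83}{124}\delta$, so for $v\in\bigl(\frac{83}{124}\delta,\delta\bigr]$ the paper's own constants do not close either. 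Still, the verdict on your submission is that it stops at ``the crux remains to be verified,'' which is an incomplete argument rather than an alternative proof.
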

\begin{proof}
Let $\Sigma_{ii}$ denote the $i$-th eigenvalue of $H$, and let $c_i = \eta\Sigma_{ii}$. Then $\lambda$ must satisfy, for some $i$,
$
\frac{v}{\lambda} - v - \frac{c_i(\lambda-1)}{\lambda^2} = \lambda  + \delta - 1.$
Rearranging and taking the absolute value, we obtain
\al{
|\lambda^2 -v + c_i||\lambda-(1-\delta-v)| = |v-(\delta+v)(v-c_i)|.
}
Since $0 \le v\le 1/2$, the right hand side is equal to $v-(\delta+v)(v-c_i)$. This can be seen as follows: if $v \le c_i$, then the statement is true; if $v \ge c_i$, then $v \ge v-c_i \ge(\delta+v)(v-c_i)$. Assume $|\lambda| \ge 1-\delta/4$, and we show the lemma by contradiction. We first write, 
\al{c_i+(1-\delta-v)(v-c_i) &\ge |\lambda^2 -v + c_i||\lambda-(1-\delta-v)| \\
&\ge (|\lambda|^2 -|v -c_i|)(|\lambda| - (1-\delta-v))\\
&\ge ((1-\delta/4)^2 -|v -c_i|)((1-\delta/4)-(1-\delta-v))
}
If $v\le c_i$, the expression becomes
\al{
c_i + 2(1-\delta-v)(v-c_i) + (c_i-v)(1-\frac{\delta}{4}) &= 2(\delta+v)(c_i-v) +v - \frac{\delta}{4}(c_i-v)\\
&\ge (1-\frac{\delta}{4})^2(\frac{3\delta}{4} +v).
}
Note that the left hand side is upper bounded by $\frac{\delta+v}{4} + v$ because $c_i \le 1/8$, and we have \al{
\frac{\delta}{4} + \frac{5v}{4} \ge \frac{49}{64}(\frac{3\delta}{4} + v),
}
which is a contradiction, because $v\le \delta$. Now, suppose $v> c_i$, we have
\al{
c_i + (v-c_i)(1-\delta/4) + (1-\delta-v)(1-\delta/4)^2\ge (1-\delta/4)^3.
}
We upper bound the left hand side using $1\ge 1-\delta/4$, and obtain $1-\delta \ge(1-\delta/4)^3$, which is a contradiction for $\delta \in [0, 1/2)$.
\end{proof}

\subsubsection{Preconditioned methods.}
Similar to the learning rate case, we set $K_1 = K_2 = 0$, and $K_3 = \eta I -P$, where $P$ is the preconditioner. The following lemma shows that for $P$ such that $\rho(PH) \le 1/8$, the linear policy specified by $K_1, K_2, K_3$ is stabilizing.
\begin{lemma}
Suppose $\eta, \delta$ satisfy the conditions in Lemma \ref{lem:stability}. Then for $P$ such that $\rho(PH) \le 1/8$, for any $\lambda \in \mathbb{C}$ where $\lambda + \delta - 1$ is an eigenvalue of $-\frac{\lambda-1}{\lambda^2}P H$, we have $|\lambda| < 1-\delta/2$.
\end{lemma}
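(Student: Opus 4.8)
The plan is to reduce the stated eigenvalue condition to exactly the cubic equation already analyzed in the proof of Lemma~\ref{lem:stability}, and then reuse that argument almost verbatim. First I would unpack the hypothesis: ``$\lambda+\delta-1$ is an eigenvalue of $-\frac{\lambda-1}{\lambda^2}PH$'' (which implicitly requires $\lambda\neq 0$, as the expression is undefined otherwise) means there exists an eigenvalue $\mu$ of the matrix $PH$ with
\[
\lambda + \delta - 1 = -\frac{\lambda-1}{\lambda^2}\,\mu .
\]
The point worth flagging is that, unlike in Lemma~\ref{lem:stability}, where the relevant scalars were the real eigenvalues $\Sigma_{ii}$ of the symmetric Hessian $H$, here the product $PH$ need not be symmetric, so $\mu$ may be \emph{complex}. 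What rescues the argument is that the hypothesis $\rho(PH)\le 1/8$ bounds the modulus of every eigenvalue, giving $|\mu|\le 1/8$ --- precisely the role that $|c_i|\le 1/8$ played before.

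Next I would clear denominators. Multiplying the displayed identity through by $\lambda^2$ and rearranging yields
\[
\lambda^3 - (1-\delta)\lambda^2 + \mu\lambda - \mu = 0 ,
\]
which is exactly the cubic from the proof of Lemma~\ref{lem:stability}, now with the real scalar $c_i$ replaced by the complex scalar $\mu$. In particular the same factorization $(\lambda^2+\mu)(\lambda-1+\delta) = \delta\mu$ holds by direct expansion, independent of whether $\mu$ is real.

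From here I would run the contradiction argument of Lemma~\ref{lem:stability} without change. Assuming $|\lambda|\ge 1-\delta/2\ge 3/4$, the triangle inequality for the complex modulus gives $|\lambda-1+\delta|\ge |\lambda|-(1-\delta)\ge \delta/2$; combined with $|\lambda^2+\mu|\,|\lambda-1+\delta| = \delta|\mu|$ this forces $|\lambda^2+\mu|\le 2|\mu|$, and since $|\lambda^2+\mu|\ge |\lambda|^2-|\mu|$ we obtain $|\lambda|^2\le 3|\mu|\le 3/8$, contradicting $|\lambda|^2\ge 9/16$. Hence $|\lambda|<1-\delta/2$.

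The only thing requiring care --- and the single respect in which this lemma differs from Lemma~\ref{lem:stability} --- is checking that none of these steps used the reality of the scalar. Every inequality above is a statement about complex moduli (the triangle inequality, $|ab|=|a||b|$, and $|a+b|\ge |a|-|b|$), so each applies unchanged to a complex $\mu$. The main obstacle is therefore only the mild bookkeeping one of justifying that $\rho(PH)\le 1/8$ is the right substitute for the bound $\eta\Sigma_{ii}\le 1/8$; this is immediate from the definition of the spectral radius as the maximum modulus of an eigenvalue, so the reduction is clean and introduces no new estimates.
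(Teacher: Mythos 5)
Your proof is correct and follows essentially the same route as the paper's: both reduce the hypothesis to the scalar identity $(\lambda^2+\mu)(\lambda-1+\delta)=\delta\mu$ for an eigenvalue $\mu$ of $PH$ with $|\mu|\le 1/8$, and derive the contradiction $9/16 \le |\lambda|^2 \le 3|\mu| \le 3/8$ via the same modulus inequalities. Your explicit remark that $\mu$ may be complex (since $PH$ need not be symmetric) and that every step is a statement about moduli is a point the paper handles only implicitly by writing $|c_i|$ throughout, so it is a welcome clarification rather than a deviation.
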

\begin{proof}
Let $c_i$ denote the $i$-th eigenvalue of $PH$. Then for some $i$, 
$
\frac{1-\lambda}{\lambda^2}c_i = \lambda+\delta-1.$ Assume $|\lambda| \ge 1-\delta/2$. After algebraic manipulation and taking the absolute value, we have
\al{|\lambda^2+c_i||\lambda+\delta-1| = \delta|c_i| &\ge (|\lambda|^2-|c_i|)(|\lambda| - (1-\delta)) \ge  (|\lambda|^2-|c_i|)\delta/2.
}
The above inequality implies that $3|c_i| \ge |\lambda|^2 \ge 9/16$, which is a contradiction, since$|c_i|\le 1/8$ by definition.
\end{proof}

For clarity, we only consider optimizers that are either gradient descent, momentum, or preconditioning methods. Combinations of these techniques can also be captured by stabilizing linear policies and can be shown in a similar fashion. Next, we present our main algorithm and its accompanying guarantees.

\section{Algorithm and main theorem statements.}\label{sec:thm+alg}

\begin{algorithm}
\caption{Gradient perturbation controller for meta-optimization}
\begin{algorithmic}[1]
\State Input: $N, T, z_{1, 1}$, $\eta, \delta, \eta_g$, starting points $\{x_{1, i}\}_{i=1}^N$, $\kappa, \gamma$
\State Set: $\M = \{M = \{M^1, \ldots, M^L\}:\|M^l\| \le \kappa^3(1-\gamma)^l\}$, and initialize any $M_{1, 1}\in \M$.
\For{$i = 1, \ldots, N$}
\State If $i>1$, set $z_{1, i} = z_{T+1, i-1}, M_{1, i} = M_{T+1, i-1}$.
\For{$t = 1, \ldots, T$}
\State Choose $u_{t, i}= \sum_{l=1}^L M_{t, i}^l w_{t-l, i}$ and observe $z_{t+1, i}$.
\State Receive $f_{t, i}$, compute $\nabla f_{t, i}(x_{t, i}), \nabla f_{t, i}(x_{t-1, i})$. If $t=T$, compute $w_{T, i}$ by \eqref{eq:reset}. 

\State Suffer control cost $c_{t, i}(z_{t, i}) = f_{t, i}(x_{t, i})$.
\State Construct ideal cost
$g_{t, i}(M) = g_{t, i}(M, \ldots, M)
$ according to (\ref{eq:ideal_cost}).
\State Perform gradient update on the controller parameters, where $\Pi$ denotes projection : 
$$M_{t+1, i} = \Pi_\M (M_{t, i} - \eta_g \nabla g_{t, i}(M_{t, i})).
$$
\EndFor
\EndFor
\end{algorithmic}\label{alg:episodic}
\end{algorithm}
In this section, we give an efficient algorithm that minimizes meta-regret for convex quadratic objective functions and derive its accompanying guarantees. For the rest of the paper, we use the following indexing convention: for the first epoch, let $w_{t, 1} = 0$ for all $t \le 0$; for subsequent epochs, $w_{t, i} = w_{T+t, i-1}, z_{t, i} = z_{T+t, i-1}$ if $t\le0$.  

Our main algorithm, Algorithm \ref{alg:episodic}, views meta-optimization as a single-trajectory control problem, and uses Algorithm \ref{alg:mainA} (GPC) on the dynamical system described by (\ref{eq:dynamics_special}). This algorithm considers the class of DFC controllers, and makes gradient updates on the parameters of interest, $M$, to compete with the best DFC in hindsight. Under Assumption \ref{assumption:sequential_stable}, we do not need a stabilizing linear controller for disturbance-feedback control, and we consider the set of DFCs with $K=0$.

Similar to the GPC method, at each time step, Algorithm \ref{alg:episodic} outputs the control signal generated by $\pi(M_{t, i})$, and receives an objective function. Then, the disturbances are computed according to the dynamical system formulation (\ref{eq:dynamics_special}), and a cost function for updating $M$ is constructed on Line 9 similarly to the surrogate cost in Algorithm \ref{alg:mainA} (GPC). Finally, a gradient update is executed on $M$. We first state the general regret guarantee for Algorithm \ref{alg:episodic} in the next theorem, and then describe the benchmark algorithm class in Section \ref{sec:benchmark}.

\begin{theorem}\label{thm:main}Suppose $f_{t, i}$ are convex quadratic for all $t,\ i$. Under Assumptions \ref{assumption:gradient}, \ref{assumption:reset}, \ref{assumption:sequential_stable},
Algorithm \ref{alg:episodic} with $\eta \le 1, \delta \in (0, \frac{1}{2}]$, $\eta_g = \Theta(\sqrt{NT})^{-1}$, and $L = \Theta(\log NT)$ satisfies
$$ \mregret = \sum_{i=1}^N \sum_{t=1}^T f_{t, i}(x_{t, i}) - \min_{\A\in \Pi} \sum_{i=1}^N \sum_{t=1}^T f_{t, i}(x_{t, i}^{\A}) \le \tilde{O}(\sqrt{NT} ), $$
where $\Theta$ contains polynomial factors in $\gamma^{-1}, \beta, \kappa, R, b, d$, and $\tilde{O}$ contains these polynomial factors as well as logarithmic factors in $T, N$.
\end{theorem}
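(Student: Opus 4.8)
The plan is to reduce the meta-regret to the policy regret of the single-trajectory control problem defined by the dynamics \eqref{eq:dynamics_special}, and then invoke the regret guarantee of the gradient perturbation controller (Algorithm \ref{alg:mainA}) run over a horizon of length $NT$. First I would verify that the control formulation faithfully tracks the optimization process: for quadratic $f_{t,i}$ the identity $\nabla f_{t,i}(x_{t,i}) = \nabla f_{t,i}(x_{t-1,i}) + H_{t,i}(x_{t,i}-x_{t-1,i})$ guarantees that the third block of $z_{t,i}$ always equals the true gradient, so the control-free dynamics reproduce time-delayed gradient descent with weight decay while the control signal $u_{t,i}$ can realize any policy in the benchmark class. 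The reset disturbance \eqref{eq:reset} moves the state to the correct re-initialization $z_{1,i}=[x_{1,i}^\top, x_{1,i}^\top, 0]^\top$ at episode boundaries, so that concatenating the $N$ episodes yields one length-$NT$ controlled trajectory with cost $c_{t,i}(z_{t,i}) = f_{t,i}(x_{t,i})$. Consequently $\mregret$ equals exactly the policy regret against the comparator class $\Pi$ of linear state-feedback policies \eqref{eq:lin_policies}, evaluated under the same disturbance sequence, namely our own gradients.

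Next I would pass from the linear-policy benchmark to disturbance-feedback controllers. By the approximation result quoted in Section \ref{sec:nonstochastic}, every stabilizing linear policy is matched in average cost up to $\epsilon$ by a DFC of memory length $L = O(\log \tfrac1\epsilon)$; taking $L = \Theta(\log NT)$ makes the total approximation error across all $NT$ steps $\tilde O(\sqrt{NT})$, so it suffices to compete with the best DFC of memory $L$. For this reduction to be legitimate I must check that the linear optimizers in the benchmark table (gradient descent, momentum, preconditioning) are genuinely stabilizing on the system \eqref{eq:dynamics_special}, which is exactly what the lemmas of Section \ref{sec:policy_class} establish.

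The main obstacle is that the disturbances here are not bounded a priori: within an episode $w_{t,i}$ is the gradient $\nabla f_{t,i}(x_{t-1,i})$, which by Assumption \ref{assumption:gradient} grows linearly with $\|x_{t-1,i}\|$, while at resets $w_{T,i}$ depends on the endpoint iterates. Thus states depend on disturbances through the dynamics and disturbances depend on states through the gradient bound, a circular coupling that the standard GPC analysis does not handle. I would break this circularity with a bootstrapping argument built on sequential stability (Assumption \ref{assumption:sequential_stable}): stability expresses each state $z_{t,i}$ as a geometrically weighted sum of past disturbances and the bounded reset inputs, while the linear-growth bound together with the constraint set $\M = \{\|M^l\|\le \kappa^3(1-\gamma)^l\}$ bounds the control $u_{t,i}$ in terms of past disturbances. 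Feeding these estimates back into the state recursion and solving self-consistently yields a uniform bound $\|z_{t,i}\| \le D$, hence a uniform disturbance bound $\|w_{t,i}\| \le W$, with $D,W = \mathrm{poly}(\kappa,\gamma^{-1},\beta,R,b,d)$. The reset disturbances are bounded by $O(D)$ via Assumption \ref{assumption:reset}, and because the horizon is traversed as a single trajectory their geometrically decaying influence does not accumulate across episodes.

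With the domain bounded by $D$ the quadratic costs are Lipschitz and the disturbances bounded by $W$, so I would finally invoke the GPC regret guarantee, in its form for sequentially stable linear time-varying systems, over the horizon $NT$ to obtain $\tilde O(W\sqrt{NT})$ regret against the best memory-$L$ DFC. Combining this with the DFC-to-linear-policy approximation error $\tilde O(\sqrt{NT})$ from the second step and the $\mathrm{poly}(L)=\mathrm{polylog}(NT)$ overhead from online learning with memory (the ideal cost $g_{t,i}$ constructed on Line 9) gives $\mregret \le \tilde O(\sqrt{NT})$ with the stated polynomial dependence on $\gamma^{-1},\beta,\kappa,R,b,d$. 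I expect the self-consistent state and disturbance bound of the third paragraph to be the crux: it is precisely the control-with-unbounded-disturbances step, and getting the polynomial factors to close requires that the amplification from the gradient bound be absorbed by the stability margin $\gamma$.
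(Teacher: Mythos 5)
Your proposal follows essentially the same route as the paper: cast meta-optimization as a single-trajectory nonstochastic control problem with reset disturbances, break the state--disturbance circularity by an inductive/bootstrapping bound that exploits sequential stability and the decay of the constraint set $\M$, and then run the GPC/OCO-with-memory regret machinery (surrogate costs plus approximation errors) over the horizon $NT$. The one ingredient you leave implicit --- making the feedback gain small enough that your self-consistent bound actually closes --- is handled in the paper by rescaling the losses so that the effective smoothness satisfies $\beta \le \beta_0 = \gamma(1-\gamma)^2/(10^3\kappa^5 L^2)$ and multiplying the resulting regret by $\beta/\beta_0$, which is precisely where the polynomial $\beta$-dependence hidden in $\tilde{O}(\cdot)$ comes from.
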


It follows from lower bounds in online convex optimization \cite{hazan2022introduction} that the dependence of meta-regret on $N, T$ are optimal in Theorem \ref{thm:main}. Stronger performance metrics that are  more suitable for changing environments, such as adaptive regret and dynamic regret, were explored in the context of controlling LTV dynamical systems \citep{ltv, unknown_ltv}. Instead of regret against a static comparator, these latter results consider changing comparators, and it is an interesting future direction to extend them to obtain stronger performance guarantees in our setting,  


The implications of Theorem \ref{thm:main} in the deterministic, stochastic, and adversarial settings are first stated in section \ref{sec:optimization_settings}, and we include them in the next few paragraphs for completeness.

\subsection*{Deterministic optimization}
In this setting, we have an LTI system, and we can obtain more explicit guarantees compared to the stochastic and adversarial settings. First, the sequential stability assumption on the system can be simplified to a standard strong stability (Definition \ref{def:strong_stability}) assumption. Moreover, we can compete against the class of strongly stable linear policies, formally defined below.
\begin{definition}[Strongly stable linear policies] Given a system with dynamics $(A, B)$, a linear policy $K$ is $(\kappa, \gamma)$-strongly stable if $A+BK$ is $(\kappa, \gamma)$ strongly stable, and $\|K\| \le \kappa$.
\end{definition}

As shown in \cite{agarwal2019online}, if our system is $(\kappa, \gamma)$ strongly stable, then the class of DFCs we consider can approximate the class of $(\kappa, \gamma)$ strongly stable linear policies. Lemma \ref{lem:stability} shows that with proper choice of $\eta, \delta$, the system is strongly stable with $\gamma=O(\delta)$. But what does the class of strongly stable linear policies contain? We give some examples of stabilizing linear policies $K$ such that $\rho(A+BK) < 1- O(\delta)$ in Section \ref{sec:policy_class}. By Lemma \ref{lem:tomer}, these policies are also strongly stable with $\gamma = O(\delta)$; therefore, our method can compete with them, and possibly other policies. The form of this class of strongly stable policies $\Pi$ is given in (\ref{eq:lin_policies}), and encompasses certain gradient descent, momentum, and preconditioning methods.

Let $\bar{x} = \frac{1}{TN}\sum_{i=1}^N\sum_{t=1}^T x_{t, i} $ be the average iterate, and $\bar{J}(\A) = \frac{1}{TN} \sum_{i=1}^N J_i(\A)$ denote the average cost of the algorithm $\A$. Then by convexity, Theorem \ref{thm:main} implies
$$ f(\bar{x}) \leq \min_{\A\in \Pi} \bar{J}(\A) + \tilde{O}\left (\frac{1}{\sqrt{TN}}\right ),$$
where $\Pi$ is the class of strongly stable linear policies.
\subsection*{Stochastic optimization}
In this setting, our functions are drawn randomly from distributions $\D_1, \D_2, \ldots, \D_N$ that vary from epoch to epoch. In epoch $i$, for each time step $t\in [T]$, we draw a quadratic function $f_{t, i} \sim \D_i$. 
Let $\E$ denote the unconditional expectation with respect to the randomness of the functions, and define the function $\bar{f}_i(x) := \E_{\D_i}[f_{t, i}(x)]$, then the guarantee can be written as

\begin{align*}
\frac{1}{NT}\sum_{i=1}^N\sum_{t=1}^T \E[\bar{f}_i(x_{t, i})]
&\le \frac{1}{NT} \min_{\A\in \Pi}\sum_{i=1}^N \sum_{t=1}^T \E\left[\bar{f}_{i}(x_{t, i}^\A)\right]+ \tilde{O}\left (\frac{1}{\sqrt{TN}}\right),
\end{align*}
where $\Pi$ is the benchmark algorithm class described in the next subsection.

\subsection*{Adversarial online optimization}
Consider the setting where we have a new function at each $t, i$ in the optimization process.
This setting describes the meta-online convex optimization (meta-OCO) problem, and we give our guarantees in the standard OCO metric -- regret. 

Let $x^*_{i} \in \argmin_x \sum_{t=1}^T f_{t, i}(x)$ be an optimum in hindsight in episode $i$, and denote $\regret_i(\A)$ as the regret suffered by the algorithm $\A$ in epoch $i$. Subtracting $\sum_{i=1}^N \sum_{t=1}^T f_{t, i}(x_i^*)$ on both sides,
\begin{align*}
\frac{1}{TN}\sum_{i=1}^N \text{Regret}_i &\le \min_{\A\in \Pi}\frac{1}{TN}\sum_{i=1}^N\text{Regret}_i(\A) + \tilde{O}\left (\frac{1}{\sqrt{NT}}\right).
\end{align*}

\subsection{The benchmark algorithm class.}\label{sec:benchmark}
We introduce the benchmark algorithm class for LTV systems, which are present in the stochastic and adversarial online meta-optimization settings. This benchmark algorithm class consist of optimizers that correspond to a class of DFCs and  generalizes the strongly stable linear policies discussed previously. Informally, our guarantee is competitive with optimizers that are linear functions of past gradients. 

We start with an expression of the system state under a DFC controller. Let $x_{t, i}^M$ denote the optimization iterate at time $(t, i)$ under controller $\pi(M)\in \Pi_{DFC}$. Define the \textit{pseudo-gradient} as
$$
\hat{\nabla}f_{t, i}^M(\cdot) = \nabla f_{t, i}(\cdot) - \nabla f_{t, i}(x_{t-1, i}^M) + \nabla f_{t, i}(x_{t-1, i}),$$
where $x_{t-1, i}$ is the optimization iterate chosen by our algorithm at time $(t-1, i)$. It is clear that the pseudo-gradient is close to the true gradient $\nabla f_{t, i}(\cdot)$ if $x_{t-1, i}$ and $x_{t-1, i}^M$ are close. The state under $\pi(M)$ is
$$
z_{1, i}^M = \begin{bmatrix}x_{1, i} \\x_{1, i}\\0
\end{bmatrix},\ \ z_{2, i}^M = \begin{bmatrix}x_{2, i}^M \\x_{1, i}\\\nabla f_{1, i}(x_{1, i})
\end{bmatrix},\ \ z_{t, i}^M = \begin{bmatrix}x_{t, i}^M \\x_{t-1, i}^M\\\hat{\nabla}f_{t-1, i}^M(x_{t-1, i}^M)
\end{bmatrix}\text{  for  } t\ge 3.$$

Observe that the state contains the pseudo-gradient at location $x_{t-1, i}^M$, instead of the true gradient under controller $M$, $\nabla f_{t-1, i}(x_{t-1, i}^M)$, because our guarantee applies to a fixed sequence of disturbances, cost functions, and dynamics. The dynamical system formulation (\ref{eq:dynamics_special}) describes the exact evolution of the gradients $\nabla f_{t, i}(x_{t, i})$, along the trajectory chosen by our algorithm; however, under the trajectory induced by another controller $M$ and fixed disturbances, the system instead describes the evolution of the pseudo-gradients.   

The state evolution under $\pi(M)$ corresponds to the update of the following optimizer:
\begin{align}\label{eq:benchmark_algo_class}
    x_{t+1, i}^M = (1-\delta) x_{t, i}^M - \eta \hat{\nabla}f_{t-1, i}^M(x_{t-1, i}^M) + \sum_{l=1}^L M^l\nabla f_{t-l, i}(x_{t-l-1, i}).
\end{align}

Therefore, competing with the class of DFCs translates to competing with the class of optimizers parameterized by $M \in \mathcal{M}$, with updates specified by (\ref{eq:benchmark_algo_class}). The set $\mathcal{M}$ is defined in Algorithm \ref{alg:episodic}. As discussed above, this benchmark algorithm class has a more straightforward interpretation for deterministic meta-optimization, capturing common algorithms on pseudo-gradients, including gradient descent, momentum, and preconditioned methods. 

\subsection{Example: competing with the best learning rate for convex quadratics.} \label{sec:example}
For illustration, we include an example of meta-optimization where we compete with the best gradient descent learning rate (from a set) for convex quadratics. Consider the deterministic setting , where we receive a quadratic objective function $f(x) = \frac{1}{2}x^\top H x$ . Assume $\|H\| = \beta \ge 1$. For gradient descent, a good choice of learning rate is $\frac{1}{\beta}$, but often we only have an upper bound $\hat{\beta}$ such that $\hat{\beta} \gg \beta$. As we show in the sequel, we can do almost as well as gradient descent, with learning rate $\frac{1}{8\beta}$ and pseudo-gradients, on average using meta-optimization.

Suppose we choose $\eta_g$, $L, \delta$ according to Theorem \ref{thm:main}, and set $\eta = \frac{1}{8\hat{\beta}}$. Then by Lemma \ref{lem:stability}, the dynamical system is stable and satisfies Definition \ref{def:strong_stability}. Moreover, Assumption \ref{assumption:gradient} is satisfied with $\hat{\beta}$, and if Assumption \ref{assumption:reset} holds, by Theorem \ref{thm:main} we can compete with the best stabilizing linear policy. For a linear policy $K$, let 
$$
z_{t, i}^K = \begin{bmatrix}
    x_{t, i}^K\\ x_{t-1, i}^K\\ \hat{\nabla} f^K(x_{t-1, i}^K)
\end{bmatrix}$$ denote the state reached at time $(t, i)$ by playing policy $K$. Let $[K_1\ K_2\  K_3] \in \reals^{d\times 3d}$ represent the top $d$ rows of $K$, where the submatrices have dimension $d\times d$. The closed-loop dynamics of the linear policy $K$ is 
\begin{align*}
    z_{t+1, i}^K = \begin{bmatrix}
(1-\delta)I+K_1 & K_2   &  -\frac{1}{8\hat{\beta}} I+K_3\\
I & 0 & 0 \\
 H & -H & 0
\end{bmatrix} z_{t, i}^K + w_{t, i}.
\end{align*}
Setting $K_1 = 0$, $K_2 = 0$, $K_3 = -(\frac{1}{8\beta} - \frac{1}{8\hat{\beta}})I$, the dynamics is gradient descent (using pseudo-gradients) with learning rate $\frac{1}{8\beta}$ and weight decay. By Lemma \ref{lem:stability}, this closed-loop dynamics is stable, so our choice of $K$ is a stabilizing linear policy, and we do at least as well as playing $K$ on average.

\section{Smooth convex meta-optimization.} \label{sec:smooth}
In this section, we present guarantees for meta-optimization when the objective functions are smooth and convex, a significantly broader class of functions than convex quadratics. However, we need to modify the dynamical system formulation slightly due to the absence of quadratic structure in these functions. For the sequel, we assume the following two assumptions are satisfied:  Assumption \ref{assumption:bounded_hessian} (smoothness) and  Assumption \ref{assumption:bounded} (boundedness), formaly defined as follows. 

\begin{assumption} \label{assumption:bounded_hessian} The objective functions $f_{t, i}$ have uniformly bounded Hessians,
$
\|\nabla^2 f_{t, i}(x)\| \le \beta,\ \ \forall\ x, \ t,\ i. $
\end{assumption}
The assumption above implies that he objective functions have Lipschitz gradients, 
\begin{equation} \label{eq:smooth_grad}
\|\nabla f_{t, i}(x) - \nabla f_{t, i}(y)\| \le \beta \|x-y\|,\text{ and }
    \|\nabla f_{t, i}(x) \| \le \beta \|x\| + b
\end{equation}for some $b\ge 0$, for all $x, t, i$.

\begin{assumption}\label{assumption:bounded}
    The objective functions are bounded: $f_{t, i}(x) \le C$ for all $t, i, x$.
\end{assumption}
\subsection{The dynamics of smooth convex meta-optimization.}
The main difference in the dynamical system formulation between quadratic and smooth meta-optimization is the evolution of the gradients. We would like to find some matrix $H_{t, i}$ with bounded spectral radius that satisfies
\begin{align} \label{eq:gradient_evolution}
    \nabla f_{t, i}(x_{t, i}) = \nabla f_{t, i}(x_{t-1, i}) + H_{t, i}(x_{t, i} - x_{t-1, i}).
\end{align} Since the mean value theorem has no multi-variate analogue, we cannot guarantee in general that $H_{t, i}$ is the Hessian of $f_{t, i}$ at some location between $x_{t, i}$ and $x_{t-1, i}$. However, we can consider $H_{t, i}$ that contains coordinate-wise second order gradients at different locations. More precisely,
consider the gradient of $f_{t, i}$, $\nabla f_{t, i} \in \reals^{d}$, and let $\nabla f_{t, i}^j (x)$ denote the $j$-th coordinate of the gradient. Let $\nabla^2f_{t, i}^j(x) = \frac{\partial \nabla f_{t, i}^j(x)}{\partial x} \in \reals^{d}$ be the gradient of $\nabla f_{t, i}^j(x)$, and define 
$$H_{t, i}(y_1, \ldots, y_d) = \begin{bmatrix}
    \nabla^2f_{t, i}^1(y_1)\\
    \vdots\\
    \nabla^2f_{t, i}^d(y_d)
\end{bmatrix}.
$$
Namely, $H_{t, i}(y_1, \ldots, y_d)$ is a matrix with rows that are gradients of $\nabla f_{t, i}^1, \ldots, \nabla f_{t, i}^d $ at locations $y_1, \ldots, y_d$.
The system evolution is
\begin{equation}\label{eq:dynamics_special_smooth}
\begin{bmatrix}
x_{t+1, i} \\
x_{t, i}\\
\nabla f_{t, i}(x_{t, i})
\end{bmatrix} 
= 
\begin{bmatrix}
(1-\delta)I & 0   &  -\eta I\\
I & 0 & 0 \\
 H_{t, i} & -H_{t, i} & 0
\end{bmatrix} 
\times
\begin{bmatrix}
x_{t, i} \\
x_{t-1, i}\\
\nabla f_{t-1, i}(x_{t-1, i})
\end{bmatrix} 
+ 
\begin{bmatrix}
I & 0 & 0 \\
0 & 0 & 0\\
0 & 0 & 0
\end{bmatrix}  
\times
u_{t, i} 
 + \begin{bmatrix}
 0\\
 0\\
 \nabla f_{t, i}(x_{t-1, i})
 \end{bmatrix},
 \end{equation}
where $H_{t, i}$ satisfies
$
H_{t, i} = H_{t, i}(\xi^1_{t, i}, \ldots, \xi^d_{t, i})$
for some $\xi_{t, i}^j$ on the line segment from $x_{t-1, i}$ to $x_{t, i}$, for all $j\in [d]$. 
Applying the mean value theorem to real-valued functions $\nabla f_{t, i}^j(x)$, we can find $\{\xi_{i, t}^j\}_{j=1}^d$ such that (\ref{eq:gradient_evolution}) is satisfied. 
Further, we make the following assumption on $H_{t, i}$ uniformly, so that Lemma \ref{lem:stability} holds for this formulation as well. 

\begin{assumption}\label{assumption:smooth}
    For all $(t, i)$, $\rho(H_{t, i}) \le \beta.$
\end{assumption}
Note that the proof of Lemma \ref{lem:stability} requires $H_{t, i}$ to be symmetric, a condition not satisfied by this formulation. However, instead of using the singular value decomposition in the proof, we can use the Schur triangularization of a square matrix. As long as Assumption \ref{assumption:smooth} is satisfied, the system is stable for $\eta \le \frac{1}{8\beta}$ and $\delta\in (0, \frac{1}{2}]$.

In the case of smooth quadratic objective functions, $H_{t, i}$ is the Hessian, and Assumption \ref{assumption:smooth} is subsumed by Assumption \ref{assumption:bounded_hessian}. It can also be satisfied by smooth convex functions whose Hessians have row norms bounded by $\beta/\sqrt{d}$ uniformly, since for such functions, $\rho(H_{t, i}) \le \|H_{t, i}\| \le \|H_{t, i}\|_F \le \beta$.

Note that $H_{t, i}$ is in fact not directly observable to us. {\bf  Crucially, however, we do not need $H_{t, i}$ or any system information for the algorithm we develop}; we only need to know the disturbances, which can be computed by taking gradients of the objective functions.

Other components of the dynamical system formulation, including the system resets and stability assumptions on the system, remain the same for convex smooth meta-optimization.


\subsection{Algorithm and guarantees.}
Convex smooth meta-optimization can be treated as a single-trajectory control problem, similar to quadratic meta-optimization. However, there is an additional challenge that the dynamics are unknown. In standard nonstochastic control settings, not knowing the dynamics means that both the disturbances and the gradients with respect to the controller parameters $M$ need to be estimated. Since we can directly compute the disturbances in (\ref{eq:dynamics_special_smooth}) by taking gradients of the objective functions, we only need to modify Algorithm \ref{alg:episodic} to incorporate gradient estimation.

In this setting, we have access to the control costs, so it is natural to consider bandit algorithms that estimate the gradients with access to only the cost function values. Let $\M_{\delta_M} = \{M \in \M: \frac{1}{1-\delta_M} M \in \M\}$. The algorithm for convex smooth meta-optimization is given in Algorithm \ref{alg:meta_opt_smooth}, where we use a bandit variant of the GPC method developed in \cite{gradu2020non} (Algorithm 2), and briefly mention their techniques in the Appendix. Alternatively, bandit algorithms from \cite{ghai2023online} can be applied to yield potentially better dimension dependence.

Algorithm \ref{alg:meta_opt_smooth} estimates the gradients with respect to $M$ by perturbing the inputs and observing the cost values on the perturbed inputs, similar to the FKM method in \cite{flaxman2005online}. In each iteration, we sample stochastic noise from the unit sphere and produce the perturbed $\widetilde{M}_{t, i}$ on line 13. Then we play controls with parameters $\widetilde{M}_{t, i}$, record the cost, and construct the gradient estimate $g_{t, i}$ on line 11. We compute the disturbances on line 9 to execute the DFC policy.

\begin{algorithm}\label{alg:meta_opt_smooth}
\caption{Smooth convex meta-optimization}
\begin{algorithmic}[1]
\State Input: $N, T, z_{1, 1}$, $\eta, \delta, \{\eta_{t, i}^g\}, L, \delta_M$, starting points $\{x_{1, i}\}_{i=1}^N$, $\kappa, \gamma$
\State Set: $\M = \{M = \{M^1, \ldots, M^L\}:\|M^l\| \le \kappa^3(1-\gamma)^l\}$.
\State Initialize any $M_{1, 1} = \cdots = M_{L, 1} \in \M_{\delta_M}$.
\State Sample $\epsilon_{1, 1}, \ldots, \epsilon_{L, 1} \in_{\mathbb{R}} \mathbb{S}_1^{L \times 3n \times 3n}$, set $\widetilde{M}_{l, 1} = M_{l, 1} + \delta_M \epsilon_{l, 1}$ for $l = 1, \ldots, L$.
\For{$i = 1, \ldots, N$}
\State If $i>1$, set $z_{1, i} = z_{T+1, i-1}, M_{1, i} = M_{T+1, i-1}$.
\For{$t = 1, \ldots, T$}
\State Choose $u_{t, i}= \sum_{l=1}^L \widetilde{M}_{t, i}^l w_{t-l, i}$.
\State Receive $f_{t, i}$, compute $\nabla f_{t, i}(x_{t-1, i})$. If $t=T$, then compute $w_{T, i}$ by \ref{eq:reset}.
\State Suffer control cost $c_{t, i}(z_{t, i}) = f_{t, i}(x_{t, i})$.
\State Store $g_{t,i} = \dfrac{9n^2L}{\delta_M} c_{t, i}(z_{t, i}) \sum\limits_{l=1}^{L} \epsilon_{t-l, i} \:$ if $t \geq L$ else 0.
\State Perform gradient update on the controller parameters: 
$$M_{t+1, i} = \Pi_{\M_{\delta_M}} (M_{t, i} - \eta_{t, i}^g g_{t-L, i}).
$$
\State Sample $\epsilon_{t+1, i} \in_\text{R} \mathbb{S}_{1}^{L \times 3n \times 3n}$,  set $\widetilde{M}_{t+1, i} = M_{t+1, i} + \delta_M \epsilon_{t+1, i}$
\EndFor
\EndFor
\end{algorithmic}
\end{algorithm}

The regret guarantee for Algorithm \ref{alg:meta_opt_smooth} is given in the theorem below. We show that the meta-regret is sublinear in expectation, with a rate of $(NT)^{3/4}$. The dependence of meta-regret on $NT$, the learning horizon, is due to the regret of the FKM method in the classical bandit convex optimization (BCO) setting. The recent work of \citet{sun2023optimal} improves bandit GPC to have regret $T^{1/2}$ under quadratic strongly convex losses.
\begin{theorem}\label{thm:smooth}Under Assumptions \ref{assumption:reset}, \ref{assumption:sequential_stable}, \ref{assumption:bounded_hessian}, 
\ref{assumption:bounded}, \ref{assumption:smooth}, 
Algorithm \ref{alg:meta_opt_smooth} with $\eta \le 1$, $L = \Theta(\log NT)$, and setting $\eta_{t, i}^g = \Theta((N(i-1)+t)^{-3/4} L^{-3/2})$ and perturbation constant $\delta_M = \Theta((NT)^{-1/4} L^{-1/2})$ satisfies
$$ \E\left[\mregret\right] = \E\left[\sum_{i=1}^N \sum_{t=1}^T f_{t, i}(x_{t, i})\right] - \min_{\A\in \Pi} \sum_{i=1}^N \sum_{t=1}^T f_{t, i}(x_{t, i}^{\A}) \le \tilde{O}((NT)^{3/4} ), $$
where $\tilde{O}$, $\Theta$ contain polynomial factors in $\gamma^{-1}, \beta, \kappa, R, b, d, C$, and $\tilde{O}$ in addition contains logarithmic factors in $T, N$. The benchmark algorithm class $\Pi$ is the class of DFCs.
\end{theorem}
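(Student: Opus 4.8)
The plan is to treat smooth convex meta-optimization as a single bandit nonstochastic control problem on the dynamics \eqref{eq:dynamics_special_smooth}, run over one long trajectory of length $NT$, with the episode boundaries folded into the disturbance sequence via the reset rule \eqref{eq:reset}. The central structural fact I would exploit is the same convex relaxation that powers the quadratic case: since the DFC control $u_{t,i}=\sum_l \widetilde{M}^l w_{t-l,i}$ is linear in the parameters $M$, and the state is in turn linear in the controls for a fixed disturbance sequence, the per-step cost $f_{t,i}(x_{t,i})$ — viewed as a function of the parameters that generated the trajectory — is convex in $M$. This reduces the whole problem to randomized online convex optimization with memory over the surrogate/ideal costs $g_{t,i}(M)$, letting me import the bandit-GPC machinery of \cite{gradu2020non} (with the sphere-sampled FKM estimator of \cite{flaxman2005online}) essentially verbatim, up to the new bookkeeping forced by resets and unbounded disturbances.

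I would organize the regret decomposition into three pieces. (i) Replace the true accumulated cost by the ideal cost obtained by running the current controller for $L$ steps from the zero state; using Assumption \ref{assumption:sequential_stable}, the influence of disturbances older than $L$ decays as $\kappa^2(1-\gamma)^L$, so $L=\Theta(\log NT)$ makes this truncation error $\tilde{O}(1)$. (ii) Convert the with-memory cost $g_{t,i}(M_{t-L,i},\dots,M_{t,i})$ to the memoryless $g_{t,i}(M_{t,i},\dots,M_{t,i})$ via the movement bound of \cite{anava2015online}: the error is the per-step Lipschitz constant of $g_{t,i}$ times $\sum_l \|M_{t,i}-M_{t-l,i}\|=O(L^2\eta^g_{t,i}G)$, where $G$ bounds the gradient estimates. (iii) Bound the regret of projected OGD run on the FKM estimates against the best fixed $M^\ast\in\mathcal{M}$.

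For piece (iii), the estimate $g_{t,i}$ is unbiased for the gradient of the $\delta_M$-smoothed surrogate $\hat{g}_{t,i}$, so in expectation OGD minimizes the smoothed costs. Smoothing introduces a bias of order $\delta_M$ per step by Lipschitzness (total $O(\delta_M\cdot NT)$), while the estimate has norm $G=O\!\big(\tfrac{n^2 L\,C}{\delta_M}\big)$ by Assumption \ref{assumption:bounded}. With the anytime step size $\eta^g_{t,i}=\Theta(s^{-3/4}L^{-3/2})$, $s=N(i-1)+t$, projected OGD has regret $O\!\big(\tfrac{D^2}{\eta^g_{NT}}+\sum_s\eta^g_s\,G^2\big)$, where $D$ bounds the diameter of $\mathcal{M}$. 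Choosing $\delta_M=\Theta((NT)^{-1/4}L^{-1/2})$ gives $G=\Theta(L^{3/2}(NT)^{1/4})$ (suppressing $n,C$), so that $\sum_s\eta^g_s\,G^2$ and $D^2/\eta^g_{NT}$ are each $\tilde{O}((NT)^{3/4})$; the same choice simultaneously forces the smoothing bias and the movement cost of piece (ii) to be $\tilde{O}((NT)^{3/4})$. Summing the three pieces and taking expectation over the sphere samples yields the claim.

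The step I expect to be hardest is not the balancing but justifying that every one of these ``constants'' is finite. Assumption \ref{assumption:gradient} and \eqref{eq:smooth_grad} bound $\|\nabla f_{t,i}(x)\|$ only in terms of $\|x\|$, and the disturbances in \eqref{eq:dynamics_special_smooth} and \eqref{eq:reset} contain exactly such gradients, so a priori the disturbance magnitude — and hence the state, the Lipschitz constants of $g_{t,i}$, the cost bound, and $G$ — could diverge along the trajectory. Resolving this requires the unbounded-disturbance argument of Section \ref{app:main_thm_proof}: since the control-free dynamics is strongly/sequentially stable (Lemma \ref{lem:stability}, Assumption \ref{assumption:sequential_stable}) with contraction $1-\gamma$, the geometric decay dominates the at-most-linear-in-norm growth of the disturbances, and one proves by induction a self-consistent polynomial-in-$(\gamma^{-1},\beta,\kappa,R,b)$ bound on $\|z_{t,i}\|$ that survives the reset jumps. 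Only once this a priori bound is in hand are $C$, the diameter $D$, the per-step Lipschitz constants, and the estimator norm $G$ legitimately finite, closing the argument. A secondary subtlety is that, as in the quadratic case, the comparator trajectory $x^M_{t,i}$ runs on pseudo-gradients under the \emph{same} disturbance sequence rather than on its own fresh gradients; I would verify that this pseudo-gradient substitution leaves the surrogate costs convex in $M$ and does not inflate any of the constants above.
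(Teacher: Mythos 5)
Your proposal is correct and follows essentially the same route as the paper: the same three-term decomposition (truncation to an $L$-step surrogate cost, online-learning regret on the surrogates, truncation error for the comparator), the same identification of the state/disturbance bound under resets and state-dependent disturbances as the crux (the paper's Theorem \ref{thm:state_bound}, proved by the double induction you describe), and the same parameter balancing of $\eta^g_{t,i}$, $\delta_M$, and $L$. The only difference is one of granularity: where you re-derive the FKM-style smoothing bias, estimator-norm, and movement-cost analysis, the paper invokes the BCO-with-memory guarantee (Theorem \ref{thm:bco_main} from \citet{gradu2020non}) as a black box for exactly that piece.
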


Similar to convex quadratic meta-optimization, Theorem \ref{thm:smooth} can be refined for the deterministic, stochastic, and adversarial smooth meta-optimization settings. We give a proof of Theorem \ref{thm:smooth} in the next section, for details and background on nonstochastic control techniques in the bandit setting, see Appendix \ref{sec:bco}.

\subsubsection*{The benchmark algorithm class}
    Since convex smooth functions have location-dependent Hessians, we give a different definition for pseudo-gradients from the one in Section \ref{sec:benchmark}. We define them as follows,
    $$\hat{\nabla}f_{t, i}^M(x) = H_{t, i}(x - x_{t-1, i}^M) + \nabla f_{t, i}(x_{t-1, i}).
    $$
    The corresponding benchmark algorithm class has the same form with the modified definition of pseudo-gradients,
    \begin{align*}
    x_{t+1, i}^M = (1-\delta) x_{t, i}^M - \eta \hat{\nabla}f_{t-1, i}^M(x_{t-1, i}^M) + \sum_{l=1}^L M^lw_{t-l, i}.
\end{align*}

\section{Proof of main theorems.}\label{app:main_thm_proof}
We give the proofs of Theorem \ref{thm:main} and Theorem \ref{thm:smooth} in this section. We consider Algorithms \ref{alg:episodic} and \ref{alg:meta_opt_smooth} as gradient perturbation controllers on the dynamical system given by (\ref{eq:dynamics_special}) and (\ref{eq:dynamics_special_smooth}), respectively, and bound the meta-regret by the regret of the control algorithms against class of DFCs. 

However, the analysis of the GPC method in nonstochastic control does not apply directly to meta-optimization, and new techniques are required to derive the regret guarantees. In particular, all existing methods in nonstochastic control rely on the assumption of a uniform upper bound on $\|w_{t, i}\|$, which is not granted in our case since the disturbances include $\nabla f_{t, i}(x_{t-1, i})$. Under the smoothness assumption, the disturbances $\|w_{t, i}\|$ can grow proportionally with the states $\|x_{t-1, i}\|$. Furthermore, the reset disturbance scales with the size of the states themselves, and this interdependence between the states and disturbances presents a further challenge in the analysis.

We overcome this challenge by showing that due to system stability, the effect of the reset disturbance attenuates exponentially. In each episode, after the initial large reset disturbance, the state and disturbance have an upper bound that decreases with time and eventually converges to a constant value. Moreover, we can scale the functions and bound the contribution from $w_{t, i}$ to the size of the state. To make this intuition formal, we use several inductive arguments to address the interdependence between the states and disturbances. 

\subsection{Proof of Theorem \ref{thm:main}.}
The regret of Algorithm \ref{alg:episodic} is the excess total cost of the controller compared to that of the best fixed DFC in hindsight. Since the system is strongly stable, we can approximate the instantaneous cost of any controller, which is a function of all the previous states and controls, with a surrogate cost that is a function of only the past $L$ control signals, where $L$ is logarithmic in the horizon $NT$. Bounding the regret then reduces to obtaining low regret on the surrogate costs, which was studied in the online learning with memory (OCOwM) framework \cite{anava2015online}. 

Similar to the proof of the main theorem in \cite{agarwal2019online}, we decompose the regret into three terms: the errors due to approximating the costs of our controller and the best DFC in hindsight with the surrogate cost functions, and online learning of the best DFC given the surrogate costs. We bound these terms separately in Sections \ref{sec:approx_error} and \ref{sec:ocowm_regret}. Crucially, these results rely on universal upper bounds on the states and disturbances, which is the main technical challenge of this work and given in Section \ref{sec:state_bound}.       

\begin{proof} Observe that the control-input matrix $B_t$ in formulation \ref{eq:dynamics_special} is time-invariant, and $\|B_t\| = 1$. Let $A_{s:t, i} = \prod_{k=s}^t A_{k, i}$ denote the product of the dynamics matrices from time $s$ to $t$ in epoch $i$. Define the surrogate state $y_{t, i}$ as the state reached at time $(t, i)$, if $z_{t-L, i} = 0$ and we play the sequence of policies $M_{t-L, i}, \ldots, M_{t-1, i}$. More precisely, let $\mathbbm{1}$ be the indicator function, by expanding the linear dynamics, the surrogate state $y_{t, i}(M_{t-L, i}, \ldots, M_{t-1, i})$ can be written as
\begin{align*}
\sum_{k=1}^{2L}\left(\sum_{j=1}^{L}A_{t-j+1:t-1}BM^{k-j}_{t-j, i}\mathbbm{1}_{k-j\in[1, L]} + A_{t-k+1:t-1} \mathbbm{1}_{k\le L}\right)w_{t-k, i}.
\end{align*}
In the expression above, we take the coefficient of $w_{t-1, i}$ to be $I$, and $A_{t+1:t-1} = I$.
We omit the policies and write $y_{t, i}$ whenever the executed policy is clear. 
Define the surrogate cost as the cost at the surrogate state,
\begin{equation}\label{eq:ideal_cost}
g_{t, i}(M_{t-L, i}, \ldots, M_{t-1, i}) = c_{t, i}(y_{t, i}(M_{t-L, i}, \ldots, M_{t-1, i})).\end{equation}
Let $z_{t, i}(M)$ denote the state reached by playing the policy $M$ across all time steps. Let $\M$ be defined as in Algorithm \ref{alg:episodic}. The regret decomposition can be written as,
\begin{align}
&\sum_{i=1}^N \sum_{t=1}^T c_{t, i}(z_{t, i}) - \min_{M\in \M}\sum_{i=1}^N\sum_{t=1}^T c_{t, i}(z_{t, i}(M)) \\
&\le \sum_{i=1}^N \sum_{t=1}^T c_{t, i}(z_{t, i}) - \sum_{i=1}^N \sum_{t=1}^T g_{t, i}(M_{t-L, i}, \ldots, M_{t-1, i}) \label{eq:2}\\&+ \sum_{i=1}^N \sum_{t=1}^T g_{t, i}(M_{t-L, i}, \ldots, M_{t-1, i}) - \min_{M\in \M}\sum_{i=1}^N \sum_{t=1}^T g_{t, i}(M, \ldots, M)\\
&+ \min_{M\in \M}\sum_{i=1}^N \sum_{t=1}^T g_{t, i}(M, \ldots, M) - \min_{M\in \M}\sum_{i=1}^N \sum_{t=1}^T c_{t, i}(z_{t, i}(M)) \label{eq:3}.
\end{align}
The first and third terms are approximation error terms due to using the surrogate states and costs, and the second term bounds the regret of OCOwM.
Note that if we scale the cost functions by a constant $c$, then the regret also scales with $c$. Define $$\beta_0 = \frac{\gamma(1-\gamma)^2}{10^3\kappa^5 L^2},$$ and assume $f_{t, i}$ satisfies Assumption \ref{assumption:gradient} with $\beta \le \beta_0$. If the assumption is not satisfied, we can scale $f_{t, i}$ by $\beta_0/\beta$, and multiply the obtained meta-regret bound by $\beta/\beta_0$. Define the state magnitude upper bound
$$D = (80\kappa^2La+1)(2\kappa^2 R  + \frac{32\kappa^2 Lab}{\gamma})+ 8\kappa^2Lab,
$$ 
where we take $a= \kappa^3$. With this choice of $a$, it can be shown that if the system is LTI, then the resulting class of disturbance action policies can approximate strongly stable linear policies \cite{hazan2022introduction}.
By Lemma \ref{lem:truncation_bound}, we can bound the approximation error terms as follows,
$$ (\ref{eq:2}) + (\ref{eq:3}) \le 2TND^2\kappa^2(1-\gamma)^{L} + \frac{2\kappa^2D^2}{\gamma}.
$$
By Lemmas \ref{lem:f_Lipschitz}, \ref{lem:f_Gradient}, and Theorem 3.1 in \cite{anava2015online}, the regret for OCOwM satisfies 
\begin{align*}
&\sum_{i=1}^N \sum_{t=1}^T g_{t, i}(M_{t-L, i}, \ldots, M_{t-1, i}) - \min_{M\in \M}\sum_{i=1}^N \sum_{t=1}^T g_{t, i}(M, \ldots, M) \le O\left(\frac{\kappa^5d^{3/2}L^2D^2}{\gamma^2} \sqrt{NT}\right).
\end{align*}
The result follows by summing up the three terms and setting $L = O(\frac{\kappa^2}{\gamma}\log TN)$.
\end{proof}

\subsubsection{Bounding the approximation errors.}\label{sec:approx_error}
The following two lemmas establish an upper bound on the approximation error terms. Lemma \ref{lem:truncation_bound} obtains the result by analyzing the per-iteration error using a gradient upper bound on the control costs, and the exponentially decaying distance between the actual and the surrogate states, shown by Lemma \ref{lem:ideal_states}.
\begin{lemma}\label{lem:truncation_bound}
 Assume the conditions of Theorem \ref{thm:state_bound} are satisfied, and define 
 $$D = (80\kappa^2La+1)(2\kappa^2 R  + \frac{32\kappa^2 Lab}{\gamma})+ 8\kappa^2Lab,
$$ then the approximation error due to using the surrogate states and costs satisfies
\begin{align*}
\sum_{i=1}^N\sum_{t=1}^Tg_{t, i}(M_{t-L, i}, \ldots, M_{t-1, i}) - \sum_{i=1}^N\sum_{t=1}^Tc_{t, i}(z_{t, i}) \le TND^2\kappa^2(1-\gamma)^{L} + \frac{\kappa^2D^2}{\gamma}.
\end{align*}
The same result holds for 
$$\min_{M\in \M}\sum_{i=1}^N \sum_{t=1}^T g_{t, i}(M, \ldots, M) - \min_{M\in \M}\sum_{i=1}^N \sum_{t=1}^T c_{t, i}(z_{t, i}(M)).$$ 
\end{lemma}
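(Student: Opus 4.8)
The plan is to control the total approximation error by a per-step argument that reduces the cost gap to a state gap and then invokes the exponential forgetting of the stable dynamics. First I would rewrite the left-hand side, using $g_{t, i}(M_{t-L, i},\ldots,M_{t-1, i}) = c_{t, i}(y_{t, i})$ from \eqref{eq:ideal_cost}, as $\sum_{i=1}^N\sum_{t=1}^T\left(c_{t, i}(y_{t, i}) - c_{t, i}(z_{t, i})\right)$. Since $c_{t, i}(z) = f_{t, i}(Sz)$ is convex with $\|\nabla_z c_{t, i}(z)\| \le 2\beta\|z\| + b$ by Assumption \ref{assumption:gradient}, and since the universal state bound $\|z_{t, i}\| \le D$ of Theorem \ref{thm:state_bound} confines both trajectories to a ball of radius $O(D)$, each per-step gap obeys $|c_{t, i}(y_{t, i}) - c_{t, i}(z_{t, i})| \le G\,\|y_{t, i} - z_{t, i}\|$ with Lipschitz constant $G = 2\beta D + b = O(D)$ on the segment joining the two states. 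The surrogate states $y_{t, i}$ inherit the same $O(D)$ bound because they are generated by the bounded policies in $\M$ under the same disturbances.

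The second step is to bound the state gap. Both $z_{t, i}$ and the surrogate $y_{t, i}$ evolve under identical dynamics matrices and the identical disturbance sequence over the window $[t-L, t]$; they differ only in their initial condition, since $y_{t-L, i} = 0$ while $z_{t-L, i}$ is the true state. Unrolling the linear recursion therefore cancels the common control and disturbance contributions and leaves the homogeneous identity $z_{t, i} - y_{t, i} = A_{t-L:t-1, i}\, z_{t-L, i}$, which is exactly the content of Lemma \ref{lem:ideal_states}. Sequential (respectively strong) stability gives $\|A_{t-L:t-1, i}\| \le \kappa^2(1-\gamma)^{L}$, so $\|y_{t, i} - z_{t, i}\| \le \kappa^2(1-\gamma)^L\,\|z_{t-L, i}\| \le \kappa^2(1-\gamma)^L D$ for indices lying in the interior of an episode.

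Combining the two steps and summing over the $TN$ interior steps would produce the leading term $TND^2\kappa^2(1-\gamma)^L$, the factor $D^2$ being the product of the Lipschitz constant $O(D)$ and the state gap $O(D)\,\kappa^2(1-\gamma)^L$. The remaining contribution is the boundary correction: for the first $L$ steps of an episode the window $[t-L, t]$ reaches back into the preceding episode and picks up the large reset disturbance $w_{T, i-1}$ of \eqref{eq:reset}, whose magnitude also scales with $D$. The key point is that this reset term is attenuated geometrically by the stable dynamics, so its aggregate effect across all $N$ boundaries collapses into a convergent geometric series $\sum_{k\ge 0}\kappa^2(1-\gamma)^k D^2 = O(\kappa^2 D^2/\gamma)$, which yields the second term $\frac{\kappa^2 D^2}{\gamma}$. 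The identical argument, run with $z_{t, i}(M)$ and its surrogate in place of $z_{t, i}$ and $y_{t, i}$ and using that Theorem \ref{thm:state_bound} bounds the comparator trajectory by the same $D$, delivers the claim for the fixed-policy term.

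I expect the main obstacle to lie not in this bookkeeping but in the two inputs it rests on: because $w_{t, i}$ contains the gradient $\nabla f_{t, i}(x_{t-1, i})$, the disturbance norm is not a priori bounded and instead grows with $\|z_{t, i}\|$, while the reset disturbances are larger still. Consequently the uniform bound $\|z_{t, i}\| \le D$ is not immediate and must first be established through the inductive argument of Theorem \ref{thm:state_bound}; similarly, the clean homogeneous decomposition of Lemma \ref{lem:ideal_states} must be checked to persist across episode boundaries despite the interposed reset. Once those two facts are secured, the residual work is precisely to separate the $(1-\gamma)^L$ truncation error of the bulk from the geometrically summable reset correction, as above.
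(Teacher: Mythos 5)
Your first two steps are sound and coincide with the paper's route: the mean-value/Lipschitz reduction with constant $2\beta D + b \le D$, and the homogeneous identity $z_{t,i} - y_{t,i} = A_{t-L:t-1,i}\,z_{t-L,i}$ with norm bound $\kappa^2(1-\gamma)^L\|z_{t-L,i}\|$, are exactly Lemma \ref{lem:ideal_states} and the first half of the paper's proof. The genuine gap is in your accounting of the second term $\kappa^2 D^2/\gamma$. You attribute it to episode boundaries, claiming that in the first $L$ steps of each episode the window picks up the reset disturbance $w_{T,i-1}$ of \eqref{eq:reset}, and that these corrections ``across all $N$ boundaries collapse into a convergent geometric series.'' This is wrong on both counts. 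First, the reset disturbance enters the true state $z_{t,i}$ and the surrogate state $y_{t,i}$ with identical coefficients: both trajectories are driven by the same disturbance sequence, and the DFC controls $u_{t,i}=\sum_l M^l_{t,i}w_{t-l,i}$ depend only on disturbances, never on states. Hence it cancels exactly in the difference, and by your own homogeneous identity the gap at any step of any episode $i\ge 2$ is $A_{t-L:t-1,i}\,z_{t-L,i}$, where the indexing convention $z_{t,i}=z_{T+t,i-1}$ for $t\le 0$ lets the window reach into the previous episode and Theorem \ref{thm:state_bound} still gives $\|z_{t-L,i}\|\le D$ there (the post-reset spike is already absorbed into $D$ via the $8\kappa^2 L a D_2$ term). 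So these steps obey the same uniform bound $\kappa^2(1-\gamma)^L D$ as interior steps; there is no boundary correction for $i\ge 2$ at all. Second, even granting your premise, $N$ disjoint geometric series do not collapse into one: your accounting would produce $N\kappa^2 D^2/\gamma$, which is not the lemma's bound and would inject an $\Omega(N)$ additive term into the regret of Theorem \ref{thm:main}, breaking it whenever $N \gg T$.

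The actual source of the $\kappa^2 D^2/\gamma$ term is the first episode alone. For $i=1$ and $t\le L$ there is no prior history whatsoever ($w_{t,1}=0$ for $t\le 0$), so the surrogate roll-out spans fewer than $L$ steps and the gap decays only like $\kappa^2(1-\gamma)^{t-1}D$ rather than $\kappa^2(1-\gamma)^{L}D$; summing this single series over $t=1,\ldots,L$ gives $\kappa^2 D^2/\gamma$, while the remaining at most $TN$ steps each contribute $\kappa^2(1-\gamma)^L D^2$. With that repair --- uniform treatment of all steps in episodes $i \ge 2$ via the indexing convention, and a one-time warm-up correction for episode $1$ --- your argument becomes the paper's proof.
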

\begin{proof}
Using the mean value theorem, 
\begin{align*}
|c_{t, i}(z_{t, i})  - g_{t, i}(M_{t-L, i}, \ldots, M_{t-1, i})| &= |c_{t, i}(z_{t, i}) - c_{t, i}(y_{t, i}(M_{t-L, i}, \ldots, M_{t-1, i}))|\\
&\le \|\nabla c_{t, i}(\xi_{t,i})\|\|z_{t, i} - y_{t, i}(M_{t-L, i}, \ldots, M_{t-1, i})\|,
\end{align*}
where $\xi_{t, i} = \lambda z_{t, i} + (1-\lambda)y_{t, i}$ for some $\lambda\in [0, 1]$. Since $\|\nabla c_{t, i}(z_{t, i})\| = \|\nabla f_{t, i}(x_{t, i})\| \le 2\beta\|x_{t, i}\|+b\le 2\beta\|z_{t, i}\|+b$, we have $\|\nabla c_{t, i}(\xi_{t, i})\| \le 2\beta \max\{\|z_{t, i}\|, \|y_{t, i}\|\} + b$. By Theorem \ref{thm:state_bound} , the states are bounded by $D$, and we can conclude the same for the surrogate states by using a similar argument. Moreover, we have 
$
2\beta D+b \le D,
$ and by Lemma \ref{lem:ideal_states}, the difference in one time step satisfies
\begin{align*}
|c_{t, i}(z_{t, i})  - g_{t, i}(M_{t-L, i}, \ldots, M_{t-1, i})| &\le (2\beta D+b)\kappa^2(1-\gamma)^{L}D\le \kappa^2(1-\gamma)^{L}D^2,
\end{align*}
for $i > 1$ or $t\ge L+1$, and 
\begin{align*}
|c_{t, i}(z_{t, i})  - g_{t, i}(M_{t-L, i}, \ldots, M_{t-1, i})| &\le \kappa^2D^2(1-\gamma)^{t-1},
\end{align*}
for $i=1$, $t\le L$. Summing over episodes gives the first inequality. For the second expression, let $M^* \in \argmin \sum_{i=1}^N \sum_{t=1}^T c_{t, i}(z_{t, i}(M)) $, we have
\begin{align*}
    &\min_{M\in \M}\sum_{i=1}^N \sum_{t=1}^T g_{t, i}(M, \ldots, M) - \min_{M\in \M}\sum_{i=1}^N \sum_{t=1}^T c_{t, i}(z_{t, i}(M)) \\ &\le \sum_{i=1}^N \sum_{t=1}^T g_{t, i}(M^*, \ldots, M^*) - \sum_{i=1}^N \sum_{t=1}^T c_{t, i}(z_{t, i}(M^*)), 
\end{align*}
and we can obtain the same result.
\end{proof}

\begin{lemma}\label{lem:ideal_states}
Define 
$D 
$ as in Lemma \ref{lem:truncation_bound},
then under the conditions of Theorem \ref{thm:state_bound}, for all $t, i$ the surrogate states are bounded by $D$. Furthermore, for $i >1$ and $t\ge 1$, or $i=1$ and $t \ge L+1$,
$$
\|z_{t, i} - y_{t, i}(M_{t-L, i}, \ldots, M_{t-1, i})\| \le \kappa^2(1-\gamma)^{L}D,$$ and for $t \le L+1$, 
$$
\|z_{t, 1} - y_{t, 1}(M_{t-L, 1}, \ldots, M_{t-1, 1})\| \le \kappa^2(1-\gamma)^{t-1}D.
$$
\end{lemma}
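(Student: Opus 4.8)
The plan is to exploit the fact that, under the disturbance-feedback parametrization with $K=0$ used in Algorithm~\ref{alg:episodic}, the control $u_{s,i}=\sum_{l=1}^L M_{s,i}^l w_{s-l,i}$ is an explicit function of the (fixed) disturbance sequence and does not depend on the state. Consequently the actual trajectory $z_{\cdot,i}$ and the surrogate trajectory $y_{\cdot,i}$ obey the \emph{same} recursion $\xi_{s+1}=A_{s,i}\xi_s+Bu_{s,i}+w_{s,i}$ with identical controls $u_{s,i}$ and identical disturbances $w_{s,i}$ over the window $s=t-L,\ldots,t-1$; they differ only in their initialization at time $t-L$, namely $z_{t-L,i}$ versus $y_{t-L,i}=0$. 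First I would set $e_s:=z_{s,i}-y_{s,i}$, observe that the shared controls and disturbances (including any reset disturbance in the window) cancel so that $e_{s+1}=A_{s,i}e_s$, and unroll to obtain $z_{t,i}-y_{t,i}=A_{t-L:t-1,i}\,z_{t-L,i}$, where $A_{t-L:t-1,i}=\prod_{k=t-L}^{t-1}A_{k,i}$.

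Next I would bound the propagated initial state using stability. By Assumption~\ref{assumption:sequential_stable} the product over an interval of length $L$ satisfies $\|A_{t-L:t-1,i}\|\le\kappa^2(1-\gamma)^L$, and by Theorem~\ref{thm:state_bound} every state obeys $\|z_{t-L,i}\|\le D$. Combining these gives $\|z_{t,i}-y_{t,i}\|\le\kappa^2(1-\gamma)^L D$, the claimed bound in the steady-state regime ($i>1,\,t\ge1$ or $i=1,\,t\ge L+1$). Here the indexing convention $z_{t,i}=z_{T+t,i-1}$ for $t\le0$ ensures $z_{t-L,i}$ is a genuine and bounded state even when $t\le L$ and $i>1$, and since the large reset disturbance is shared by both trajectories it cancels in $e_s$ and affects only the magnitude of $z_{t-L,i}$, which Theorem~\ref{thm:state_bound} already controls.

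For the boundary case $i=1$, $t\le L+1$, the window $[t-L,t-1]$ reaches back to or past the start of the first episode, where the disturbances vanish ($w_{s,1}=0$ for $s\le0$). I would argue that the surrogate, having started from the zero state, coincides with the actual trajectory except for the injected initialization $z_{1,1}$ at time $1$; propagating this single discrepancy forward gives $z_{t,1}-y_{t,1}=A_{1:t-1,1}z_{1,1}$, and stability over the length-$(t-1)$ interval together with $\|z_{1,1}\|\le D$ (which follows from $\|x_{1,1}\|\le R$ by Assumption~\ref{assumption:reset} and $\kappa\ge1$) yields $\|z_{t,1}-y_{t,1}\|\le\kappa^2(1-\gamma)^{t-1}D$. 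The surrogate-state bound $\|y_{t,i}\|\le D$ then follows by rerunning the inductive argument of Theorem~\ref{thm:state_bound} verbatim: $y_{\cdot,i}$ is a trajectory of the same system under the same disturbances and controls, merely initialized at $0$, so the same state bound applies, and initializing at $0$ can only help.

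I expect the main obstacle to be twofold. First, making airtight the claim that the actual and surrogate controls are identical over the window: this is exactly where the $K=0$ disturbance-feedback structure and the counterfactual convention of replaying the \emph{same} fixed disturbance sequence are essential, and it must be invoked with care because in our formulation the disturbances were themselves generated along the true trajectory. Second, the first-episode bookkeeping requires correctly interpreting what the surrogate is when its window predates the episode and tying the discrepancy cleanly to $z_{1,1}$. Once these are settled, the remaining steps are routine applications of sequential stability and Theorem~\ref{thm:state_bound}.
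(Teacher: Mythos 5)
Your proposal is correct and takes essentially the same approach as the paper: its proof likewise reduces the difference to $z_{t,i}-y_{t,i}=A_{t-L:t-1,i}\,z_{t-L,i}$ (the controls and disturbances cancel by definition of the surrogate state), then applies sequential stability and the bound $\|z_{t-L,i}\|\le D$ from Theorem~\ref{thm:state_bound}, with the same special treatment of the first episode where the window predates time $1$ and the bound improves to $\kappa^2(1-\gamma)^{t-1}D$. The care you devote to the cancellation (via the $K=0$ disturbance-feedback structure) and to the surrogate-state bound via rerunning the induction is exactly what the paper leaves implicit in its terser two-line argument.
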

\begin{proof}
By the definition of the surrogate state, 
$$
\|z_{t, i} - y_{t, i}\|\le \|A_{t-L:t-1, i}\|\|z_{t-L, i}\| \le \kappa^2(1-\gamma)^{L}D$$
 for $i\ge 2$ or $t\ge L+1$ and $i=1$. For $t\le L$, $i=1$,
$$
\|z_{t, i} - y_{t, i}\|\le \|A_{t-L:t-1, i}\|\|z_{t-L, i}\| \le \kappa^2(1-\gamma)^{t-1}D.$$
\end{proof}

\subsubsection{Bounding the OCOwM regret.}\label{sec:ocowm_regret}
The OCOwM framework was studied in \cite{anava2015online}, and we briefly summarize the result for completeness. Consider the following online learning task with memory: at time $t$, the player chooses $x_t\in \mathcal{K} \subset\reals^d$, receives the loss function $g_t:\mathcal{K}^{L} \rightarrow \reals$, and suffers the loss $g_t(x_{t-L+1}, \ldots, x_t)$. Suppose the $g_t$'s have a coordinate-wise Lipschitz property and satisfies
$$
|g_t(x_1, \ldots, x_j, \ldots, x_L) - g_t(x_1, \ldots, \tilde{x}_j, \ldots, x_L)| \le L_g\|x_j - \tilde{x}_j\|.
$$
Let $\tilde{g}_t(x) = g_t(x, \ldots, x)$ be a unary function, and define the gradient upper bound and the diameter of $\mathcal{K}$ as
$$
G_g = \sup_{t\in \{1, \ldots, T\}, x\in \mathcal{K}}\|\nabla \tilde{g}_t(x)\|, \ \ D_{\mathcal{K}} = \sup_{x, y\in \mathcal{K}}\|x - y\|.$$
Then running OGD on the unary losses $\tilde{g}_t$ with learning rate $\eta = \frac{D_{\mathcal{K}}}{\sqrt{G_g(G_g + L_gL^2)T}}$ outputs a sequences of decisions such that
$$
\sum_{t=L}^Tg_t(x_{t-L+1}, \ldots, x_t) - \min_{x\in \mathcal{K}}\sum_{t=L}^Tg_t(x, \ldots, x) \le O\left(D_{\mathcal{K}}\sqrt{G_g(G_g+L_gL^2)T}\right).$$
We proceed to bound the Lipshitz constant $L_g$ of our surrogate losses $g_{t, i}$ in Lemma \ref{lem:f_Lipschitz}, and the gradient upper bound $G_g$ in Lemma \ref{lem:f_Lipschitz}. 
\begin{lemma}\label{lem:f_Lipschitz}
    Define $D$ as in Lemma \ref{lem:truncation_bound}, and assume the conditions in Theorem \ref{thm:state_bound} are satisfied. Consider two non-stationary policies $\{M_{t-L, i}, \ldots, M_{t-k, i}, \ldots, M_{t, i}\}$, and\\ $\{M_{t-L, i}, \ldots, \tilde{M}_{t-k, i}, \ldots, M_{t, i}\}$ which differ in exactly one control, then we have
    \begin{align*}
    &|g_{t, i}(M_{t-L, i}, \ldots, M_{t-k, i}, \ldots, M_{t, i}) - g_{t, i}(M_{t-L, i}, \ldots, \tilde{M}_{t-k, i}, \ldots, M_{t, i})| \\
    &\le \kappa^2(1-\gamma)^kD^2\sum_{l=1}^L\left(\|M_{t-k, i}^l, \tilde{M}_{t-k, i}^l\|\right).
    \end{align*}
\end{lemma}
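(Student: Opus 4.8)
The plan is to exploit the fact that the surrogate state $y_{t, i}$ is an \emph{explicit linear} function of the policy parameters, and then to transfer a bound on the perturbation of $y_{t, i}$ into a bound on the perturbation of the surrogate cost $g_{t, i} = c_{t, i}(y_{t, i})$ through the Lipschitz property of $c_{t, i}$. Since the two policy sequences coincide everywhere except in the control at position $t - k$, the first step is to read off from the closed form of $y_{t, i}(M_{t-L, i}, \ldots, M_{t-1, i})$ precisely those summands that contain $M_{t-k, i}$ --- namely the terms with inner index $j = k$. Writing $\tilde{y}_{t, i}$ for the surrogate state under the perturbed sequence, all other terms cancel and the difference collapses to
\begin{align*}
y_{t, i} - \tilde{y}_{t, i} = \sum_{l=1}^{L} A_{t-k+1:t-1}\, B\, \bigl(M_{t-k, i}^{l} - \tilde{M}_{t-k, i}^{l}\bigr)\, w_{t-k-l, i}.
\end{align*}

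Next I would bound this difference term by term. Sequential stability (Assumption \ref{assumption:sequential_stable}) controls the propagation matrix, $\|A_{t-k+1:t-1}\| \le \kappa^2 (1-\gamma)^{k-1}$, since it is a product over an interval of length $k-1$; combined with $\|B\| = 1$ and the uniform disturbance bound $\|w_{t-k-l, i}\| \le D$ guaranteed by Theorem \ref{thm:state_bound}, this gives
\begin{align*}
\|y_{t, i} - \tilde{y}_{t, i}\| \le \kappa^2 (1-\gamma)^{k-1} D \sum_{l=1}^{L} \bigl\|M_{t-k, i}^{l} - \tilde{M}_{t-k, i}^{l}\bigr\|.
\end{align*}
Applying the mean value theorem to $c_{t, i}$ exactly as in Lemma \ref{lem:truncation_bound}, and using that both surrogate states are bounded by $D$ (Lemma \ref{lem:ideal_states}) so that $\|\nabla c_{t, i}(\xi)\| \le 2\beta D + b \le D$ on the connecting segment, I would multiply the two estimates to conclude
\begin{align*}
\bigl|g_{t, i}(\ldots, M_{t-k, i}, \ldots) - g_{t, i}(\ldots, \tilde{M}_{t-k, i}, \ldots)\bigr| \le \kappa^2 (1-\gamma)^{k-1} D^2 \sum_{l=1}^{L} \bigl\|M_{t-k, i}^{l} - \tilde{M}_{t-k, i}^{l}\bigr\|,
\end{align*}
which is the asserted inequality up to replacing $(1-\gamma)^{k}$ by $(1-\gamma)^{k-1}$; this costs only a constant factor $(1-\gamma)^{-1}$ and has no effect on the final regret, so I would simply absorb it (or, since $\kappa \ge 1$, recover the stated exponent by a marginally more careful accounting).

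The genuinely delicate ingredient is not the linear algebra above but the uniform disturbance bound $\|w_{t, i}\| \le D$ on which every step rests. Because the disturbances carry the gradients $\nabla f_{t, i}(x_{t-1, i})$ and, at episode boundaries, the reset terms, their norms scale with the state norms, so $\|w_{t, i}\|$ and $\|z_{t, i}\|$ are mutually coupled and must be controlled simultaneously; this is exactly the content of Theorem \ref{thm:state_bound}, which I invoke as given. The only remaining care is bookkeeping in the surrogate-state formula: correctly pairing the perturbed control $M_{t-k, i}$ with the propagation matrix $A_{t-k+1:t-1}$ and the disturbances $w_{t-k-l, i}$, so that the geometric decay emerges with the right exponent in $k$ and the sum over $l$ appears on the right-hand side.
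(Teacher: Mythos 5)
Your proposal is correct and follows essentially the same route as the paper's proof: isolate the terms of the linear surrogate-state expansion containing $M_{t-k,i}$, bound the propagation matrix via sequential stability and the disturbances by $D$, then transfer to the cost through the mean value theorem using $\|\nabla c_{t,i}\| \le 2\beta D + b \le D$. The off-by-one in the exponent is only an artifact of which time index you attach to the surrogate state: the paper's proof evaluates $y_{t+1,i}$ (so that the propagation matrix is $A_{t-k+1:t,i}$, a product of $k$ factors, giving exactly $(1-\gamma)^k$), which is precisely the ``more careful accounting'' you anticipated.
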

\begin{proof}
Let $y_{t+1, i}$ denote $y_{t+1, i}(M_{t-L}, \ldots, M_{t-k, i}, \ldots, M_{t, i})$ and $\tilde{y}_{t+1, i}$ denote $y_{t+1, i}(M_{t-L, i},$ $\ldots, \tilde{M}_{t-k, i},\ldots,M_{t, i})$. By definition, 
\begin{align*}
\|y_{t+1, i} - \tilde{y}_{t+1, i}\|&=\|A_{t-k+1:t, i}B\sum_{j=0}^{2L-1}\left(M_{t-k, i}^{j-k} - \tilde{M}_{t-k, i}^{j-k}\right)\mathbbm{1}_{j-k\in [1, L]}w_{t-j, i}\|\\
&\le \kappa^2(1-\gamma)^kD\sum_{l=1}^L\|M_{t-k, i}^l - \tilde{M}_{t-k, i}^l\|,
\end{align*}
because the disturbances are bounded by Lemma \ref{lem:reset_bound}.
Similar to Lemma \ref{lem:truncation_bound}, we have
\begin{align*}
|c_{t, i}(y_{t+1, i}) -c_{t, i}(\tilde{y}_{t+1, i})| &\le \|\nabla c_{t, i}(\xi_{t, i})\|\|y_{t+1, i} - \tilde{y}_{t+1, i}\| \\&\le(2\beta D+b)\kappa^2(1-\gamma)^kD\sum_{l=1}^L\|M_{t-k, i}^l - \tilde{M}_{t-k, i}^l\|\\
&\le \kappa^2(1-\gamma)^kD^2\sum_{l=1}^L\|M_{t-k, i}^l - \tilde{M}_{t-k, i}^l\|.
\end{align*}
\end{proof}

\begin{lemma}\label{lem:f_Gradient}Suppose $M=\{M^l\}_{l=1}^L$ satisfies $\|M^l\|\le a(1-\gamma)^l$. Under conditions of Lemma \ref{lem:f_Lipschitz}, 
$$
\|\nabla_M g_{t, i}(M, \ldots, M)\|_F \le \frac{D^2\kappa^2dL}{\gamma}.$$
\end{lemma}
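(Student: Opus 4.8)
The plan is to differentiate the surrogate cost $\tilde{g}_{t,i}(M) := g_{t,i}(M,\ldots,M) = c_{t,i}(y_{t,i}(M,\ldots,M))$ directly, exploiting the fact---established in the proof of Theorem~\ref{thm:main}---that the surrogate state $y_{t,i}$ is an \emph{affine} function of the controller parameters. In the expansion of $y_{t,i}$, the only $M$-dependent terms have the form $A_{t-j+1:t-1,i}\,B\,M^{k-j}_{t-j,i}\,w_{t-k,i}$ with $k-j\in[1,L]$. Substituting the shared parameter $M$ into every slot and collecting the terms in which the block $M^m$ appears (those with $k-j=m$, i.e. $k=j+m$) shows that the partial map sending $M^m$ to $y_{t,i}$ is the linear operator $X \mapsto \sum_{j=1}^L A_{t-j+1:t-1,i}\,B\,X\,w_{t-j-m,i}$. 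Identifying exactly which blocks enter this expansion, and at which decay rate, is the only real bookkeeping in the argument.

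Next I would apply the chain rule. Since $c_{t,i}(z)=f_{t,i}(Sz)$ is differentiable with $\|\nabla c_{t,i}(z)\| = \|\nabla f_{t,i}(Sz)\|$, the gradient block for $M^m$ is a sum of $L$ rank-one matrices, $\partial \tilde{g}_{t,i}/\partial M^m = \sum_{j=1}^L (A_{t-j+1:t-1,i}B)^\top \nabla c_{t,i}(y_{t,i})\, w_{t-j-m,i}^\top$. Taking Frobenius norms and using (i) sequential stability $\|A_{t-j+1:t-1,i}\| \le \kappa^2(1-\gamma)^{j-1}$ with $\|B\|\le 1$, (ii) the bounded cost gradient $\|\nabla c_{t,i}(y_{t,i})\| \le 2\beta D + b \le D$ (valid since the surrogate states are bounded by $D$, as in Lemma~\ref{lem:ideal_states}), and (iii) the disturbance bound $\|w_{t,i}\|\le D$ from Lemma~\ref{lem:reset_bound}, each rank-one term has Frobenius norm at most $\kappa^2(1-\gamma)^{j-1}D^2$. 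Summing the geometric series then gives $\|\partial \tilde{g}_{t,i}/\partial M^m\|_F \le D^2\kappa^2\sum_{j=1}^L (1-\gamma)^{j-1} \le D^2\kappa^2/\gamma$.

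Finally I would assemble the $L$ blocks: bounding the full Frobenius norm by aggregating the block norms over $m=1,\ldots,L$, and absorbing the dimension factors incurred in passing between the operator-norm estimates and the Frobenius norm, yields $\|\nabla_M \tilde{g}_{t,i}(M)\|_F \le D^2\kappa^2 dL/\gamma$. A slightly cleaner alternative is to invoke Lemma~\ref{lem:f_Lipschitz} as a black box: it certifies that $g_{t,i}$ is $\kappa^2(1-\gamma)^k D^2$-Lipschitz in its $k$-th slot, and since $\tilde{g}_{t,i}$ is the composition of $g_{t,i}$ with the diagonal embedding $M \mapsto (M,\ldots,M)$, its gradient is the sum over slots of the per-slot gradients, whose dual norms are controlled by the same geometric sum $\sum_k \kappa^2(1-\gamma)^k D^2 \le \kappa^2 D^2/\gamma$.

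I do not expect a genuine obstacle here. The delicate part of the overall analysis---that the states and disturbances admit the uniform bound $D$ despite the unbounded, self-referential reset disturbances---is precisely the content of Theorem~\ref{thm:state_bound} and Lemma~\ref{lem:reset_bound}, which I am entitled to assume. What remains is the combinatorial accounting of which parameter block $M^m$ enters the affine expansion of $y_{t,i}$, together with routine norm conversions that produce the stated $d$ and $L$ factors; the $1/\gamma$ factor is the usual signature of summing the geometric decay furnished by sequential stability.
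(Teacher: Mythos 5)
Your proof is correct and takes essentially the same route as the paper: both differentiate the surrogate cost through the affine dependence of $y_{t,i}$ on the parameters, bound $\|\nabla c_{t,i}(y_{t,i})\| \le 2\beta D + b \le D$ and $\|w_{t,i}\| \le D$, and sum the geometric series supplied by sequential stability to get $D^2\kappa^2/\gamma$ per parameter block. The only difference is bookkeeping granularity: the paper bounds scalar partials $\nabla_{M^l_{p,q}}g_{t,i}$ and aggregates over $l,p,q$ (the source of its $dL$ factor), whereas your block-wise rank-one computation in fact yields the slightly sharper bound $\sqrt{L}\,D^2\kappa^2/\gamma$, which immediately implies the stated one.
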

\begin{proof}
 Similar to the proof of Lemma 5.7 in \cite{agarwal2019online}, we derive absolute value bound on $\nabla_{M^l_{p, q}}g_{t, i}(M, \ldots, M)$ for all $l,p,q$.
\begin{align*}
|\nabla_{M^l_{p, q}}g_{t, i}(M, \ldots, M)| &\le \|\nabla c_{t, i}(y_{t, i}(M, \ldots, M))\|\left \|\frac{\partial y_{t, i}(M, \ldots, M)}{\partial M^l_{p, q}}\right \| \\
&\le D \left \|\frac{\partial y_{t, i}(M, \ldots, M)}{\partial M^l_{p, q}}\right \|,
\end{align*}
and the last inequality holds because $\|\nabla c_{t, i}(y_{t, i}(M, \ldots, M))\| \le 2\beta D+b \le D$. Moreover, 
\begin{align*}
\left \|\frac{\partial y_{t, i}(M, \ldots, M)}{\partial M^l_{p, q}}\right \| &\le \sum_{j=l}^{l+L}\left\|\frac{\partial A_{t-j+l+1:t-1}BM^l}{\partial M^l_{p, q}}\right\|\|w_{t-j}\|\le \frac{D\kappa^2}{\gamma}.
\end{align*}

The lemma follows by summing over $l, p, q$.
\end{proof}

\subsubsection{Bounding the states and disturbances.}\label{sec:state_bound}
We give a universal upper bound on the state and disturbance magnitude in the following theorem. The central challenge in proving Theorem \ref{thm:state_bound} is the interdependence of the state and the disturbance, where the state can grow from a large disturbance, and the disturbance can scale with the state. We overcome this challenge by induction on both the episodes and the time steps.

In Lemma \ref{lem:state_recursion}, we analyze the effect of the large reset disturbance. If the state and the reset disturbance are bounded at the beginning of an episode, which is the case for the first episode, then we show by induction that the state magnitude undergoes two phases: the magnitude first rises from the effect of the reset disturbance, and then decays to a constant before the end of the episode. Importantly, this constant is the same as the state bound at the episode start. In Lemma \ref{lem:state_bound_first_epoch}, we derive the initial state bound, and in Lemma \ref{lem:reset_bound}, we compute an upper bound on the reset disturbance, completing the induction over episodes. We begin with the theorem statement below.
\begin{theorem} \label{thm:state_bound}
Suppose the sequence of policies $M_{1, 1}, \ldots, M_{T, N}$ satisfy $\|M_{t, i}^j\|\le a(1-\gamma)^j$ for some constant $a \ge 1$, where $M_{t, i}^j$ is the $j$-th matrix in $M_{t, i}$. By definition, $\|z_{1, i}\| \le 2R$ for all $i$, and define 
$$
D_1 = 2\kappa^2 R  + \frac{32\kappa^2 Lab}{\gamma},\ \ D_2 = 10D_1 + b.$$
Under Assumptions \ref{assumption:gradient}, \ref{assumption:reset}, \ref{assumption:sequential_stable}, if further we have $$\beta\le \frac{\gamma(1-\gamma)^2}{10^3\kappa^2 aL^2},\ \ L\ge \frac{\log(20^2 \kappa^2 La)}{\log(1/(1-\gamma))},\ \ T\ge 10L,\ \ \eta\le 1,$$ then $\|z_{t, 1}\| \le D_1$ for all $t\in [T]$, and for all $i\in[2, N]$, 
$$
\|z_{t, i}\| \le 8\kappa^2La(1-\gamma)^{t-1} D_2 + \frac{D_1}{4}\ \text{for } t\le L,\ \text{and }\|z_{t, i}\| \le D_1\ \text{for } t > L.$$
In particular, all states are bounded above by
$
\|z_{t, i}\|\le 8\kappa^2LaD_2 + D_1.$

\end{theorem}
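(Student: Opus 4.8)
The plan is to establish the three displayed bounds by a nested induction: an outer induction over episodes $i$ and, within each episode, an inner induction over the time index $t$. I would organize it around the three auxiliary statements the text announces --- a first-episode base case (Lemma~\ref{lem:state_bound_first_epoch}), a bound on the reset disturbance (Lemma~\ref{lem:reset_bound}), and a two-phase within-episode recursion (Lemma~\ref{lem:state_recursion}). The obstruction, as flagged before the statement, is a twofold circularity: within an episode Assumption~\ref{assumption:gradient} gives $\|w_{t,i}\| \le 2\beta\|x_{t-1,i}\| + b \le 2\beta\|z_{t-1,i}\| + b$, so a large state feeds a large disturbance which feeds back into the state; and across episodes the reset disturbance \eqref{eq:reset} scales with the terminal states of the preceding episode. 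I would dissolve both using sequential stability (Assumption~\ref{assumption:sequential_stable}) and the smallness of $\beta$.

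First I would unroll the closed-loop dynamics from the episode start, so that $z_{t+1,i}$ becomes an explicit linear combination of $z_{1,i}$ and the disturbances $w_{m,i}$ after substituting $u_{k,i} = \sum_{l=1}^L M_{k,i}^l w_{k-l,i}$. The decisive observation is the coefficient of $w_{m,i}$: its direct-propagation part has norm at most $\kappa^2(1-\gamma)^{t-m}$ by sequential stability, while each indirect path through a control $u_{m+l,i}$ contributes $\kappa^2(1-\gamma)^{t-m-l}\cdot a(1-\gamma)^l = \kappa^2 a(1-\gamma)^{t-m}$, the policy decay $(1-\gamma)^l$ exactly cancelling the $(1-\gamma)^{-l}$ lost to the delay. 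Summing over the $L$ control terms yields a clean coefficient bound $\kappa^2(1+aL)(1-\gamma)^{t-m}$. This is precisely what keeps the $b$-part of the state at order $\frac{\kappa^2 Lab}{\gamma}$ --- one $\gamma^{-1}$ from the geometric sum over $m$, one $L$ from the control window --- matching the form of $D_1$.

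For the first episode every disturbance is in-episode, and assuming inductively $\|z_{s,1}\|\le D_1$ for $s\le t$, the unrolled bound reads $\|z_{t+1,1}\| \le \kappa^2\|z_{1,1}\| + \frac{\kappa^2(1+aL)}{\gamma}(2\beta D_1 + b)$. The self-referential coefficient is $\frac{2\kappa^2(1+aL)\beta}{\gamma}$, which the hypothesis $\beta \le \frac{\gamma(1-\gamma)^2}{10^3\kappa^2 aL^2}$ drives below $\frac{1}{200}$; it is therefore absorbed, while $\kappa^2\|z_{1,1}\| \le 2\kappa^2 R$ and the $b$-term sit inside $D_1$ by construction, closing the inner induction and giving $\|z_{t,1}\|\le D_1$. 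For $i\ge 2$ I would first bound the reset disturbance $w_{T,i-1}$ by estimating the three blocks of \eqref{eq:reset} via the episode-$(i-1)$ bound $D_1$, $\|x_{1,i}\|\le R$, $\eta\le 1$, the Lipschitz gradient bound, and $\|\bar{u}_{T,i-1}\| \le \frac{a}{\gamma}(2\beta D_1 + b)$; the slack in $D_1$ makes every term a controlled multiple of $D_1$, so $\|w_{T,i-1}\| \le 10D_1 + b = D_2$. Then I would rerun the within-episode recursion, now noting that this single large disturbance enters only through the control window, i.e. via the terms $M_{k,i}^k w_{0,i}$ with $w_{0,i}=w_{T,i-1}$ for $k\le\min(t,L)$; its propagated footprint is at most $8\kappa^2 La(1-\gamma)^{t-1}D_2$, which together with the non-reset contribution (kept below $\frac{D_1}{4}$ by the same absorption) produces the two-phase bound.

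The step I expect to be the crux is showing each episode settles back to $\|z_{t,i}\|\le D_1$ for $t>L$, since this is what propagates the outer induction. Here the choice $L \ge \frac{\log(20^2\kappa^2 La)}{\log(1/(1-\gamma))}$ is used to force $8\kappa^2 La(1-\gamma)^{t-1}$ below a small constant once $t>L$, so the decayed reset footprint is only a mild multiple of $D_2$; combined with the absorbed self-reference this keeps $\|z_{t,i}\|$ under $D_1$, after which the first-episode argument applies verbatim. The condition $T\ge 10L$ guarantees the episode is long enough for this settling to finish before the next reset, so the terminal states feeding \eqref{eq:reset} are again at most $D_1$ and the induction advances to episode $i+1$. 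Finally, the uniform bound $\|z_{t,i}\| \le 8\kappa^2 La D_2 + D_1$ is immediate from $(1-\gamma)^{t-1}\le 1$.
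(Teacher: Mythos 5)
Your plan is essentially the paper's own proof: the same three-lemma decomposition (first-episode state bound, reset-disturbance bound, two-phase within-episode recursion), the same nested induction over episodes and over time steps, and the same key estimates --- unrolling the closed-loop dynamics so each disturbance $w_{m,i}$ carries a coefficient of at most $\kappa^2(1+aL)(1-\gamma)^{t-m}$ (sequential stability plus the policy decay cancelling the delay), absorbing the $\beta$-dependent self-referential term via the smallness condition on $\beta$, and using the lower bound on $L$ together with $T\ge 10L$ so that each episode settles back below $D_1$ before the next reset feeds the outer induction. One bookkeeping caveat: to close the first-episode induction you should track the coefficient of $\|z_{1,1}\|$ with a factor-two slack, as the paper does with the hypothesis $2\kappa^2\|z_{1,1}\| + O\bigl(\kappa^2 Lab/\gamma\bigr)$ against the unrolled direct term $\kappa^2(1-\gamma)^{t-1}\|z_{1,1}\|$, rather than absorb $\tfrac{1}{200}D_1$ into the monolithic bound $D_1 = 2\kappa^2 R + 32\kappa^2 Lab/\gamma$ --- after writing $\kappa^2\|z_{1,1}\|\le 2\kappa^2 R$ the $R$-component of $D_1$ has no slack left, so your absorption step does not literally close when $b$ is small.
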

\begin{proof}
The proof follows by combining Lemmas \ref{lem:state_recursion}, \ref{lem:state_bound_first_epoch}, \ref{lem:reset_bound}.
\end{proof}

\begin{lemma}\label{lem:state_recursion}
Suppose $M_{t, i}$ satisfies the condition in Theorem \ref{thm:state_bound} for all $t, i$. Assume that for some $D_1$, the last $2L$ states in the previous epoch has bounded magnitude:$\|z_{t, i-1}\| \le D_1$ for $t\in [T-2L, T]$, and the reset disturbance magnitude in the last epoch, $\|w_{T, i-1}\|$, has upper bound $D_2$ with $D_2 \ge \max\{D_1, 2R\}$. Then, if $\beta \le \frac{\gamma(1-\gamma)^2}{10^3\kappa^2 aL^2}$, $L \ge\frac{\log (32D_2\kappa^2 La/D_1)}{\log(1/(1-\gamma))}$ and $D_1 \ge 32\kappa^2Lab/\gamma$, we have

$$
\|z_{t, i}\| \le 8\kappa^2La(1-\gamma)^{t-1}D_2 + \frac{D_1}{4},\ \ t\le L \text{, and }\|z_{t, i}\| \le D_1,\ \ \ t\ge L.$$ 
\end{lemma}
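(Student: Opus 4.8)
The plan is to prove both bounds simultaneously by strong induction on the time index $t$ within episode $i$, isolating the effect of the large reset disturbance from the ``steady'' contribution of the in-episode gradients. First I would unroll the linear dynamics over episode $i$,
$$ z_{t, i} = A_{1:t-1, i}\, z_{1, i} + \sum_{k=1}^{t-1} A_{k+1:t-1, i}\left( B u_{k, i} + w_{k, i}\right), $$
and substitute the controller $u_{k, i} = \sum_{l=1}^L M_{k, i}^l w_{k-l, i}$. Using Assumption \ref{assumption:sequential_stable}, which bounds the norm of any product of $m$ consecutive dynamics matrices by $\kappa^2(1-\gamma)^m$, together with the policy decay $\|M_{k, i}^l\| \le a(1-\gamma)^l$, every coefficient in this expansion is a product of two geometric factors (one from the dynamics, one from the controller). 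This exhibits $z_{t, i}$ as a weighted sum of three kinds of disturbances: the single large reset disturbance $w_{0, i} = w_{T, i-1}$ with $\|w_{0, i}\| \le D_2$; the previous-episode tail $w_{-j, i} = w_{T-j, i-1}$ for $j \ge 1$, each bounded by $2\beta D_1 + b$ via Assumption \ref{assumption:gradient} and the hypothesis $\|z_{t, i-1}\| \le D_1$; and the in-episode disturbances $w_{k, i}$, $k \ge 1$, satisfying the state-dependent bound $\|w_{k, i}\| \le 2\beta\|z_{k-1, i}\| + b$.

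Next I would separate the transient and steady parts. Since $w_{0, i}$ enters $u_{k, i}$ only through $M_{k, i}^k w_{0, i}$, it influences $z_{t, i}$ only via the at most $L$ controls with $k \le L$, each path carrying a coefficient of norm $\kappa^2 a(1-\gamma)^{t-1-k}(1-\gamma)^{k} = \kappa^2 a(1-\gamma)^{t-1}$; summing gives a reset contribution of at most $L\kappa^2 a(1-\gamma)^{t-1}D_2$. Adding the homogeneous term $\kappa^2(1-\gamma)^{t-1}\|z_{1, i}\| \le \kappa^2(1-\gamma)^{t-1}D_2$ and the previous-episode-tail contribution, which also decays like $(1-\gamma)^{t-1}$ and is damped by $2\beta D_1 + b$, the entire reset-driven piece is bounded by $8\kappa^2 La(1-\gamma)^{t-1}D_2$, the constant $8$ chosen to absorb these lower-order terms and the feedback below.

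The main obstacle is the interdependence between states and disturbances: the in-episode disturbances depend on the very quantities $\|z_{k-1, i}\|$ being bounded, and the reset disturbance is itself comparable in magnitude to the state bound. I would resolve this through the induction. Assuming the claimed bounds for all $s < t$, I split the in-episode disturbance contributions according to whether the referenced state lies in the transient regime (bounded by $8\kappa^2 La(1-\gamma)^{s-1}D_2$) or the steady regime ($\|z_{s, i}\| \le D_1$). The $b$-terms sum, across the direct and control channels, to at most $O(\kappa^2 Lab/\gamma)$, hence $\le D_1/16$ by the hypothesis $D_1 \ge 32\kappa^2 Lab/\gamma$. The $2\beta\|z_{s, i}\|$-terms from steady states contribute at most $O(\kappa^2 La\,\beta D_1/\gamma)$, negligible by $\beta \le \gamma(1-\gamma)^2/(10^3\kappa^2 aL^2)$; the same smallness ensures the feedback of transient states into the disturbances, which carries an extra factor of order $\beta L(1-\gamma)^{-2}$, does not inflate the transient constant past $8$. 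Collecting terms bounds the steady contribution by $D_1/4$.

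Finally I would read off both claims. For $t \le L$, summing the transient and steady parts gives $\|z_{t, i}\| \le 8\kappa^2 La(1-\gamma)^{t-1}D_2 + D_1/4$. For $t \ge L$, the hypothesis $L \ge \log(32 D_2\kappa^2 La/D_1)/\log(1/(1-\gamma))$ forces $8\kappa^2 La(1-\gamma)^{t-1}D_2 \le \tfrac{3}{4}D_1$, so the transient has decayed sufficiently and $\|z_{t, i}\| \le \tfrac{3}{4}D_1 + \tfrac{1}{4}D_1 = D_1$, completing the induction.
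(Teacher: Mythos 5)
Your overall skeleton (unroll the dynamics, split the disturbances into reset / previous-episode tail / in-episode contributions, and run an induction coupling states and disturbances) matches the paper's strategy, but there is a genuine quantitative gap at the step where you absorb the feedback of \emph{transient} states into the inductive bound. When the in-episode disturbances $\|w_{k,i}\| \le 2\beta\|z_{k-1,i}\|+b$ are fed back through the expansion, the transient part of the inductive hypothesis produces a convolution of two geometric decays \emph{at the same rate}:
\begin{equation*}
\sum_{k=0}^{t-2}(1-\gamma)^{k}\cdot(1-\gamma)^{t-3-k} \;=\; (t-1)\,(1-\gamma)^{t-3},
\end{equation*}
so the resulting term is of order $\beta\kappa^4 L^2 a^2\,(t-1)\,(1-\gamma)^{t-3}D_2$ — it carries a factor \emph{linear in $t$}, not the $t$-independent factor ``of order $\beta L(1-\gamma)^{-2}$'' you claim. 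Relative to the claimed transient bound $8\kappa^2La(1-\gamma)^{t-1}D_2$, the ratio grows like $\beta\kappa^2 La\,t\,(1-\gamma)^{-2}$, and since your single induction must run over all $t\in[T]$ with $T\ge 10L$ (and possibly $T\gg L/\gamma$), the stated smallness $\beta \le \gamma(1-\gamma)^2/(10^3\kappa^2aL^2)$ only controls this ratio for $t = O(L/\gamma)$. Beyond that range the inductive step, as you have written it, does not close: one would have to argue separately that $t(1-\gamma)^{t}$ has already decayed enough to be absorbed into the steady part $D_1/4$ (which requires invoking the lower bound on $L$ and properties of $t\mapsto t(1-\gamma)^t$), and no such argument appears in your proposal.

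The paper is structured precisely to avoid this issue, using \emph{two} inductions rather than one. First, it proves the transient bound $\|z_{t,i}\|\le 8\kappa^2La(1-\gamma)^{t-1}D_2 + D_1/4$ only for $t\le 4L$, where the offending convolution has at most $4L$ terms and is therefore bounded by $128\beta\kappa^4L^3a^2(1-\gamma)^{t-3}D_2$, which the condition on $\beta$ absorbs. Then, for $t\ge 4L$, it runs a second induction that unrolls the state only $L$ steps back, $z_{t+1,i} = A_{t+1-L:t,i}z_{t+1-L,i} + (\text{window of }2L\text{ disturbances})$, starting from a state already known to satisfy $\|z_{t+1-L,i}\|\le D_1$; the contraction $\kappa^2(1-\gamma)^L\le 1/4$ (from the lower bound on $L$) plus a short sum of $2L$ bounded disturbances then yields $\|z_{t+1,i}\|\le D_1$ with no long-horizon convolution ever appearing. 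To repair your proof you would either need to adopt this two-phase structure, or add an explicit case analysis over the regimes $t\lesssim L/\gamma$ and $t\gtrsim L/\gamma$ showing the linear-in-$t$ feedback term is dominated in each.
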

\begin{proof}
We will prove by induction that for the first $4L$ time steps,
$$
\|z_{t, i}\| \le 8\kappa^2La(1-\gamma)^{t-1}D_2 + \frac{D_1}{4}.$$
The base case of $t=1$ holds by definition. Assume that the inductive hypothesis holds for all $s\le t-1$, for some $t\le 4L$.
For the new iteration $t$, the state can be written as 
\begin{align*}
z_{t, i} &= A_{1:t-1, i}z_{1, i} + \sum_{k=0}^{t-2+L}\left(\sum_{j=0}^{t-2}A_{t-j:t-1, i}BM^{k-j}_{t-1-j, i}\mathbbm{1}_{k-j\in[1, L]} + A_{t-k:t-1, i}\mathbbm{1}_{k\le t-2}\right)w_{t-1-k, i}.
\end{align*}   
The magnitude an be upper bounded as 
\begin{align*}
\|z_{t, i}\| &\le 2\kappa^2(1-\gamma)^{t-1}R + 2\kappa^2La\sum_{k=0}^{t-2+L}(1-\gamma)^k \|w_{t-1-k, i}\|\\
&\le 2\kappa^2(1-\gamma)^{t-1}R + 2\kappa^2La\sum_{k=0}^{t-2}(1-\gamma)^k \|w_{t-1-k, i}\| +2\kappa^2La (1-\gamma)^{t-1} D_2\\
&+2\kappa^2La\sum_{k=t}^{t-2+L}(1-\gamma)^k \|w_{T+t-1-k, i-1}\| \tag{$\|w_{T, i-1}\| \le D_2$}.
\end{align*}
Observe that the last term can be bounded as
\begin{align*}
2\kappa^2La \sum_{k=t}^{t-2+L}(1-\gamma)^k \|w_{T+t-1-k, i-1}\| &\le 2\kappa^2La \sum_{k=t}^{t-2+L}(1-\gamma)^k (2\beta\|z_{T+t-2-k, i-1}\|+b)\\
&\le 2\kappa^2La \sum_{k=t}^{t-2+L}(1-\gamma)^k (2\beta D_1+b)\\
&\le \frac{2\kappa^2La(1-\gamma)^t}{\gamma}(2\beta D_1+b)\\
&\le \frac{D_1}{16} + \frac{2\kappa^2Lab}{\gamma}.
\end{align*}
The first sum can be bounded using the inductive hypothesis:
\begin{align*}
2\kappa^2La\sum_{k=0}^{t-2}(1-\gamma)^k \|w_{t-1-k, i+1}\| 
&\le 2\kappa^2La\sum_{k=0}^{t-2}(1-\gamma)^k (2\beta\|z_{t-2-k, i+1}\|+b)\\
&\le 4\beta\kappa^2La\sum_{k=0}^{t-2}(1-\gamma)^k (8\kappa^2La(1-\gamma)^{t-3-k}D_2 + \frac{D_1}{4}) + \frac{2\kappa^2 Lab}{\gamma}\\
&=32\beta\kappa^4L^2a^2\sum_{k=0}^{t-2}(1-\gamma)^{t-3}D_2 + \frac{4\beta\kappa^2La}{\gamma}\frac{D_1}{4} + \frac{2\kappa^2 Lab}{\gamma}\\
&\le 128\beta\kappa^4L^3a^2(1-\gamma)^{t-3}D_2 + \frac{\beta\kappa^2La D_1}{\gamma} + \frac{D_1}{16} \tag{$t\le 4L$}\\
&\le \frac{1}{2}\kappa^2La(1-\gamma)^{t-1}D_2 + \frac{D_1}{8},
\end{align*}
where the last step holds because $\beta \le \frac{(1-\gamma)^2}{256\kappa^2L^2a}$ and $\beta \kappa^2La/\gamma\le 1/16$. We also note that at start of the epoch, we have $\|\nabla f_{1, i}(x_{1, i})\| \le \beta\|x_{1, i}\|+b\le \beta\|z_{1, i}\|+b$, so in the expansion above, take $z_{0, i} = z_{1, i}$. Adding everything together, 
\begin{align*}
\|z_{t, i}\| &\le 2\kappa^2(1-\gamma)^{t-1}R + \frac{1}{2}\kappa^2La(1-\gamma)^{t-1}D_2 + \frac{D_1}{8} +2\kappa^2La (1-\gamma)^{t-1} D_2 +\frac{D_1}{16} + \frac{2\kappa^2Lab}{\gamma}\\
&\le 8\kappa^2La(1-\gamma)^{t-1}D_2 + \frac{D_1}{8} +\frac{D_1}{8}\\
&\le 8\kappa^2La(1-\gamma)^{t-1}D_2 + \frac{D_1}{4}.
\end{align*}
We conclude that the inductive hypothesis holds for all $t\le 4L$. We proceed to bound the states for $t \ge 4L$ by again using induction. Since we set 
$$
L \ge\frac{\log (32D_2\kappa^2 La/D_1)}{\log(1/(1-\gamma))},\ \ \ 8\kappa^2La(1-\gamma)^LD_2\le \frac{D_1}{4},$$
hence $\|z_{t, i}\| \le D_1$ for $t\in [L+1, 4L]$. Assume that $\|z_{t, i}\|\le D_1$ for all $s\in [L+1, t]$, for some $t \ge 4L$.
Consider the time step $z_{t+1, i}$, which can be decomposed as 
\begin{align*}
z_{t+1, i} &= A_{t+1-L:t, i}z_{t+1-L, i} + \sum_{k=0}^{2L-1}\left(\sum_{j=0}^{L-1}A_{t-j+1:t, i}BM^{k-j}_{t-j, i}\mathbbm{1}_{k-j\in[1, L]} + A_{t-k+1:t, i}\mathbbm{1}_{k\le L-1}\right)w_{t-k, i}. 
\end{align*}
The state then admits the following upper bound
\begin{align*}
\|z_{t+1, i}\| &\le \kappa^2(1-\gamma)^L \|z_{t+1-L, i}\| + 2aL\kappa^2\sum_{k=0}^{2L-1}(1-\gamma)^k\|w_{t-k, i}\|\\
&\le \kappa^2(1-\gamma)^L \|z_{t+1-L, i}\| + 2aL\kappa^2\sum_{k=0}^{2L-1}(1-\gamma)^k(2\beta\|z_{t-1-k, i}\| + b)\\
&\le \kappa^2(1-\gamma)^L D_1 + 2aL\kappa^2\sum_{k=0}^{2L-1}(1-\gamma)^k(2\beta D_1 + b) \tag{inductive hypothesis}\\
&\le \kappa^2(1-\gamma)^L D_1 + \frac{4\beta aL\kappa^2 D_1}{\gamma} + \frac{2 aL\kappa^2 b}{\gamma}\\
&\le  \frac{D_1}{4} + \frac{D_1}{16} + \frac{D_1 }{16} \le D_1.
\end{align*}
Since this induction step can be applied to any $t\ge 4L$, we conclude that $\|z_{t, i}\|\le D_1$ for all $t\ge 4L$.
\end{proof}

We can use the above lemma to bound the state magnitude by computing the quantities $D_1, D_2$.

\begin{lemma}\label{lem:state_bound_first_epoch}
Suppose $\beta$, $L$, $M_{t, i}$ satisfy the conditions in Lemma \ref{lem:state_recursion}, then for all $t\in[T]$, the states are bounded in the first epoch:
$$\|z_{t, 1}\| \le 2\kappa^2\|z_{1, 1}\| + \frac{32\kappa^2Lab}{\gamma} \le 2\kappa^2R + \frac{32\kappa^2Lab}{\gamma}:= D_1,
$$
and the controls have upper bound 
$
\|u_{t, 1}\| \le \|z_{1, 1}\| + \frac{2ab}{\gamma}.$
\end{lemma}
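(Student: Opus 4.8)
The plan is to mirror the single-epoch expansion used in Lemma \ref{lem:state_recursion}, but to exploit the fact that the first epoch carries no reset disturbance: by the indexing convention $w_{s,1}=0$ for all $s\le 0$, so every term involving a non-positive time index vanishes and no large $D_2$-sized disturbance enters. Concretely, I would first unroll the dynamics \eqref{eq:dynamics_special} from the initial state to obtain the same closed form as in Lemma \ref{lem:state_recursion}, with the outer summation truncated at $k=t-2$:
\[ z_{t,1} = A_{1:t-1,1}z_{1,1} + \sum_{k=0}^{t-2}\Big(\sum_{j=0}^{t-2}A_{t-j:t-1,1}BM_{t-1-j,1}^{k-j}\mathbbm{1}_{k-j\in[1,L]} + A_{t-k:t-1,1}\mathbbm{1}_{k\le t-2}\Big)w_{t-1-k,1}. \]

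Next I would bound operator norms termwise using the three structural facts available: sequential stability (Assumption \ref{assumption:sequential_stable}) gives $\|A_{s:t,1}\|\le\kappa^2(1-\gamma)^{|I|}$, the policy constraint gives $\|M_{t,1}^l\|\le a(1-\gamma)^l$, and $\|B\|\le 1$. Collecting geometric factors (each disturbance coefficient is at most $2\kappa^2 La(1-\gamma)^k$) yields
\[ \|z_{t,1}\| \le \kappa^2(1-\gamma)^{t-1}\|z_{1,1}\| + 2\kappa^2La\sum_{k=0}^{t-2}(1-\gamma)^k\|w_{t-1-k,1}\|. \]
I would then substitute $\|w_{s,1}\|\le 2\beta\|z_{s-1,1}\|+b$, which follows from Assumption \ref{assumption:gradient} since the only nonzero block of $w_{s,1}$ is the delayed gradient $\nabla f(x_{s-1})$. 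This produces a self-referential inequality in which $\|z_{t,1}\|$ is controlled by earlier state norms.

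The crux is to close this self-reference by induction on $t$, proving $\|z_{t,1}\|\le D_1$ for every $t$. The base case is immediate since $D_1\ge 2\kappa^2\|z_{1,1}\|\ge\|z_{1,1}\|$. For the inductive step, inserting $\|z_{s,1}\|\le D_1$ into the geometric sum and using $\sum_k(1-\gamma)^k\le 1/\gamma$ gives
\[ \|z_{t,1}\| \le \kappa^2\|z_{1,1}\| + \frac{4\kappa^2La\beta}{\gamma}D_1 + \frac{2\kappa^2Lab}{\gamma}. \]
Here the assumption $\beta\le\frac{\gamma(1-\gamma)^2}{10^3\kappa^2aL^2}$ forces the self-coupling coefficient $\frac{4\kappa^2La\beta}{\gamma}$ down to a small constant, so the middle term is a negligible fraction of $D_1$; combining this with $\kappa^2\|z_{1,1}\|\le D_1/2$ and $\frac{2\kappa^2Lab}{\gamma}\le D_1/16$ (both immediate from the definition of $D_1$) keeps the right-hand side strictly below $D_1$, closing the induction. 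I expect this smallness argument — verifying that the state/disturbance coupling cannot amplify the bound — to be the main (though routine) obstacle, and the absence of the reset disturbance is precisely what makes it go through cleanly here, in contrast to Lemma \ref{lem:state_recursion}.

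Finally, for the control bound I would plug the established state bound $\|z_{s,1}\|\le D_1$ into $u_{t,1}=\sum_{l=1}^L M_{t,1}^l w_{t-l,1}$ to get $\|u_{t,1}\|\le\frac{a}{\gamma}(2\beta D_1+b)$; re-expanding $D_1$ and once more invoking the smallness of $\beta$ shows $\frac{2a\beta D_1}{\gamma}\le\|z_{1,1}\|+\frac{ab}{\gamma}$, which yields $\|u_{t,1}\|\le\|z_{1,1}\|+\frac{2ab}{\gamma}$ as claimed.
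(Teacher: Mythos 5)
Your overall architecture is exactly the paper's: unroll the first-epoch dynamics (using $w_{s,1}=0$ for $s\le 0$), bound each disturbance coefficient by $2\kappa^2La(1-\gamma)^k$ via sequential stability and the policy-norm constraint, substitute $\|w_{s,1}\|\le 2\beta\|z_{s-1,1}\|+b$, close the resulting self-referential inequality by induction using the smallness of $\beta$, and finally read off the control bound from $u_{t,1}=\sum_{l=1}^L M_{t,1}^l w_{t-l,1}$. The one substantive deviation is that you run the induction on the $R$-dependent constant $D_1$ rather than on the $\|z_{1,1}\|$-dependent quantity $2\kappa^2\|z_{1,1}\|+\tfrac{32\kappa^2Lab}{\gamma}$, and this costs you the lemma's literal claims. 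The state bound you obtain, $\|z_{t,1}\|\le D_1$, is weaker than the stated one whenever $\|z_{1,1}\|\ll R$; more concretely, your final assertion $\tfrac{2a\beta D_1}{\gamma}\le\|z_{1,1}\|+\tfrac{ab}{\gamma}$ is false in that regime: take $b=0$ and $x_{1,1}=0$ with $R$ large, so the right-hand side is $0$ while the left-hand side is $\tfrac{4a\beta\kappa^2R}{\gamma}>0$. Smallness of $\beta$ makes $\tfrac{2a\beta D_1}{\gamma}$ a small multiple of $R$, not of $\|z_{1,1}\|$, so no adjustment of constants bridges this.

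The fix is immediate and is precisely the paper's choice: run your identical induction with hypothesis $\|z_{s,1}\|\le 2\kappa^2\|z_{1,1}\|+\tfrac{8\kappa^2Lab}{\gamma}$. Every step closes the same way, since $\kappa^2(1-\gamma)^{t-1}\|z_{1,1}\|$ is the leading term of your unrolled bound and the $\beta$-coupling term remains a negligible fraction of the inducted quantity. Plugging this $\|z_{1,1}\|$-dependent state bound into the control expression then gives $\|u_{t,1}\|\le \tfrac{4a\beta\kappa^2}{\gamma}\|z_{1,1}\|+\tfrac{16\beta a^2L\kappa^2 b}{\gamma^2}+\tfrac{ab}{\gamma}\le\|z_{1,1}\|+\tfrac{2ab}{\gamma}$ as claimed. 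To be fair, the $R$-dependent bounds you do prove would suffice for every downstream use of this lemma (Theorem \ref{thm:state_bound} and Lemma \ref{lem:reset_bound} immediately coarsen $\|z_{1,1}\|\le R$ anyway), but they are not the inequalities this lemma asserts.
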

\begin{proof}
We first show that the states in the first epoch satisfy $\|z_t\| \le 2\kappa^2\|z_1\| + \frac{8\kappa^2Lab}{\gamma}$
by induction.
The state at time $t$ can be expressed as
\begin{align*}
z_{t, 1} &= A_{1:t-1, 1}z_{1, 1} + \sum_{k=0}^{t-2+L}\left(\sum_{j=0}^{t-2}A_{t-j:t-1, 1}BM^{k-j}_{t-1-j, 1}\mathbbm{1}_{k-j\in[1, L]} + A_{t-k:t-1, 1}\mathbbm{1}_{k\le t-2}\right)w_{t-1-k, 1},
\end{align*}
where $\|w_{t, 1}\| = 0$ for $t \le 0$.
We can bound the state magnitude as 
\begin{align*}
\|z_{t, 1}\| &\le \kappa^2(1-\gamma)^{t-1}\|z_{1, 1}\| + \sum_{k=0}^{t-2+L}(aL+1)\kappa^2(1-\gamma)^k \|w_{t-1-k, 1}\|\\
&\le \kappa^2(1-\gamma)^{t-1}\|z_{1, 1}\| + 2aL\kappa^2\sum_{k=0}^{t-2+L}(1-\gamma)^k (2\beta\|z_{t-2-k, 1}\| + b)
\end{align*}
Now, the base case of our induction clearly holds: $\|z_{1, 1}\| \le 2\kappa^2\|z_{1, 1}\| + \frac{8\kappa^2Lab}{\gamma}.$
Assume that for some $t$, the statement holds for all $z_{s, 1}$, where $s\in[1, t-1]$. Then the following holds for $z_{t, 1}$:
\begin{align*}
\|z_{t, 1}\| &\le \kappa^2(1-\gamma)^{t-1}\|z_{1, 1}\| + 2aL\kappa^2\sum_{k=0}^{t-2+L}(1-\gamma)^k (2\beta(2\kappa^2\|z_{1, 1}\| + \frac{8\kappa^2Lab}{\gamma}) + b)\\
&\le \kappa^2(1-\gamma)^{t-1}\|z_{1, 1}\| + \frac{8aL\kappa^4\beta}{\gamma}\|z_{1, 1}\| +\frac{32aL\kappa^2\beta}{\gamma} \frac{\kappa^2Lab}{\gamma}+ \frac{2aL\kappa^2 b}{\gamma}.
\end{align*}
By the condition on $\beta$, we have
$\frac{8aL\kappa^2\beta}{\gamma} \le \frac{1}{40},
$
and therefore 
\begin{align*}
\|z_{t, 1}\| &\le \kappa^2(1-\gamma)^{t-1}\|z_{1, 1}\| + \frac{\kappa^2}{40}\|z_{1, 1}\| +\frac{1}{10}\frac{\kappa^2Lab}{\gamma}+ \frac{4aL\kappa^2 b}{\gamma}\le 2\kappa^2\|z_{1, 1}\|+ \frac{8aL\kappa^2 b}{\gamma}.
\end{align*}
We conclude that the induction hypothesis holds for all $t$.

Moreover, we can bound the magnitude of the controls by
\begin{align*}
\|u_{t, 1}\| &\le \sum_{l=1}^L \|M_{t, 1}^l\|\|w_{t-l, 1}\|\\ &\le a\sum_{l=1}^L (1-\gamma)^l (2\beta\|z_{t-l-1, 1}\|+b)\\
&\le a\sum_{l=1}^L (1-\gamma)^l (2\beta(2\kappa^2\|z_{1, 1}\|+\frac{8aL\kappa^2 b}{\gamma})+b)\\
&\le \frac{4a\beta\kappa^2}{\gamma}\|z_{1, 1}\| + \frac{16\beta a^2}{\gamma} \frac{L\kappa^2 b}{\gamma}+ \frac{ab}{\gamma}\\
&\le \|z_{1, 1}\| + \frac{2ab}{\gamma}.
\end{align*}
\end{proof}

\begin{lemma}\label{lem:reset_bound}
Suppose $\eta \le 1$, $T\ge 10L$, and the conditions of Lemma \ref{lem:state_bound_first_epoch} are satisfied. Define $D_1$ as in Lemma \ref{lem:state_bound_first_epoch},
then the reset disturbance can be bounded as
$
\|w_{T, i}\| \le 10D_1 + b
$ for all $i$.
\end{lemma}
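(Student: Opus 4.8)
The plan is to bound $\|w_{T,i}\|$ by controlling each of the three blocks of the reset disturbance in \eqref{eq:reset} separately and then summing, using $\|w_{T,i}\| \le \|b_1\| + \|b_2\| + \|b_3\|$ where $b_1,b_2,b_3$ are the three block vectors. The quantities that appear are $x_{1,i+1}$, $x_{T,i}$, $x_{T-1,i}$, the delayed gradient $\nabla f_{T-1,i}(x_{T-1,i})$, the truncated control $\bar u_{T,i}$, and the gradient difference $\nabla f_{T,i}(x_{T-1,i}) - \nabla f_{T,i}(x_{T,i})$. The strategy is to show each of these is $O(D_1)$, using the state-magnitude bound already in force together with the smallness conditions on $\beta$ and the lower bound $D_1 \ge 32\kappa^2 Lab/\gamma$.

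First I would invoke the state bound. Because $T \ge 10L$, the indices $T$ and $T-1$ both lie in the converged regime $t > L$, so Lemma \ref{lem:state_bound_first_epoch} (for $i=1$) together with Lemma \ref{lem:state_recursion} (for $i\ge 2$, using the previous epoch's reset bound $D_2$ supplied by the outer induction in Theorem \ref{thm:state_bound}) gives $\|z_{T,i}\|, \|z_{T-1,i}\| \le D_1$. Reading off the blocks of $z_{T,i} = [x_{T,i};\, x_{T-1,i};\, \nabla f_{T-1,i}(x_{T-1,i})]$ yields $\|x_{T,i}\| \le D_1$, $\|x_{T-1,i}\| \le D_1$, and $\|\nabla f_{T-1,i}(x_{T-1,i})\| \le D_1$. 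Moreover $\|x_{1,i+1}\| \le R \le D_1$ by Assumption \ref{assumption:reset} and $D_1 \ge 2\kappa^2 R$.

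Next I would handle the control and the gradient difference. For the control, $\bar u_{T,i}$ is a subvector of $u_{T,i} = \sum_{l=1}^L M_{T,i}^l w_{T-l,i}$; since $T \ge 10L$ all of $w_{T-1,i},\ldots,w_{T-L,i}$ are ordinary (non-reset) disturbances, each bounded by $2\beta D_1 + b$ via Assumption \ref{assumption:gradient} and the state bound, so $\|\bar u_{T,i}\| \le \|u_{T,i}\| \le a\sum_{l=1}^L (1-\gamma)^l(2\beta D_1 + b) \le \frac{a}{\gamma}(2\beta D_1 + b)$, which the smallness of $\beta$ and $D_1 \ge 32\kappa^2 Lab/\gamma$ render at most a small fraction of $D_1$. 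For the gradient difference I would use Lipschitzness of $\nabla f_{T,i}$ (in the quadratic case $\nabla f_{T,i}(x_{T-1,i}) - \nabla f_{T,i}(x_{T,i}) = H_{T,i}(x_{T-1,i}-x_{T,i})$ with $\|H_{T,i}\|\le\beta$), bounding it by $\beta\|x_{T-1,i}-x_{T,i}\| \le 2\beta D_1$, again negligible. Assembling, $\|b_1\| \le R + (1-\delta)D_1 + \eta D_1 + \|\bar u_{T,i}\| \le 3D_1 + $ lower order, using $\eta \le 1$ and $\delta \in (0,\frac12]$; $\|b_2\| \le R + D_1 \le 2D_1$; and $\|b_3\|$ is negligible. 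These sum comfortably to $\|w_{T,i}\| \le 10 D_1 + b$.

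The genuine difficulty is not this lemma in isolation but the interdependence it sits inside: the reset bound $D_2$ is being expressed in terms of $D_1$, while $D_1$ was itself obtained in Lemma \ref{lem:state_recursion} only after assuming a reset bound on the \emph{previous} epoch. I expect the main care to be in confirming this is a genuine episode-by-episode induction rather than a circular dependence, namely that when bounding $w_{T,i}$ one may legitimately use $\|z_{T,i}\|,\|z_{T-1,i}\| \le D_1$ because those bounds were already established for epoch $i$ from the previous epoch's reset bound. Conditional on the state bounds, the estimate is a routine triangle inequality, and the generous constant $10$ together with the additive $b$ slack absorb all lower-order control and gradient-difference contributions.
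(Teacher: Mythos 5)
Your proposal is correct and follows essentially the same route as the paper: a triangle-inequality decomposition of the three blocks of \eqref{eq:reset}, state bounds $\|z_{T,i}\|,\|z_{T-1,i}\|\le D_1$ from Lemmas \ref{lem:state_bound_first_epoch} and \ref{lem:state_recursion}, a DFC control bound $\|u_{T,i}\|\le \frac{a}{\gamma}(2\beta D_1+b)\le D_1$, and smoothness for the gradient-difference block. The episode-by-episode induction you flag at the end is exactly how the paper resolves the apparent circularity — its proof of this lemma is itself an induction on $i$, invoking Lemma \ref{lem:state_recursion} with $D_2 = 10D_1+b$ in the inductive step — so your reading is the intended one.
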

\begin{proof}
By definition, the reset disturbance satisfies
\begin{align*}
\|w_{T, i}\| &\le \|x_{1, i+1} - ((1-\delta)x_{T, i} - \eta \nabla f_{T-1, i}(x_{T-1, i}) + \bar{u}_{T, i})\| + \|x_{1, i+1} - x_{T, i}\| + \|H_{T, i}(x_{T, i} - x_{T-1, i})\|\\
&\le 2R + \|x_{T, i}\| + \eta\|\nabla f_{T-1, i}(x_{T-1, i})\| + \|u_{T, i}\| + \|x_{T-1, i}\| + \|x_{T, i}\| + \|x_{T-1, i}\|. \tag{$\|H_{T, i}\|\le 1$}
\end{align*}
We will show the lemma by induction on $i$. For the base case of $i = 1$, we have
\begin{align*}
\|w_{T, 1}\| &\le 3R + 4D_1 + \frac{2ab}{\gamma}+\eta\|\nabla f_{T-1,1}(x_{T-1, 1})\| \tag{Lemma \ref{lem:state_bound_first_epoch}}\\
&\le 3R + 4D_1 + \frac{2ab}{\gamma}+ 2\beta\|z_{T-2, 1}\|+ b \tag{$\eta \le 1$}\\
&\le 3R + 4D_1 + \frac{2ab}{\gamma}+ 2\beta D_1+b\\
&\le 10D_1 + b.
\end{align*}
Suppose the inductive hypothesis holds for $i\le n$, then under the conditions of Lemma \ref{lem:state_recursion}, with $D_2 = 10D_1 + b$, the state eventually goes back to be bounded by $D_1$ at time $T-1$ in epoch $n+1$. Therefore,
\begin{align*}
\|w_{T, n+1}\| &\le 2R + 4D_1 + \eta\|\nabla f_{T-1, n+1}(x_{T-1, n+1})\| + \|u_{T, n+1}\|.
\end{align*}
Note that the control magnitude satisfies
\begin{align*}
\|u_{T, n+1}\| &\le a\sum_{l=1}^L (1-\gamma)^l\|w_{T-l, n+1}\|\\
&\le a\sum_{l=1}^L (1-\gamma)^l(2\beta\|z_{T-l-1, n+1}\| +b)\\
&\le \frac{2a\beta D_1}{\gamma} + \frac{ab}{\gamma} \le D_1.
\end{align*}
So we have 
\begin{align*}
\|w_{T, n+1}\| &\le 2R + 5D_1 + 2\beta\|z_{T-1, n+1}\|+b\\
&\le 2R + 5D_1 + 2\beta D_1+b \\
&\le 10D_1 + b.
\end{align*}
We conclude that the claim holds for all $i\in [N]$.
\end{proof}

\subsection{Proof of Theorem \ref{thm:smooth}.}\label{sec:smooth_proof}
The proof of Theorem \ref{thm:smooth} is similar to that of Theorem \ref{thm:main} and uses the same regret decomposition. Given a universal state magnitude upper bound, we can again bound the approximation error terms. For analyzing the expected regret of OCOwM, we can invoke the guarantee for BCO with memory, Theorem \ref{thm:bco_main} in the Appendix.
\begin{proof}
Define the idealized state and idealized cost as in the proof of Theorem \ref{thm:main}. We can again break down the expected meta-regret into three parts,
\begin{align}
&\E\left[\sum_{i=1}^N \sum_{t=1}^T c_{t, i}(z_{t, i})\right] - \min_{M\in \M}\sum_{i=1}^N\sum_{t=1}^T c_{t, i}(z_{t, i}(M)) \nonumber\\
&\le \E\left[\sum_{i=1}^N \sum_{t=1}^T c_{t, i}(z_{t, i}) - \sum_{i=1}^N \sum_{t=1}^T g_{t, i}(\widetilde{M}_{t-L, i}, \ldots, \widetilde{M}_{t-1, i})\right] \label{eq:6}\\&+ \E\left[\sum_{i=1}^N \sum_{t=1}^T g_{t, i}(\widetilde{M}_{t-L, i}, \ldots, \widetilde{M}_{t-1, i})\right] - \min_{M\in \M}\sum_{i=1}^N \sum_{t=1}^T g_{t, i}(M, \ldots, M)\label{eq:8}\\
&+ \min_{M\in \M}\sum_{i=1}^N \sum_{t=1}^T g_{t, i}(M, \ldots, M) - \min_{M\in \M}\sum_{i=1}^N \sum_{t=1}^T c_{t, i}(z_{t, i}(M)) \label{eq:7}.
\end{align}
Similar to the proof of Theorem \ref{thm:main}, assume $\beta \le \beta_0$, and define $D$ to be the same as in the proof. Then the conditions of Theorem \ref{thm:state_bound} is satisfied, and both the states and idealized states are bounded by $D$. In addition, we have $\beta D+b\le D$.
By the same reasoning as in Lemma \ref{lem:truncation_bound}, we have 
\begin{align*}
    \sum_{i=1}^N \sum_{t=1}^T c_{t, i}(z_{t, i}) - \sum_{i=1}^N \sum_{t=1}^T g_{t, i}(\widetilde{M}_{t-L, i}, \ldots, \widetilde{M}_{t-1, i}) &\le (\beta D+b)\left(TN\kappa^2(1-\gamma)^LD + \frac{\kappa^2D}{\gamma}\right) \\
    &\le \kappa^2D^2\left(TN(1-\gamma)^L + \frac{1}{\gamma}\right).
\end{align*}
Let $M^*  = \argmin_{M\in\M}\sum_{i=1}^N \sum_{t=1}^T c_{t, i}(z_{t, i}(M))$ denote the best DAC controller in hindsight,
\begin{align*}
    (\ref{eq:7})\le \sum_{i=1}^N \sum_{t=1}^T g_{t, i}(M^*, \ldots, M^*) - \sum_{i=1}^N \sum_{t=1}^T c_{t, i}(z_{t, i}(M^*)) \le  \kappa^2D^2\left(TN(1-\gamma)^L + \frac{1}{\gamma}\right).
\end{align*}
Observe that since $c_{t, i}$ is Lipschitz and smooth by assumptions on $f_{t, i}$, $g_{t, i}$ is also Lipschitz and smooth. 
Setting $\{\eta^g_{t, i}\}, \delta_g$ properly, we can use Theorem \ref{thm:bco_main} to bound (\ref{eq:8}) as:
$$
(\ref{eq:8}) \le \tilde{O}((NT)^{3/4}),$$ where the logarithmic factors of $N$ and $T$ are due to the dependence on $L = \Theta(\log NT)$.
\end{proof}

\section{Conclusion.}

This manuscript proposes a framework for optimization whose goal is to learn the best optimization algorithm from experience, and gives an algorithmic methodology using feedback control for this meta-optimization problem. In contrast to well-studied connections between optimization and Lyapunov stability, our approach builds upon new techniques from the regret-based online nonstochastic control framework. We derive new efficient algorithms for meta-optimization using recently proposed control methods, and prove regret bounds for them. 


An interesting direction for future investigation is extending the work on nonstochastic control of time varying and nonlinear dynamical systems to our setting of meta-optimization. In particular, it is interesting to explore the implications of low adaptive and dynamic regret in meta-optimization. Another area of investigation is the connection between meta-optimization using control, and adaptive methods: do the solutions proposed by the control approach have connections to adaptive regularization-based algorithms? Many intriguing problems arise in this new intersection of optimization and nonstochastic control.

\bibliographystyle{plainnat}
\bibliography{refs}
\newpage
\appendix

\section{Bandit Perturbation Control}\label{sec:bco}
To enhance our results from meta-optimization of quadratic functions to more general smooth objectives, we encounter a difficulty: the dynamical system formulation of meta optimization now has changing and potentially unknown dynamics. 
To cope with this difficulty, we resort to techniques in online nonstochastic control that were developed to deal with unknown and changing dynamical systems. However, a crucial difference makes our derivation easier and different: in control, the system must be identified to recover the perturbations, or approximations thereof. In meta-optimization, the perturbations are known to the controller! 

The foundation of our algorithm for convex smooth meta-optimization is bandit nonstochastic control. Due to space constraints, in this appendix we give an overview of results in bandit convex optimization with memory, which is the core technique for bandit nonstochastic control. 



The setting of online convex optimization with memory \citep{anava2015online} is identical to that of standard online convex optimization, except for the following crucial difference: the cost functions depend on a history of points, i.e. 
$ f_t(x_t,...,x_{t-H}) $.
In the bandit setting of OCO with memory, called Bandit Convex Optimization (BCO) with memory, the only information available to the learner after each iteration is the loss value itself, a scalar. 

The framework of BCO with memory is used to capture time dependence of the reactive environment. The adversary picks loss functions $f_t$ with bounded memory $H$ of the decision makers' previous predictions.  The goal is to minimize regret, defined as:
\begin{align*}
\mathop{\text{Regret}} = \mathop{\mathbb{E}} \left[\sum_{t=1}^T f_t(x_{t-{H}:t})\right] - \min_{x^{\star} \in \mathcal{K}} \sum_{t=1}^T f_t(x^{\star}, \ldots, x^{\star}),
\end{align*}
where we denote $x_{t-{H}:t} = (x_{t-H}, \ldots, x_t)$ and $x_1, \ldots, x_T$ are the predictions of algorithm $\mathcal{A}$.

\begin{algorithm}[t]
\caption{BCO with Memory}
\label{alg:bco-w-mem}
\begin{algorithmic}[1]
    \State \textbf{Input:} $\mathcal{K}$, $T$, $H$, $\{ \eta_t \}$ and $\delta$.
    \State Initialize $x_{1} = \cdots = x_{H} \in \mathcal{K}_{\delta}$ arbitrarily, sample $u_{1}, \ldots, u_{H} \in_{\mathbf{R}} \mathbb{S}_{1}^{d}$.
    \State Set $y_i = x_i + \delta u_i, g_i = 0$ for $i = 1, \ldots, H$.
    \For {$t = H, \ldots, T$}
        \State Predict $y_t$, and suffer loss $f_t(y_{t-{H}:t})$.
        \State Store $g_t = \frac{d}{\delta}f_t(y_{t-{H}:t}) \sum\limits_{i=1}^{{H}} u_{t-i}$, and set $x_{t+1} = \mathop{\Pi}\limits_{\mathcal{K}_{\delta}} \left[x_{t} - \eta_{t} \; g_{t-H} \right]$.
        \State Sample $u_{t+1} \in_{\text{R}} \mathbb{S}_{1}^d$ , 
 set $y_{t+1} = x_{t+1} + \delta u_{t+1}$.
    \EndFor
\end{algorithmic}
\end{algorithm}

Algorithm \ref{alg:bco-w-mem} for BCO with memory is developed in \cite{gradu2020non}. The main performance guarantee for this algorithm is given in Theorem \ref{thm:bco_main}, proven in \cite{gradu2020non}.
For the settings of Theorem \ref{thm:bco_main}, we assume that the loss functions $f_t$ are convex with respect to $x_{t-{H}:t}$, $G$-Lipschitz, $\beta$-smooth, and bounded. We can assume without loss of generality that the loss functions are bounded by $1$ to simplify our calculations; the regret scales linearly with the function value upper bound.

\begin{theorem}\label{thm:bco_main}
Setting step sizes $\eta_t = \Theta(t^{-3/4} H^{-3/2} d^{-1} D^{2/3} G^{-2/3} \beta^{-1/2})$ and perturbation constant $\delta = \Theta(T^{-1/4} H^{-1/2} D^{1/3} G^{-1/3})$, Algorithm \ref{alg:bco-w-mem} produces a sequence $\{y_t\}_{t=0}^{T}$ that satisfies:
\begin{align*}
\mathop{\text{Regret}} \leq \mathcal{O}\left(T^{3/4} H^{3/2} d D^{4/3} G^{2/3} \beta^{1/2}\right)
\end{align*} 

In particular, $\text{\emph{Regret}} \leq \mathcal{O}\left(T^{3/4}\right)$.
\end{theorem}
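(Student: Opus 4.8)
The plan is to reduce bandit convex optimization with memory to the online-convex-optimization-with-memory (OCOwM) analysis (Theorem 3.1 of \citep{anava2015online}) already used in this paper, replacing exact gradients by the one-point spherical gradient estimator of \citet{flaxman2005online} and paying separately for the induced smoothing bias and the estimator variance. The three moving parts are a smoothed surrogate loss, an unbiasedness identity for $g_t$, and the standard FKM balancing of the perturbation radius against the step size.

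First I would introduce the ball-smoothed losses. On the product domain define
$$\hat f_t(x_{t-H},\ldots,x_t) = \E_{v_{t-H},\ldots,v_t}\bigl[f_t(x_{t-H}+\delta v_{t-H},\ldots,x_t+\delta v_t)\bigr],$$
with each $v_s$ uniform on the unit ball, and let $\tilde f_t(x)=\hat f_t(x,\ldots,x)$ be its unary restriction; both inherit convexity, $G$-Lipschitzness and $\beta$-smoothness from $f_t$. The key identity, obtained by Stokes' theorem exactly as in FKM, is that conditioned on the sigma-algebra that fixes $x_t$, the estimator $g_t=\frac{d}{\delta}f_t(y_{t-H:t})\sum_{i=1}^H u_{t-i}$ built from the sphere-sampled perturbations is unbiased for $\nabla\tilde f_t(x_t)$. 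The $H$-step delay in the update $x_{t+1}=\Pi_{\mathcal K_\delta}(x_t-\eta_t g_{t-H})$ is precisely what makes the perturbations entering $g_t$ fresh — independent of the iterate at which the gradient is evaluated — so that unbiasedness survives the coupling introduced by memory.

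Next I would decompose the expected regret into three pieces: (i) the perturbation cost $\E[\sum_t f_t(y_{t-H:t})] - \E[\sum_t \hat f_t(x_{t-H:t})]$, bounded by $O(\delta G H T)$ using that the played points differ from the smoothing arguments by $\delta u_s$ together with Lipschitzness; (ii) the OCOwM regret of the iterates on the smoothed losses $\hat f_t$; and (iii) the comparator bias $\min_{x^\star}\sum_t \hat f_t(x^\star,\ldots,x^\star)-\min_{x^\star}\sum_t f_t(x^\star,\ldots,x^\star)$, which by $\beta$-smoothness and $\E[v]=0$ is $O(\beta\delta^2 H T)$. For (ii) I would invoke the OCOwM reduction: split off $\hat f_t(x_{t-H:t})-\tilde f_t(x_t)$, a movement cost controlled by the Lipschitz-with-memory constant times $\sum_{i\le H}\|x_t-x_{t-i}\|\le H^2\max_s\eta_s\|g_s\|$, from the unary regret of online gradient descent run on $\tilde f_t$ with the stochastic gradients $g_{t-H}$. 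Because $\|g_t\|\le dH/\delta$ (using $|f_t|\le 1$ and $\|\sum_i u_{t-i}\|\le H$), the unary OGD regret is at most $\frac{D^2}{\eta_T}+\sum_t \eta_t \E\|g_t\|^2 = O\!\bigl(\tfrac{D^2}{\eta}+\eta T d^2H^2/\delta^2\bigr)$, while the movement cost is $O(GH^3 d\,\eta T/\delta)$.

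Finally I would sum the three bounds and optimize. The dominant contributions are the estimator-variance term $\eta T d^2H^2/\delta^2$, the range term $D^2/\eta$, and the smoothing bias; balancing these under the stated choices $\eta_t=\Theta(t^{-3/4}H^{-3/2}d^{-1}D^{2/3}G^{-2/3}\beta^{-1/2})$ and $\delta=\Theta(T^{-1/4}H^{-1/2}D^{1/3}G^{-1/3})$ yields $\E[\mathrm{Regret}]\le O(T^{3/4}H^{3/2}dD^{4/3}G^{2/3}\beta^{1/2})$, with the movement and perturbation terms falling to lower order. The main obstacle is the conditioning argument of the second paragraph: establishing that the delayed, memory-based estimator is unbiased for the smoothed unary gradient requires carefully tracking which perturbations are independent of which iterates, and this is exactly the step where the algorithmic delay and the bounded-memory structure must be used in concert.
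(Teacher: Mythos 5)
First, a point of comparison: the paper itself contains no proof of this theorem --- it is quoted verbatim from \cite{gradu2020non}, with the proof explicitly deferred to that reference. Measured against that source's argument, your plan follows essentially the same architecture: ball-smoothed surrogate losses $\hat{f}_t$ with unary restriction $\tilde{f}_t$, a delayed one-point spherical gradient estimator, a three-way decomposition into perturbation cost $O(\delta G H T)$, comparator smoothing bias $O(\beta \delta^2 H T)$, and an OCO-with-memory reduction splitting a movement cost from a unary (stochastic-gradient) OGD regret, with the estimator norm bound $\|g_t\| \le dH/\delta$ and the FKM-style balancing of $\delta$ against $\eta_t$ producing the stated $T^{3/4} H^{3/2} d D^{4/3} G^{2/3} \beta^{1/2}$ rate. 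Your observation that the $H$-step delay in the update $x_{t+1} = \Pi_{\mathcal{K}_\delta}(x_t - \eta_t g_{t-H})$ is what decouples the perturbations in $g_t$ from the iterates is exactly the mechanism the cited proof relies on.

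One step, however, is stated incorrectly as written. Conditioned on the sigma-algebra generated by $u_1, \ldots, u_{t-H-1}$ (which fixes the entire tuple $x_{t-H}, \ldots, x_t$, precisely because of the delay), the estimator $g_t$ is \emph{not} unbiased for $\nabla \tilde{f}_t(x_t)$: its conditional expectation is the sum of partial gradients $\sum_{j} \nabla_j \hat{f}_t(x_{t-H}, \ldots, x_t)$ evaluated at the \emph{distinct} iterates, whereas $\nabla \tilde{f}_t(x_t) = \sum_j \nabla_j \hat{f}_t(x_t, \ldots, x_t)$ has all arguments collapsed to one point. Passing between the two requires a separate bias term controlled by $\beta$-smoothness and the slow movement of the iterates, of order $\beta H \cdot \max_{i \le H} \|x_t - x_{t-i}\| \le \beta H^2 \eta_t \, dH/\delta$ per step, hence $O(\beta d H^3 D\, \eta T / \delta)$ in total; a similar smoothness correction is needed because the delayed gradient $g_{t-H}$ is applied at $x_t$ rather than $x_{t-H}$. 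Neither term appears in your accounting --- in your version $\beta$ enters only through the $O(\beta\delta^2 H T)$ comparator bias, which does not explain why $\eta_t$ must scale as $\beta^{-1/2}$. The omission is fixable and does not change the final rate (under the prescribed $\eta_t, \delta$ these corrections are dominated by the range term $D^2/\eta_T$, which already equals the stated bound), but exact unbiasedness for the unary gradient is false, and a complete proof must insert this smoothness-drift step where you currently assert the identity.
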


We can apply the BCO with Memory algorithm in the nonstochastic control framework, to control an unknown LTV system in the bandit setting with known perturbations. The approach is to design a disturbance action controller and train it using the algorithm for BCO with memory. The full algorithm and guarantees are derived in \cite{gradu2020non}.

\end{document}